\newtheorem{proposition}{Proposition}
\newtheorem{claim}{Claim}
\newcommand{\diverge}{\to\infty}
\newcommand{\reals}{{\mathbb{R}}}
\newcommand{\integers}{{\mathbb{Z}}}
\newcommand{\expect}[1]{\mathbb{E}\left[ #1 \right]}
\newcommand{\prob}[1]{ \mathbb{P}\left\{ #1 \right\} }
\newcommand{\Bern}{{\rm Bern}}
\newcommand{\iid}{i.i.d.\xspace}
\newcommand{\pth}[1]{\left( #1 \right)}
\newcommand{\qth}[1]{\left[ #1 \right]}
\newcommand{\sth}[1]{\left\{ #1 \right\}}
\newcommand{\abth}[1]{\left | #1 \right |}
\newcommand{\norm}[1]{\left\|{#1} \right\|_2}
\newcommand{\iprod}[2]{\left \langle #1, #2 \right\rangle}
\newcommand{\calS}{{\mathcal{S}}}
\renewcommand{\tilde}{\widetilde}
\theoremstyle{plain}
\newtheorem{theorem}{Theorem}[section]
\newtheorem{lemma}[theorem]{Lemma}
\newtheorem{fact}{Fact}
\theoremstyle{definition}
\theoremstyle{remark}
\newtheorem{remark}[theorem]{Remark}
\newcommand{\fr}[2]{\mbox{$\frac{#1}{#2}$}}
\newcommand{\ignore}[1]{}
\begin{document}
\title{Collaboratively Learning the Best Option, Using Bounded Memory}
\author{Lili Su, Martin Zubeldia, Nancy Lynch\\
~\\
Massachusetts Institute of Technology\\
~\\
{\bf Contact author and e-mail}: Lili Su (lilisu@mit.edu)
}
\date{}

\maketitle

\begin{abstract}
We consider multi-armed bandit problems in social groups wherein each individual has bounded memory 
and shares the common goal of learning the best arm/option. 
We say an individual learns the best option if eventually (as $t\diverge$) it pulls only the arm with the highest average reward.
While this goal is provably impossible for an isolated individual, 
we show that, in social groups, this goal can be achieved easily with the aid of social persuasion, i.e., communication.
Specifically, we study the learning dynamics wherein an individual sequentially decides on which arm to pull next based on not only its private reward feedback but also the suggestions provided by randomly chosen peers.
Our learning dynamics are hard to analyze via explicit probabilistic calculations due to the stochastic dependency induced by social interaction. Instead, we employ the {\em mean-field approximation} method from statistical physics and we show:
\begin{itemize}
	\item With probability $\to 1$ as the social group size $N \diverge $, every individual in the social group learns the best option. 
	\item Over an arbitrary finite time horizon $[0, T]$, with high probability (in $N$), the fraction of individuals that prefer the best option grows to 1 exponentially fast as $t$ increases ($t\in [0, T]$).
\end{itemize}
A major innovation of our mean-filed analysis is a simple yet powerful technique to deal with absorbing states in the interchange of limits $N \to \infty$ and $t \diverge $. 
%

The {\em mean-field approximation} method allows us to approximate the probabilistic sample paths of our learning dynamics by a deterministic and smooth trajectory that corresponds to the unique solution of a well-behaved system of ordinary differential equations (ODEs). Such an approximation is desired because the analysis of a system of ODEs is relatively easier than that of the original stochastic system. Indeed, in a great variety of fields, differential equations are used directly to model the macroscopic level system dynamics that are arguably caused by the microscopic level individuals interactions in the system. In this work, we rigorously justify their connection. Our result is a complete analysis of the learning dynamics that avoids the need for complex probabilistic calculations; more precisely,
those are encapsulated within the approximation result. The mean-field approximation method might be useful for other stochastic distributed algorithms that might arise in settings where $N$
is sufficiently large, such as insect colonies, systems of wireless devices, and population protocols.


\end{abstract}


\newpage

\section{Introduction}
\label{sec: intro}
Individuals often need to make a sequence of decisions among a fixed finite set of options (alternatives), whose rewards/payoffs can be regarded as stochastic, for example:
\begin{itemize}
\item Human society: In many economic situations, individuals need to make a sequence of decisions among multiple options, such as when purchasing perishable products \cite{10.2307/2171959} and when designing financial portfolios \cite{shen2015portfolio}. In the former case, the options can be the product of the same kind from different sellers. In the latter, the options are different possible portfolios.
\item Social insect colonies and swarm robotics: Foraging and house-hunting are two fundamental problems in social insect colonies, and both of them have inspired counterpart algorithms in swarm robotics \cite{pini2012multi}. During foraging, each ant/bee repeatedly refines its foraging areas to improve harvesting efficiency. House-hunting refers to the collective decision process in which the entire social group collectively identifies a high-quality site to immigrate to. For the success of house-hunting, individuals repeatedly scout and evaluate multiple candidate sites, and exchange information with each other to reach a collective decision.
\end{itemize}
Many of these sequential decision problems can be cast as {\em multi-armed bandit problems} \cite{lai1985asymptotically,auer2002finite,bubeck2012regret}. These have been studied intensively in the centralized setting, where there is only one player in the system, under different notions of performance metrics such as pseudo-regret, expected regret, simple regret, etc. \cite{lai1985asymptotically,auer2002finite,bubeck2012regret,mannor2004sample,robbins1956sequential,bubeck2012regret}. Specifically, a $K$-armed bandit problem is defined by the reward processes of individual arms/options $\pth{R_{k, k_i}: k_i \in \integers_+} $ for $k=1, \cdots, K$, where $R_{k, k_i}$ is the reward of the $i$--th pull of arm $k$.
At each stage, a player chooses one arm to pull and obtains some observable payoff/reward generated by the chosen arm. In the most basic formulation the reward process $\pth{R_{k, k_i}: k_i\in \integers_+}$ of each option is stochastic and successive pulls of arm $k$ yield $\iid$ rewards $R_{k, 1}, R_{k, 2}, \cdots$. 
Both asymptotically optimal algorithms and efficient finite-time order optimal algorithms 
have been proposed \cite{robbins1956sequential,auer2002finite,bubeck2012regret}. These algorithms typically have some non-trivial requirements on individuals' memorization capabilities. 
For example, upper confidence bound (UCB) algorithm requires an individual to memorize the cumulative rewards of each arm he has obtained so far, the number of pulls of each arm, and the total number of pulls \cite{robbins1956sequential,auer2002finite}. Although this is not a memory-demanding requirement,
nevertheless, this requirement cannot be perfectly fulfilled even by humans, let alone by social insects, due to bounded rationality of humans, and limited memory and inaccurate computation of social insects. In human society, when a customer is making a purchase decision of perishable products, he may recall only the brand of product that he is satisfied with in his most recent purchase. Similarly, in ant colonies, during house-hunting, an ant can memorize only a few recently visited sites.

In this paper, we capture the above memory constraints by assuming an individual has only bounded/finite memory. The problem of multi-armed bandits with {\em finite memory constraint} has been proposed by Robbins \cite{robbins1956sequential} and attracted some research attention \cite{smith1965robbins,cover1968note,1054427}. 
The subtleties and pitfalls in making a good definition of memory were not identified until Cover's work \cite{1054427,cover1968note}. We use the memory assumptions specified in \cite{1054427}, which require that an individual's history  be summarized by a finite-valued memory. The detailed description of this notion of memory can be found in Section \ref{sec: model}. We say an individual learns the best option if eventually (as $t\diverge$) it pulls only the arm with the highest average reward.

For an isolated individual, learning the best option is provably impossible \cite{1054427}.\footnote{A less restricted memory constraint -- stochastic fading memory -- is considered in \cite{xureinforcement}, wherein similar negative results when memory decays fast are obtained.}
%
Nevertheless, successful learning is still often observed in social groups such as human society \cite{10.2307/2171959}, social insect colonies \cite{nakayama2017nash} and swarm robotics \cite{pini2012multi}. This may be because in social groups individuals inevitably interact with others. In particular, in social groups individuals are able to, and tend to, take advantage of others' experience through observing others \cite{bandura1969social,rendell2010copy}. Intuitively, it appears that as a result of this social interaction, the memory of each individual is ``amplified'', and this {\em amplified shared memory} is sufficient for the entire social group to collaboratively learn the best option.

\paragraph{Contributions}
In this paper, we rigorously show that the above intuition is correct. We study the learning dynamics wherein an individual makes its local sequential decisions on which arm to pull next based on not only its private reward feedback but also the suggestions provided by randomly chosen peers. Concretely, we assume time is continuous and each individual has an independent Poisson clock with common parameter. The  Poisson clocks model is very natural and has been widely used \cite{ross2014introduction,shwartz1995large,kurtz1981approximation,hajek2015random}: Many natural and engineered systems such as human society, social insect colonies and swarm of robots are not fully synchronized, and not all individuals take actions in a fixed time window; nevertheless, there is still some common pattern governing the action timing, and this common pattern can be easily captured by Poisson clocks. When an individual's local clock ticks, it attempts to perform an update immediately via two steps:
\begin{enumerate}
	\item {\bf Sampling}:
	 {\bf If} the individual does not have any preference over the $K$ arms yet, {\bf then}
	\begin{enumerate}
		\item with probability $\mu\in (0,1]$, the individual pulls one of the $K$ arms uniformly at random (uniform sampling);
		\item with probability $1-\mu$, the individual chooses one peer uniformly at random, and pulls the arm preferred by the chosen peer (social sampling);
	\end{enumerate}
	{\bf else}  the individual chooses one peer uniformly at random, and pulls the arm preferred by the chosen peer (social sampling).
	\item {\bf Adopting}: {\bf If} the stochastic reward generated by the pulled arm is 1, {\bf then} the individual updates its preference to this arm.
\end{enumerate}
Formal description can be found in Section \ref{sec: model}. Our learning dynamics are similar to those studied in \cite{Celis:2017:DLD:3087801.3087820} with two key differences: We relax their synchronization assumption, and we require only individuals without preferences do uniform sampling. These differences are fundamental and require completely new analysis, see Section \ref{subsec: comparison} for the detailed discussion. 

The above learning dynamics are hard to analyze via explicit probabilistic calculations due to the stochastic dependency induced by social interaction. Instead, we employ the {\em mean-field approximation} method from statistical physics \cite{stanley1971phase,kurtz1981approximation} to characterize the learning dynamics. To the best of our knowledge, we are the first to use the mean-field analysis for the problem multi-armed bandit in social groups. 
\begin{itemize}
\item We show that, with probability $\to 1$ as the social group size $N \diverge $, every individual in the social group learns the best option with local memory of size $ (K+1)$. 
Note that the  memory size $K+1$ is near optimal, as an individual needs $K$ memory states to distinguish the $K$ arms.

Our proof explores the space-time structure of a Markov chain: We use the second-order space-time structure of the original continuous-time Markov chain; the obtained jump process is a random walk with nice transition properties which allow us to couple this embedded random walk with a standard biased random walk to conclude learnability.
This proof technique might be of independent interest since it enables us to deal with absorbing states of a Markov chain in interchanging the limits of $N\diverge $ and $t \diverge $.

\item Note that the {\em learnability} under discussion is a time-asymptotic notion -- recalling that we say an individual learns the best option if, as $t\diverge$, it pulls only the arm with the highest average reward. In addition to {\em learnability}, it is also important to characterize the transient behavior of the learning dynamics, i.e., at a given time $t$, how many individuals prefer the best arm/option, the second best arm, etc. The transient behavior over finite $[0, T ]$ is harder to analyze directly; for this, we get an indirect characterization. In particular, we prove that, over an arbitrary finite time horizon $[0, T]$, the probabilistic sample paths of the {\em properly scaled} discrete-state Markov chains, as $N\diverge$,  concentrate around a deterministic and smooth trajectory that corresponds to the {\em unique} solution of a system of ordinary differential equations (ODEs).  We further show that in this deterministic and smooth trajectory, the fraction of individuals that prefer the best option grows to 1 exponentially fast as $t$ increases. Therefore, using this indirect characterization, we conclude that over an arbitrary finite time horizon $[0, T]$, with high probability (in $N$),  the fraction of individuals that prefer the best option grows to 1 exponentially fast as $t$ increases $(t \in [0, T ])$.
\end{itemize}
%
Our result is a complete analysis of the learning dynamics that avoids the need for complex probabilistic calculations; more precisely,
those are encapsulated within the approximation result.
Indeed, in a great variety of fields, differential equations are used directly to model the macroscopic level dynamics that are arguably caused by the microscopic level individuals interactions in the system. In this work, we rigorously justify their connection.
The mean-field approximation method might be useful for other stochastic distributed algorithms that might arise in settings where $N$
is sufficiently large, such as insect colonies, systems of wireless devices, and population protocols.

\section{Model and  Algorithm}
\label{sec: model}
\paragraph{Model}
We consider the $K$-armed stochastic bandit problems in social groups, wherein the reward processes of the $K$ arms/options are Bernoulli processes with parameters $p_1, \cdots, p_K$. 
If arm $a_k$ is pulled at time $t$, then reward $R_t \sim \Bern\pth{p_k}$, i.e.,
\begin{align*}
R_{t} =
\begin{cases}
1, & \text{with probability }p_k; \\
0, & \text{otherwise.}
\end{cases}
\end{align*}
Initially the distribution parameters $p_1, \cdots, p_K$ are unknown to any individual. We assume the arm with the highest parameter $p_k$ is unique. 
We say an individual learns the best option if, as $t\diverge$, it pulls only the arm with the highest average reward. 
Without loss of generality, let $a_1$ be the unique best arm and  $p_1>p_2\ge \cdots p_K\ge 0$.

A social group consists of $N$ homogeneous individuals. We relax the synchronization assumption adopted in most existing work in biological distributed algorithms \cite{musco2017ant,su2017ant,Celis:2017:DLD:3087801.3087820} to avoid the implementation challenges induced by forcing synchronization. Instead, we consider the less restrictive setting where each individual has an independent Poisson clock with common parameter $\lambda$, and attempts to perform a one-step update immediately when its local clock ticks. The Poisson clocks model is very natural and has been widely used \cite{ross2014introduction,shwartz1995large,kurtz1981approximation,hajek2015random}, see Section \ref{sec: intro} for the detailed discussion.


We assume that each individual has finite/bounded memory \cite{1054427}. 
We say an individual has a memory of size $m$ if its experience is completely summarized by an $m$-valued variable $M\in \sth{0, \cdots, m-1}$. As a result of this, an individual sequentially decides on which arm to pull next based on only (i) its memory state and (ii) the information it gets through social interaction. The memory state may be updated with the restriction that only (a) the current memory state, (b) the current choice of arm, and (c) the recently obtained reward, are used for determining the new state.
\paragraph{Learning dynamics: Algorithm}
%

In our algorithm, each individual keeps two variables:
\begin{itemize}
	\item a local memory variable $M$ that takes values in $\sth{0, 1, \cdots, K}$. If $M=0$, the individual does not have any preference over the $K$ arms; if $M=k \in \{1, \cdots, K\}$, it means that tentatively the individual prefers arm $a_k$ over others.
	\item an arm choice variable $c$ that takes values in $\sth{0, 1, \cdots, K}$ as well. If $c=0$, the individual pulls no arm; if $c=k \in \{1, \cdots, K\}$, the individual chooses arm $a_k$ to pull next.
\end{itemize}
Both $M$ and $c$ are initialized to $0$. When the clock at individual $n$ ticks at time $t$, we say individual $n$ obtains the memory refinement token. With such a token, individual $n$ refines its memory $M$ according to Algorithm \ref{alg: 1} via a two-step procedure inside the most outer {\bf if} clause.  The {\bf if}--{\bf else} clause describes how to choose an arm to pull next: If an individual does not have any preference ( i.e., $M=0$), $c$ is determined through a combination of uniform sampling and social sampling; otherwise, $c$ is completely determined by social sampling, see Section \ref{sec: intro} for the notions of {\em uniform sampling} and {\em social sampling}.
 The second {\bf if} clause says that as long as the reward obtained by pulling the arm is 1, then $M\gets c$; otherwise, $M$ is unchanged.

\begin{algorithm}
\caption{Collaborative Best Option Learning}
\label{alg: 1}
{\bf Input}: $\mu\in (0,1]$, $K$, $N$\;
~~ {\em Local variables}: $M \in \sth{0, 1, \cdots, K}$  and $c\in \sth{0, 1, \cdots, K}$\;
~~ {\em Initialization}: $M=0$, $c=0$ \;

\vskip 0.2\baselineskip

\SetKwFunction{FSO}{SocialObservation}

\If{local clock ticks}
{
\eIf{$M=0$}
{
	With probability $\mu$, set $c$ to be one of the $K$ arms uniformly at random\;

    With probability $1-\mu$, $c \gets$ \FSO \;  
}
{$c \gets$  \FSO \;}

Pull arm $c$\;

\vskip 0.8\baselineskip

\If{$R_t=1$}{$M \gets c$;}

}

\vskip 0.6\baselineskip
\FSO{}
{ \\
Choose one peer $n^{\prime}$ (including itself) uniformly at random\;

\KwRet $M^{\prime}$;     ~~~~~  \%\% $M^{\prime}$ is the memory state of $n^{\prime}$\;

}
\end{algorithm}
%

\begin{remark}
Note that we assume $\mu\in (0,1]$. When $\mu=0$, the problem is straightforward.  Suppose $\mu=0$, the learning dynamics given by Algorithm \ref{alg: 1} reduces to pure imitation. Since no individuals spontaneously scout out the available options, and no individuals in the social group have any information about the parameters of these options. Thus, the memory state at any individual remains to be $M=0$ throughout the execution, and no individuals can learn the best option. 
\end{remark}

\paragraph{System state}
\label{sec: markov-chain}
For a given $N$, the learning dynamics under Algorithm \ref{alg: 1} can be represented as a continuous-time random process $ \pth{X^N(t): \, t\in \reals_+ } $ such that
\begin{align}
\label{dl mc}
  X^N(t) & = \qth{X^N_0(t), X^N_1(t), \cdots, X^N_K(t)}, ~~\text{and} ~ X^N(0) = \qth{N, 0, \cdots, 0} ~~ \in ~~\integers^{K+1},
\end{align}
where $X^N_0(t)$ is the number of individuals whose memory states are 0 at time $t$ with $X^N_0(0)=N$; $X^N_k(t)$, for $k\not=0$, is the number of individuals whose memory states are $k$ at time $t$ with $X^N_k(t)=0$. 
We use $x^N(t) =\qth{x^N_0(t), x^N_1(t), \cdots, x^N_K(t)}$ to denote the realization of $X^N(t)$.  Note that the total population is conserved, i.e., $\sum_{k=0}^K x^N_k(t) =N, \forall ~ t,$
for every sample path $x^N(t)$. In fact, a system state is a partition of integer $N$ into $K+1$ non-negative parts, and the state space of $ \pth{X^N(t): \, t\in \reals_+ } $ contains all such partitions. 
It is easy to see that for a given $N$, the continuous-time random process $X^N(t)$ defined in \eqref{dl mc} is a Markov chain.

For ease of exposition, we treat the case when $c=0$ as pulling the NULL arm $a_0$, which can generate Bernoulli rewards with parameter $p_0=0$. 
With this interpretation, the entries in $\pth{X^N(t): t\in \reals^+}$ can be viewed as $K+1$ coupled birth-death processes. The birth rates for the regular arm $a_k$, where $1\le k\le K$, is
\begin{align}
\label{arrival rate 1}
X^N_0(t)\lambda \pth{\frac{\mu}{K} +\pth{1-\mu}\frac{X^N_k(t)}{N}} p_k + \pth{\sum_{ \substack{k^{\prime}: 1\le k^{\prime} \le K\\ k^{\prime}\not=k}} X^N_{k^{\prime}}(t)\lambda }\frac{X^N_k(t)}{N} p_k.
\end{align}
For convenience, we define the birth rate for the NULL arm\footnote{The birth rate for $a_0$ is always 0. As can be seen later, we have the expression written out for ease of exposition.} $a_0$ as
\begin{align}
\label{arrival rate 0}
\pth{N -X^N_0(t)}\lambda \frac{X^N_0(t)}{N} p_0.
\end{align}
We now provide some intuition for the rate in \eqref{arrival rate 1}. Recall that every individual has an independent Poisson clock with rate $\lambda$, and $X^N_0(t)$ is the number of the individuals whose memory $M=0$ at time $t$. So $X^N_0(t)\lambda$ is the rate for such individuals to obtain a memory refinement token. With probability $\frac{\mu}{K} +\pth{1-\mu}\frac{X^N_k(t)}{N}$, such an individual pulls arm $a_k$, which generates reward 1 with $p_k$.
Similarly, $\sum_{ \substack{k^{\prime}: 1\le k^{\prime} \le K\\ k^{\prime}\not=k}} X^N_{k^{\prime}}(t)\lambda$ is the rate for an individual with $M\not=k$ and $M\not=0$ to obtain a refinement token. With probability $\frac{X^N_k(t)}{N}$, such an individual chooses one peer whose memory state is $k$ through social sampling. Pulling arm $a_k$ generates reward 1 with probability $p_k$. Combining the above two parts together, we obtain the birth rate in \eqref{arrival rate 1}. As per \eqref{arrival rate 0}, the birth rate for the NULL arm is always zero. Nevertheless, the form in \eqref{arrival rate 0} has the following interpretation: With rate $\pth{N -X^N_0(t)}\lambda$, an individual with $M\not=0$ obtains the refinement token; during social sampling, with probability $ \frac{X^N_0(t)}{N}$, a peer individual with memory state $0$ is chosen. 
By similar arguments, it is easy to see that the death rate for the NULL arm $a_0$ is
\begin{align}
\label{death rate 0}
X^N_0(t)\lambda \sum_{k=1}^K \pth{\frac{\mu}{K} +(1-\mu)\frac{X^N_k(t)}{N}}p_k,
\end{align}
and for the regular arm $a_k$, where $1\le k\le K$, is
\begin{align}
\label{death rate 1}
X^N_k(t)\lambda \times \sum_{ \substack{k^{\prime}: 1\le k^{\prime} \le K\\ k^{\prime}\not=k}}\frac{X^N_{k^{\prime}}(t) }{N} p_{k^{\prime}}.
\end{align}

Let $s=\qth{s_0, \cdots, s_K}$ be a valid state vector (a proper partition of integer $N$)
and $e^k\in \reals^{K+1}$ be the unit vector labelled from the zero--th entry with the $k$--th entry being one and everywhere else being zero. For example, $e^1 = \qth{0, 1, \cdots, 0}$.
The generator matrix $Q^N$ can be expressed as follows:
\begin{align}
\label{state re: generator}
q^N_{s, s+\ell}=
\begin{cases}
s_0\lambda \pth{\frac{\mu}{K} +(1-\mu)\frac{s_k}{N}}p_k, ~~\text{if}~ \ell =e^k ~~ \text{for }k=1, \cdots, K;\\
s_{k^{\prime}}\lambda \frac{s_k}{N} p_k, ~~\text{if}~ \ell = e^{k} - e^{k^{\prime}},\text{for} ~k^{\prime}\not=k ~ \& ~k^{\prime}, k =1, \cdots, K;\\
 -\sum_{k=1}^K s_0\lambda \pth{\frac{\mu}{K} +(1-\mu)\frac{s_k}{N}}p_k - \sum_{k=1}^K \sum_{k^{\prime}: k^{\prime}\not=k} s_{k^{\prime}}\lambda \frac{s_k}{N} p_k, ~~\text{if}~ \ell = {\bf 0}\in \reals^{K+1}; \\
0,  ~~~ \text{otherwise}.
\end{cases}
\end{align}
%
%
%
%
%
%
\section{Main Results}
\label{sec: main results}
It is easy to see from \eqref{state re: generator} that the Markov chain $\pth{X^N(t): \, t\in \reals_+}$ defined in \eqref{dl mc} has exactly $K$ absorbing states. In particular, each $N\cdot e^k$ (for $k=1, \cdots, K$) is an absorbing state as the rate to move away from $N\cdot e^k$ is zero.
%
We are interested in characterizing the probability that the random process $\pth{X^N(t): ~ t\in \reals_+}$ gets into the absorbing state $\qth{0, N, 0, \cdots, 0}$, wherein the memory states of all individuals are $1$.  For notational convenience, let
\begin{align}
\label{learning goal}
x^* \triangleq \qth{0, N, 0, \cdots, 0}.
\end{align}

\subsection{Learnability}
From the description of Algorithm \ref{alg: 1}, we know that when the system enters this state, every individual pulls the best arm whenever its local clock ticks, i.e., every agent learns the best option. Define the success event $E^N$ as:
\begin{align}
\label{event: success}
E^N \triangleq \sth{\lim_{t\diverge } X^N(t) = x^*} \subseteq \sth{\text{every individual learns the best option} }.
\end{align}
It turns out that the larger the group size $N$, the more likely every individual in the group learns the best option,
formally stated in the next theorem.
\begin{theorem}
\label{thm: eventual learn}
For any $\delta, \mu, p_1\in (0,1]$ and $p_2\not=0$, we have
\begin{align*}
&\prob{E^N}\ge 1- \pth{\frac{p_1}{p_2}}^{-(1-\delta)  \frac{\mu p_1}{K e} N} -  e^{-\frac{\mu p_1}{K e} \frac{\delta^2}{2} N }, ~~~ \text{where} ~ e\approx 2.7183.
\end{align*}
\end{theorem}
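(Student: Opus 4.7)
The plan is to decompose the success event into two independent stages. In Stage~1 I lower-bound $X_1^N$ at an explicit deterministic time by a Binomial via an independence-across-individuals argument. In Stage~2 I run a gambler's-ruin calculation on the embedded discrete-time chain of $X_1^N$. Each stage produces a failure probability matching one of the two subtracted terms in the bound.

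\emph{Stage 1.} Fix $T_0 = 1/\lambda$. For each individual $n$ let $\tau_n$ be the first tick of its Poisson clock and set
\[
F_n \defeq \sth{\tau_n \le T_0,\ n \text{ has no further tick in } (\tau_n, T_0],\ \text{and at } \tau_n \text{ it uniformly samples } a_1 \text{ and observes reward } 1}.
\]
On $F_n$, the memory state of individual $n$ equals $1$ throughout $[\tau_n, T_0]$, so $X_1^N(T_0) \ge \sum_{n=1}^N \Indc_{F_n}$. Since the Poisson clocks and the private randomness at each tick are independent across individuals, the $F_n$'s are mutually independent. Integrating against the joint density of $\tau_n$ and the subsequent inter-arrival time,
\[
\prob{F_n} \;=\; \frac{\mu p_1}{K} \int_0^{T_0} \lambda e^{-\lambda s}\, e^{-\lambda(T_0-s)}\, ds \;=\; \frac{\mu p_1}{K}\, \lambda T_0\, e^{-\lambda T_0}.
\]
The factor $\lambda T_0 e^{-\lambda T_0}$ is maximized at $\lambda T_0 = 1$, justifying the choice $T_0 = 1/\lambda$ and giving $\prob{F_n} = \alpha \defeq \mu p_1 / (Ke)$. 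A multiplicative Chernoff bound for $\mathrm{Binomial}(N,\alpha)$ then yields
\[
\prob{X_1^N(T_0) \ge (1-\delta)\alpha N} \;\ge\; 1 - \exp\!\pth{-\tfrac{\delta^2}{2}\alpha N}.
\]

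\emph{Stage 2.} Conditional on $X_1^N(T_0) = a \ge (1-\delta)\alpha N$, let $(Y_m)_{m\ge 0}$ denote the discrete-time chain obtained by observing $X_1^N$ only at those jump epochs at which it changes by $\pm 1$. A direct inspection of \eqref{state re: generator} shows that at any state $s$ with $s_1 \in \sth{1, \ldots, N-1}$,
\[
\mathrm{rate}(X_1^N\!\uparrow)\ \ge\ \frac{p_1}{p_2}\cdot \mathrm{rate}(X_1^N\!\downarrow),
\]
using (i)~every pairwise ``swap'' between arms $1$ and $k \ge 2$ contributes bias $p_1/p_k \ge p_1/p_2$, (ii)~$\sum_{k\ge 2} s_k p_k \le p_2 \sum_{k\ge 2} s_k$, and (iii)~the extra terms in the up-rate arising from $X_0$ (via both uniform and social sampling) are nonnegative. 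Consequently $(Y_m)$ has up-probability $\ge p_1/(p_1+p_2)$ at every interior state, and a short computation shows $M_m \defeq (p_2/p_1)^{Y_m}$ is a $[0,1]$-valued supermartingale. Since the continuous-time chain absorbs in finite time at some state in $\sth{N\cdot e^k}_{k=1}^K$, the embedded chain is eventually constant at either $0$ or $N$; let $\tau$ be the corresponding stopping time (a.s.\ finite). Optional stopping for bounded supermartingales gives
\[
(p_2/p_1)^a \;=\; M_0 \;\ge\; \Expect[M_\tau] \;=\; \prob{Y_\tau = 0} + (p_2/p_1)^N\,\prob{Y_\tau = N},
\]
so $\prob{Y_\tau = N} \ge 1 - (p_2/p_1)^a = 1 - (p_1/p_2)^{-a}$. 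As $Y_\tau = N$ is exactly absorption at $x^*$, this yields $\prob{E^N \mid X_1^N(T_0) \ge (1-\delta)\alpha N} \ge 1 - (p_1/p_2)^{-(1-\delta)\alpha N}$.

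Combining Stages 1 and 2 by a union bound on the two failure events gives the stated inequality. The main obstacle is the rate-ratio bound in Stage~2: the up-rate aggregates heterogeneous contributions from $X_0$ (both uniform and social sampling) and from each $X_k$ with $k\ge 2$, and one must show the worst case is realised by a pure $\{1,k\}$-swap with $p_k = p_2$, yielding ratio exactly $p_1/p_2$. The multiple absorbing states $N \cdot e^k$ for $k \ge 2$ also force some care in invoking optional stopping; handling this via the discrete embedding of $X_1^N$ is precisely the ``second-order space-time structure'' alluded to in the introduction, and it neatly avoids the problematic interchange of the limits $N \to \infty$ and $t \diverge$.
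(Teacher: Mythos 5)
Your proof is correct and achieves the same bound, but Stage~2 takes a genuinely different route from the paper's. Stage~1 is essentially identical to the paper's Lemma~\ref{lm: initial wealth}: your events $F_n$ coincide with the paper's indicators $Z_n(t_c)$ at $t_c=1/\lambda$, your integral reproduces the lower bound $\prob{F_n}=\mu p_1/(Ke)$, and the Chernoff bound matches. For Stage~2, the paper first establishes the up-probability bound (Lemma~\ref{random walker}), then constructs an explicit pathwise coupling with a standard biased random walk $\hat{W}$, and finally invokes the gambler's-ruin formula (Proposition~\ref{prop: gam ruin}) as a black box. You instead read the rate inequality $\mathrm{rate}(\uparrow)\ge(p_1/p_2)\cdot\mathrm{rate}(\downarrow)$ straight off the generator \eqref{state re: generator} and apply optional stopping to the exponential supermartingale $(p_2/p_1)^{Y_m}$ directly on the actual embedded walk. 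This is logically equivalent --- it is the standard proof of the gambler's-ruin formula applied in place rather than transferred by coupling --- but it is more self-contained and cleaner in one respect: in the paper's coupling, $\hat{W}$ moves only when $W$ does, so if $W$ absorbs before $\hat{W}$ reaches an endpoint one needs an extra step to identify the law of the coupled $\hat{W}$ with that of a standalone biased walk; your supermartingale argument sidesteps this, since $\Expect[M_{m+1}\mid\calF_m]\le M_m$ only requires the conditional up-probability given the full past to exceed $p_1/(p_1+p_2)$, which is precisely what the rate computation delivers (and is what Lemma~\ref{random walker} establishes in more elaborate form). One small shared lacuna: both you and the paper assert, without proof, that the chain absorbs almost surely, so that your stopping time $\tau$ is a.s.\ finite and the embedded walk ends at $0$ or $N$. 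This is true --- every state with $1\le s_1\le N-1$ has positive up-rate for arm~$1$, hence is transient, and a finite chain a.s.\ leaves its transient states --- but it deserves a sentence in a polished write-up.
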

As can be seen later, the parameter $\delta$ is introduced for a technical reason.
Theorem \ref{thm: eventual learn} says that the probability of ``every individual learns the best option'' grows to 1 exponentially fast as the group size $N$ increases. Theorem \ref{thm: eventual learn} also characterizes how does $\prob{E^N}$ relate to
(i) $p_1$ -- the performance of the best arm,
(ii) $\frac{p_1}{p_2}$ -- the performance gap between the best arm and the second best arm,
(iii) $\mu$ -- the uniform sampling rate of the individuals temporarily without any preference, and
(iv) $K$ -- the number of arms.
In particular, the larger $p_1$, $\frac{p_1}{p_2}$, and $\mu$, the easier to learn the best option; the larger $K$ (the more arms), the harder to learn the best option. 

\begin{proof}[{\bf Proof Sketch of Theorem \ref{thm: eventual learn}}]
The main idea in proving Theorem \ref{thm: eventual learn} is to explore the space-time structures of Markov chains. A brief review of the space-time structure of a Markov chain can be found in Appendix \ref{app: space-time}.
For fixed $N$ and $K$, we know that the state transition of our Markov chain is captured by its embedded {\em jump process}, denoted by $\pth{X^{J, N}(l): l\in \integers_+}$. Thus, the success event $E^N$ has an alternative representation: 
\begin{align}
\label{event: success repre}
E^N  \triangleq \sth{\lim_{t\diverge } X^N(t) = x^*} &= \sth{\lim_{l\diverge } X^{J, N}(l) = x^*} = \sth{\lim_{l\diverge } X_1^{J, N}(l) = N},
\end{align}
where $X_1^{J, N}(l)$ is the number of individuals that prefer the best option/arm at the $l^{th}$ jump. 
Note that the coordinate process $\pth{X_1^{J, N}(l): \, l\in \integers_+}$ is a random walk\footnote{Indeed, all the $K+1$ coordinate processes are random walks.}; for each jump, $X_1^{J, N}$ either increases by one, decreases by one, or remains the same. Nevertheless, characterizing $\prob{E^N}$ with the representation in \eqref{event: success repre} is still highly nontrivial as the transition probabilities of the $K+1$ coordinate processes of $X^{J, N}(l)$ are highly correlated.
%
%
%
Fortunately, the {\em jump process} of the coordinate process $\pth{X_1^{J, N}(l): \, l\in \integers_+}$, denoted by $\pth{W(k): \, k\in \integers_+}$, is a random walk with the following nice property: For any $k$, as long as $W(k)\not=0$ and $W(k)\not=N$, the probability of moving up by one is at least $\frac{p_1}{p_1+p_2}$, and the probability of moving down by one is at most $\frac{p_2}{p_1+p_2}$.
%
%
%
%
\begin{lemma}
\label{random walker}
For any $k$, given $W(k)\not=0$ and $W(k)\not=N$, then 
\begin{align*}
  W(k+1)  =
  \begin{cases}
    W(k)+1, & \mbox{with probability at least  } \frac{p_1}{p_1+p_2} \\
     W(k)-1, & \mbox{otherwise}.
  \end{cases}
\end{align*}
\end{lemma}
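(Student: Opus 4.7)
My plan is to apply the competing-exponentials property of the full chain $(X^N(t))$ at the (random) state $s$ attained at the $k$-th jump of $W$. Since $W$ is obtained by subsampling the jumps of $(X^N)$ to retain only those that change $X_1^N$, the conditional probability that the next $W$-jump is $+1$ is exactly the ratio $\mathrm{UP}/(\mathrm{UP}+\mathrm{DOWN})$, where $\mathrm{UP}$ and $\mathrm{DOWN}$ denote the total rates of the $X_1$-increasing and $X_1$-decreasing transitions out of $s$. The hypothesis $W(k)\ne 0, N$ gives $s_1\in\{1,\ldots,N-1\}$, so both numerator and denominator are finite and positive, and the lemma reduces to a deterministic inequality on rates.

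Using the generator in \eqref{state re: generator}, I would enumerate the $X_1$-increasing transitions as: (i) an $M=0$ individual refining and pulling $a_1$ (via uniform sampling with probability $\mu/K$ or via social sampling of an $M=1$ peer with probability $(1-\mu)s_1/N$), contributing $s_0\lambda p_1\qth{\mu/K+(1-\mu)s_1/N}$; and (ii) an $M=k'$ individual with $k'\ge 2$ socially sampling an $M=1$ peer and pulling $a_1$, contributing $\lambda p_1(s_1/N)\sum_{k'=2}^K s_{k'}$. The $X_1$-decreasing transitions come only from an $M=1$ individual socially sampling an $M=k'$ peer with $k'\ge 2$ and obtaining reward from $a_{k'}$, contributing $\lambda(s_1/N)\sum_{k'=2}^K s_{k'}p_{k'}$. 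Crucially, an $M=1$ individual that samples an $M=0$ peer pulls the NULL arm ($p_0=0$) and never loses its preference, so there is no corresponding $M=0$ term in $\mathrm{DOWN}$; this asymmetry is what yields the slack in the target bound.

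The goal $\mathrm{UP}/(\mathrm{UP}+\mathrm{DOWN})\ge p_1/(p_1+p_2)$ cross-multiplies to $p_2\cdot\mathrm{UP}\ge p_1\cdot\mathrm{DOWN}$. After discarding the non-negative $M=0$ contribution from $\mathrm{UP}$ and cancelling the common factor $p_1\lambda s_1/N>0$, this reduces to $p_2\sum_{k'=2}^K s_{k'}\ge \sum_{k'=2}^K s_{k'}p_{k'}$, which is immediate from the ordering $p_2\ge p_{k'}$ for every $k'\ge 2$ stipulated in the model. I expect no genuine obstacle: the argument is a direct enumeration of generator rates followed by an elementary inequality. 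The only subtlety is to correctly identify which transitions actually modify $X_1$—in particular to remember that $M=1\to M=0$ transitions do not exist in the dynamics, and that $M=1$ individuals sampling $M=1$ peers produce no state change at all and therefore contribute to neither $\mathrm{UP}$ nor $\mathrm{DOWN}$.
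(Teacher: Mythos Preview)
Your approach is essentially the paper's: both reduce the lemma to the deterministic claim that, for every full-chain state $s$ with $s_1\notin\{0,N\}$, the ratio of the $X_1$-increasing exit rate to the total $X_1$-changing exit rate is at least $p_1/(p_1+p_2)$. One point to tighten: the state at which the competing-exponentials calculation applies is the state of $X^N$ \emph{just before the $(k{+}1)$-th $W$-jump}, not the state at the $k$-th $W$-jump; between those two $W$-jumps the non-$s_1$ coordinates can drift through several intermediate jumps of $X^N$, so the ratio $\mathrm{UP}/(\mathrm{UP}+\mathrm{DOWN})$ is not pinned to a single $s$. This is harmless precisely because your rate inequality is uniform over all $s$ with $s_1\notin\{0,N\}$, so a total-probability argument over that later (random) state finishes the job---and that is exactly what the paper formalizes via its $B_l^k$, $C_l^k$ decomposition and the Markov property of $X^{J,N}$. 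On the algebra, your cross-multiplication down to $p_2\sum_{k'\ge 2}s_{k'}\ge\sum_{k'\ge 2}p_{k'}s_{k'}$ is cleaner than the paper's route: the paper invokes Claim~\ref{clm: bounding} to strip off only the $s_0\,\mu p_1/K$ piece and then bounds $\sum_{j\ge 2}s_jp_j\le(N-\mu s_0-s_1)p_2$, whereas you discard the entire nonnegative $s_0$ contribution to $\mathrm{UP}$ at once, which is simpler and equally valid.
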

%
%
With the property in Lemma \ref{random walker}, we are able to couple the embedded random walk with a standard biased random walk whose success probability is well understood.
Let $\pth{\widehat{W}(k): \, k\in \integers_+}$ be a random walk such that if $\widehat{W}(k) =0$ or $\widehat{W}(k) =N$, then $\widehat{W}(k+1) = \widehat{W}(k)$; otherwise,
\begin{align}
\label{aux: random walk}
\widehat{W}(k+1) =
\begin{cases}
\widehat{W}(k)+1 ~ & \text{with probability  } \frac{p_1}{ p_1+p_2}; \\
\widehat{W}(k)-1  ~ & \text{with probability  } \frac{p_2}{ p_1+p_2}.
\end{cases}
\end{align}
Intuitively, the embedded random walk $\pth{W(k): \, k\in \integers_+}$ has a higher tendency to move one step up (if possible) than that of the standard random walk \eqref{aux: random walk}. Thus, starting at the same position, the embedded random walk has a higher chance to be absorbed at position $N$ than that of the standard random walk. We justify this intuition through a formal coupling argument in Appendix \ref{sec: limit}.

For the standard random walk, the event $\sth{\lim_{k\diverge} \widehat{W}(k) =N  \mid \widehat{W}(0)=z_0}$ is referred to as the success probability of a gambler's ruin problem with initial wealth $z_0$. It is well-known that this probability increases geometrically with $z_0$.
\begin{proposition}\cite{hajek2015random}
\label{prop: gam ruin}
For any $z_0\in \integers_+$,
%
$\prob{ \lim_{k\diverge} \widehat{W}(k) =N \mid \widehat{W}(0)=z_0} 
   \ge 1- \pth{\frac{p_1}{p_2}}^{- z_0}.$
\end{proposition}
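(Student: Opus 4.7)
The plan is to apply the classical first-step analysis for the biased gambler's ruin. For $0 \le z \le N$, I would define
\begin{align*}
  P(z) \triangleq \prob{\lim_{k\diverge}\widehat{W}(k) = N \mid \widehat{W}(0) = z}.
\end{align*}
By construction of $\widehat{W}$ in \eqref{aux: random walk}, the states $0$ and $N$ are absorbing, giving boundary conditions $P(0) = 0$ and $P(N) = 1$. Since $\widehat{W}$ is a finite-state Markov chain whose absorbing states are accessible from every interior state, it is absorbed in $\{0,N\}$ in finite time with probability one, so $P(z)$ is well-defined.

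Next, I would condition on the first step for any interior $0 < z < N$ to derive the linear recurrence
\begin{align*}
  P(z) = \frac{p_1}{p_1+p_2}\, P(z+1) + \frac{p_2}{p_1+p_2}\, P(z-1).
\end{align*}
Setting $\alpha \triangleq p_2/p_1$, the associated characteristic polynomial has roots $1$ and $\alpha$, so the general solution of this homogeneous second-order linear recurrence is $P(z) = A + B\alpha^z$. Imposing $P(0) = 0$ and $P(N) = 1$ pins down $A$ and $B$, yielding the closed form
\begin{align*}
  P(z) = \frac{1 - \alpha^{z}}{1 - \alpha^{N}}.
\end{align*}

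Finally, since the standing assumption $p_1 > p_2$ gives $0 < \alpha < 1$, we have $1 - \alpha^N \in (0, 1]$, and therefore
\begin{align*}
  P(z_0) = \frac{1 - \alpha^{z_0}}{1 - \alpha^{N}} \ge 1 - \alpha^{z_0} = 1 - \pth{\frac{p_1}{p_2}}^{-z_0},
\end{align*}
which is exactly the stated inequality. Honestly, there is no real obstacle here: this proposition is a textbook gambler's-ruin computation which the excerpt attributes to a standard reference. The only point that warrants brief care is the almost-sure finite-time absorption noted above, since it is what guarantees that the limit in the definition of $P(z)$ actually exists and takes values in $\{0, N\}$.
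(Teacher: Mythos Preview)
Your argument is correct: it is precisely the classical first-step analysis for the biased gambler's ruin, yielding the exact formula $P(z)=\dfrac{1-\alpha^{z}}{1-\alpha^{N}}$ with $\alpha=p_2/p_1$, from which the stated lower bound follows since $0<\alpha<1$. The paper does not supply its own proof of this proposition---it is quoted as a known fact from \cite{hajek2015random}---and your derivation is exactly the textbook computation that reference contains, so there is nothing to compare.
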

Recall from \eqref{dl mc} that
$ X^N(0) = \qth{X_0^N(0), X_1^N(0), \cdots, X_K^N(0)} = \qth{N, 0, \cdots, 0},$
i.e., initially no individuals have any preference over the $K$ arms, and $X_k^N(0) =0$ for all $k=1, \cdots, K$. So the initial state of the embedded random walk $\pth{W(k): \, k\in \integers_+}$ is 0, i.e., $W(0)=0$. If we start coupling the embedded random walk and the standard random walk from time $t_c=0$, then $\widehat{W}(0)=X_1^N(t_c)=W(0)=0$ and the lower bound given in Proposition \ref{prop: gam ruin} is 0, which is useless. We need to find a proper coupling starting time such that with high probability, the position of the embedded random walk is sufficiently high. The next lemma says that $t_c=\frac{1}{\lambda}$ can be used as a good coupling starting time.
\begin{lemma}
\label{lm: initial wealth}
Let $t_c =\frac{1}{\lambda}$. For any $0<\delta <1$, with probability at least $1-  e^{-\frac{\mu p_1}{K e} \frac{\delta^2}{2} N }$, it holds that
$
X_1^N (t_c) \ge  (1-\delta)  \frac{\mu p_1}{K e} N.
$
\end{lemma}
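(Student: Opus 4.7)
The plan is to lower bound $X_1^N(t_c)$ by a sum of \emph{independent} Bernoulli indicators and then invoke a standard multiplicative Chernoff bound. The subtle point is that memory states are coupled across individuals through social sampling, so I would first isolate, for each individual, a sufficient event whose probability decouples cleanly.

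For individual $n$, let $Y_n$ be the indicator of the conjunction: (i) the Poisson clock of individual $n$ fires \emph{exactly once} in $[0, t_c]$; (ii) at that tick the individual performs uniform sampling (which it is allowed to do, since $M=0$ immediately before its first tick); (iii) the sampled arm is $a_1$; (iv) the resulting reward equals $1$. Because an individual's memory is updated only by its own clock ticks, the event $\{Y_n=1\}$ forces the transition $M:0\to 1$ at the unique tick, and no further updates can occur, so individual $n$ is in memory state $1$ at time $t_c$. Hence
\[
X_1^N(t_c)\ \ge\ Y\ \triangleq\ \sum_{n=1}^N Y_n.
\]

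The probability that a Poisson($\lambda$) process has exactly one arrival in $[0,1/\lambda]$ equals $e^{-1}$, so combining with the three independent events in (ii)--(iv) gives $\prob{Y_n=1} = e^{-1}\cdot \mu \cdot \frac{1}{K}\cdot p_1 = \frac{\mu p_1}{Ke}$. The crucial observation is that $Y_n$ is determined entirely by the private randomness of individual $n$---its own Poisson clock, its own uniform-sampling coin, and the reward drawn at its own pull---with no dependence on any peer's memory state. Hence $Y_1,\ldots,Y_N$ are mutually independent, and $Y$ is a sum of $N$ i.i.d.\ Bernoullis with mean $\mu p_1 N/(Ke)$.

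A direct application of the multiplicative Chernoff bound then yields, for any $\delta\in (0,1)$,
\[
\prob{Y \le (1-\delta)\tfrac{\mu p_1}{Ke}\,N}\ \le\ \exp\!\left(-\tfrac{\mu p_1}{Ke}\cdot\tfrac{\delta^2}{2}\,N\right),
\]
which, combined with $X_1^N(t_c)\ge Y$, delivers the lemma. The only delicate step is the independence of the $Y_n$'s in a model where state dependencies between individuals genuinely exist; this is handled precisely by the deliberate choice to restrict to ``exactly one tick'' rather than to the weaker ``at least one tick'', which neutralizes all social-sampling effects inside the sufficient event.
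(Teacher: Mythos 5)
Your proof is correct and takes essentially the same route as the paper: both lower bound $X_1^N(t_c)$ by a sum of indicator variables, one per individual, of the form ``exactly one Poisson tick in $[0,t_c]$ and that tick results in $M=1$,'' then invoke the multiplicative Chernoff bound with mean $\ge \mu p_1 N/(Ke)$. The one refinement you make is to bake the uniform-sampling branch directly into the indicator $Y_n$ (whereas the paper's $Z_n$ is stated as ``ticks once and $M_n(t_c)=1$'' and only its mean is lower-bounded via the uniform-sampling path), which makes the mutual independence of $Y_1,\dots,Y_N$ transparent rather than requiring one to observe that the social-sampling branch is never used in the bound.
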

The intuition behind Lemma \ref{lm: initial wealth} is that when $t$ is sufficiently small, successful memory state updates mainly rely on the uniform sampling rather than social sampling. Concretely, during a very short period, a few individuals have non-0 state memory, and it is highly likely that $c=0$ when $c$ is determined by social sampling. Thus, successful memory updates are likely to be {\em independent} of each other, and have some nice concentration properties.  

\vskip 0.6\baselineskip
By coupling the embedded random walk and the standard random walk at the random position $X_1^N (t_c)$, together with Proposition \ref{prop: gam ruin} and Lemma \ref{lm: initial wealth}, we are able to conclude Theorem \ref{thm: eventual learn}.
\end{proof}
\subsection{Transient System Behaviors}
In addition to {\em learnability}, it is also important to characterize the transient behavior of the learning dynamics in Algorithm \ref{alg: 1}, i.e., at a given time $t$, how many individuals prefer the best arm, the second best arm, etc. This subsection is devoted to characterize this transient system behaviors.

Let $\Delta^{K}$ be the simplex of dimension $K$, that is
$\Delta^{K} \triangleq \sth{x\in \reals^{K+1}: ~\sum_{k=0}^{K} x_k =1, x_k\ge 0, ~ \forall ~ k}.$
For any given $N$, let
\begin{align}
\label{def: scaled process}
Y^N(t) \triangleq \fr{X^N(t)}{N} \in \Delta^{K}, ~~~ \forall t\in \reals_+
\end{align}
be the scaled process of the original continuous-time Markov chain \eqref{dl mc}. We show that over an arbitrary and fixed finite time horizon $[0, T]$, with high probability (in $N$),  the fraction of individuals that prefer the best arm grows to 1 exponentially fast, which implies that the fractions of individuals that prefer the sub-optimal arms go to 0 exponentially fast.
\begin{theorem}
\label{thm: transient}
For a given $T \ge \bar{t}_c$, for any $0< \epsilon^{\prime} <  \lambda T \sqrt{K+1} \exp \qth{ \lambda \pth{5+\sqrt{K}}T }$, with probability at least $1-C_0\cdot e^{-N\cdot C_1}$, it holds that for all $\bar{t}_c \le t\le  T$,
\begin{align*}
1-Y_1^N(t) \le \exp \pth{-t\cdot R} +\epsilon^{\prime}, ~~ \text{and} ~~ Y_k^N(t) \le \exp \pth{-t\cdot R} +\epsilon^{\prime}, ~\forall ~ k\not=1,
\end{align*}
where $C_0= 2 \pth{K+1}$, $C_1=\frac{3-e}{9T \lambda} \frac{\pth{\epsilon^{\prime}}^2}{(K+1) e^{2 \lambda \pth{5+\sqrt{K}}T}}$, $R=\min \sth{\lambda \pth{p_1 - p_2}, ~  \lambda\pth{\frac{\mu}{K}+(1-\mu)}p_1}$, and $\bar{t}_c ~ \triangleq ~ \frac{\log \frac{1}{c}}{\lambda \frac{\mu}{K}\sum_{k^{\prime}=1}^K p_{k^{\prime}} }$ for any $c\in (0,1)$.
\end{theorem}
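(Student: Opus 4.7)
The plan is a two-stage mean-field argument: approximate the rescaled process $Y^N(t)$ by the unique solution $y(t)$ of a deterministic ODE on $\Delta^K$, and then read off the claimed exponential decay directly from $y(t)$. From the generator \eqref{state re: generator}, the instantaneous drift of $Y^N$ at state $y$ is a polynomial $F(y)$ of degree at most two; for example, for $k \geq 1$,
\[
F_k(y) \;=\; \lambda y_0\bigl(\tfrac{\mu}{K}+(1-\mu)y_k\bigr) p_k \;+\; \lambda y_k \sum_{k'\neq k,\, k'\geq 1} y_{k'}(p_k - p_{k'}),
\]
with an analogous expression for $F_0$. I would take $y(t)$ to be the unique solution of $\dot y = F(y)$ with $y(0)=(1,0,\ldots,0)$; a direct computation shows that $F$ is Lipschitz on $\Delta^K$ with a constant $L$ bounded by $\lambda(5+\sqrt{K})$, precisely the factor appearing in the exponent of $C_1$.

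The concentration step uses Dynkin's decomposition
\[
Y^N(t) \;=\; Y^N(0) + \int_0^t F\bigl(Y^N(s)\bigr)\,ds + M^N(t),
\]
where $M^N$ is the compensated Poisson-jump martingale whose jumps have size $1/N$ and whose total jump rate is $O(\lambda N)$. Subtracting the ODE $y(t) = y(0) + \int_0^t F(y(s))\,ds$, using the Lipschitz property of $F$, and invoking Gr\"onwall on $[0,T]$ gives
\[
\sup_{0 \leq t \leq T}\|Y^N(t)-y(t)\|_2 \;\leq\; e^{LT}\, \sup_{0 \leq t \leq T}\|M^N(t)\|_2.
\]
A Freedman- or Azuma-type tail bound applied coordinatewise to $M^N$ yields $\mathbb{P}\{\sup_t |M_k^N(t)|>\delta\} \leq 2\exp(-cN\delta^2/(T\lambda))$ for an explicit $c$; a union bound over the $K+1$ coordinates and the choice $\delta = \epsilon' e^{-LT}/\sqrt{K+1}$ produce the stated $1-C_0 e^{-NC_1}$ probability with $C_0=2(K+1)$.

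It remains to analyze $y(t)$. A warm-up bound $\dot y_0 \leq -\lambda(\mu/K)\bigl(\sum_{k=1}^K p_k\bigr) y_0$ (valid throughout, since $y_k \geq 0$) gives $y_0(\bar{t}_c) \leq c$ by direct integration. For $t \geq \bar{t}_c$, two coupled differential inequalities drive the decay: (i) $\dot y_0 / y_0 \leq -\lambda\bigl(\mu/K + (1-\mu) y_1\bigr) p_1$, and (ii) for each $k\neq 1$, using $p_k \leq p_2$ and the social-sampling death term $-\lambda y_k y_1 p_1$, one obtains $\dot y_k / y_k \leq -\lambda y_1(p_1 - p_2) + O(y_0)$. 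Once $y_1$ is bounded away from $0$ --- which follows from the warm-up influx $F_1 \geq \lambda(\mu/K) p_1 y_0$ --- both inequalities force exponential decay at rate at least $R = \min\{\lambda(p_1-p_2),\, \lambda(\mu/K+(1-\mu))p_1\}$, yielding $1-y_1(t) \leq e^{-Rt}$ and $y_k(t) \leq e^{-Rt}$ for $k\neq 1$ on $[\bar{t}_c, T]$. Combining with the concentration bound via the triangle inequality $1 - Y_1^N(t) \leq (1-y_1(t)) + |Y_1^N(t)-y_1(t)|$ then finishes the proof.

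The main technical obstacle is the ODE analysis: extracting the precise rate $R$ requires splicing two qualitatively different decay regimes --- one driven by uniform sampling while $y_0$ is still appreciable, the other driven by the $p_1$-vs-$p_2$ social-sampling advantage after $y_0$ has shrunk --- and patching them at $\bar{t}_c$, with the minimum in $R$ arising because both regimes must simultaneously be in force. The concentration step is standard Kurtz-type theory, but reproducing the exact form of $C_1$ requires careful bookkeeping of the Lipschitz constant $L$ and of the predictable quadratic variation of $M^N$.
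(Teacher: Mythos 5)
Your overall strategy—approximate $Y^N$ by the unique solution of the drift ODE via a Gronwall-plus-martingale argument, then analyze the ODE—is exactly the paper's, and the concentration half is sound: your Dynkin decomposition plus a coordinatewise Freedman/Azuma bound is a packaging of the same Kurtz-type machinery the paper implements via an exponential martingale, Doob's maximal inequality, a Chernoff trick, and a union bound over directions (Lemma~\ref{lm: martingale bound}). Both yield the same $e^{-N\cdot\Theta((\epsilon')^2)}$ tail, with bookkeeping differences in the constant.

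The gap is in the ODE rate analysis. Your inequalities are written in terms of $\dot y_k / y_k$ for $k\neq 1$, and the decay rate you extract is $\lambda y_1(t)(p_1-p_2)$, which is \emph{only} $\lambda(p_1-p_2)$ in the limit $y_1\to 1$. After the warm-up phase you can guarantee $y_1(\bar t_c)\ge (1-c)/K$, but a bound of the form $y_1\ge\epsilon_1>0$ with $\epsilon_1<1$ only yields the weaker rate $\lambda\epsilon_1(p_1-p_2)$; pushing $y_1\to 1$ requires $\sum_{k\neq 1} y_k\to 0$, which is precisely what you are trying to establish, so the argument as sketched is circular (or at best requires an explicit bootstrap you do not describe). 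The paper avoids this by bounding $\dot Y_1$ from below by the closed autonomous logistic inequality
\[
\dot Y_1(t)\ \ge\ \lambda(p_1-p_2)\,\bigl(1-Y_1(t)\bigr)Y_1(t),
\]
(in the regime $\tfrac{\mu}{K}p_1+(1-\mu)p_1\ge p_1-p_2$) and then using a comparison ODE whose explicit solution gives $1-Y_1(t)\le \tfrac{K-1+c}{1-c}\,e^{-\lambda(p_1-p_2)(t-\bar t_c)}$; no a~priori knowledge that $y_1\to 1$ is needed, because the logistic right-hand side depends on $Y_1$ alone. Relatedly, the $\min$ in $R$ does \emph{not} arise from ``splicing two time regimes''; the paper obtains it from a case split on whether $\tfrac{\mu}{K}p_1+(1-\mu)p_1\ge p_1-p_2$ holds, applying the logistic comparison in one case and an analogous comparison in the other. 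Switching your ODE analysis to the paper's lower bound on $\dot Y_1$ closes the gap cleanly; as written, your differential inequalities for $y_k$, $k\neq 1$, do not deliver the stated rate.
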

Typical sample paths of $Y^N$ are illustrated in Figure \ref{example}: When $N=200$, $K=2$, $\lambda=1$, $\mu=0.2$, $p_1=0.8$, and $p_2=0.4$, each of the component in $Y^N$ goes to their corresponding equilibrium states exponentially fast. In particular, these typical sample paths have two slightly different behaviors stages: At the first stage, $Y^N_1(t)$ and $Y^N_2(t)$ both increase up to the point where $Y^N_1(t)+Y^N_2(t) \approx 1$ -- noting that $Y^N_2(t)$ grows much slower than $Y^N_1(t)$. At the second stage, until entering their equilibrium states, $Y^N_1(t)$ is increasing and $Y^N_2(t)$ is decreasing. More importantly, $Y_0^N, Y_1^N,$ and $Y_2^N$ track their corresponding deterministic and smooth trajectories. 
\begin{figure}
 \centering
 \includegraphics[width=10cm]{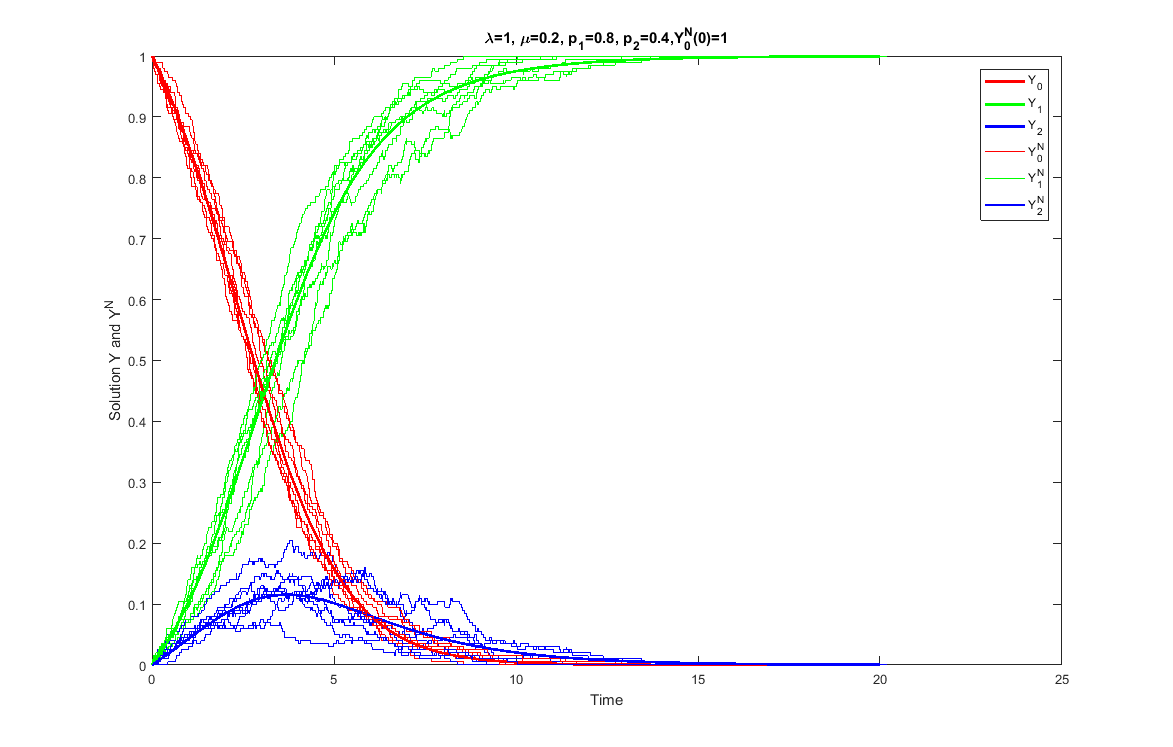}
 \vskip -1\baselineskip 
 \caption{}
 \label{example}
\end{figure}

For fixed $K$, $\lambda$, and $p_1$, $\lambda\pth{\frac{\mu}{K}+(1-\mu)}p_1$ is decreasing in $\mu\in (0, 1]$. As a result of this, $R$ is also decreasing in $\mu$. 
On the other hand, Theorem \ref{thm: eventual learn} presents a lower bound on the success probability, i.e., $\prob{E^N}$, which is increasing in $\mu$.
It is unclear whether there is indeed a fundamental trade-off between the success probability $\prob{E^N}$ and the convergence rate $R$ or this is just an artifact of our analysis.
As can be seen later, $\bar{t}_c$ is introduced for a technical reason.
%
%

\begin{proof}[{\bf Proof Sketch of Theorem \ref{thm: transient}}]
We first show that for sufficiently large $N$, the scaled Markov chains $\pth{Y^N(t), t\in \reals_+}$ (defined in \eqref{def: scaled process}) can be approximated by the {\em unique} solution of an ODEs system. 
Define $F: \Delta^{K} \to \reals^{K+1}$ as:
\begin{align}
\label{den function}
F(x) \triangleq \sum_{\ell \in \reals^{K+1}} \ell \cdot f(x, \ell),  ~~ \text{where }f\pth{x, \ell} \triangleq
\begin{cases}
\lambda x_0 \pth{\frac{\mu}{K} +(1-\mu)x_k}p_k, ~~\text{if}~ \ell =e^k, ~ \forall~ k\not=0;\\
\lambda  x_{k^{\prime}} x_{k}  p_{k} , ~~\text{if}~ \ell = e^k - e^{k^{\prime}}   \text{for} ~ k^{\prime} \not=k, k^{\prime}\not=0;\\
0,  ~~~ \text{otherwise}.
\end{cases}
\end{align}
Intuitively, $f(x, \ell)$ is the rate to move in direction $\ell$ from the current scaled state $x\in \Delta^K$, and function $F$ is the average movements of the scaled process $\pth{Y^N(t), t\in \reals_+}$.
%
\begin{lemma}
\label{thm: fluid model}
For a given $T$, as the group size $N$ increases, the sequence of scaled random processes $\pth{\frac{1}{N} X^N(t): t\in \reals_+}$ converge,  in probability, to a deterministic trajectory $Y(t)$ such that
\begin{align}
\label{eq: thm: ode}
\frac{\partial }{\partial t} Y(t) &= F \pth{Y(t)}, ~~~ t\in [0, T]
\end{align}
with initial condition $ Y(0) =\qth{1, 0, \cdots, 0}\in \Delta^{K}.$ In particular,
\begin{align*}
\prob{\sup_{0\le t \le T} \norm{Y^N(t) - Y(t)} \ge ~\epsilon^{\prime}  }\le  2 \pth{K+1} \exp \sth{-N \cdot C(\epsilon^{\prime})},
\end{align*}
where $ C(\epsilon^{\prime}) = \frac{3-e}{9T \lambda} \frac{\pth{\epsilon^{\prime}}^2}{(K+1) \exp \pth{2 \lambda \pth{5+\sqrt{K}}T}}.$
\end{lemma}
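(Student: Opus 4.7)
The plan is to recognize $(X^N(t))$ as a density-dependent continuous-time Markov chain in the sense of Kurtz and apply the standard fluid-limit argument: a Dynkin martingale decomposition, Lipschitz stability of the drift, Gr\"onwall's lemma, and an exponential martingale tail bound. First, I would rewrite the generator \eqref{state re: generator} in the density-dependent form $q^N_{s, s+\ell} = N\, f(s/N, \ell)$ with $f$ as in \eqref{den function}; every admissible jump vector $\ell$ is either $e^k$ or $e^k - e^{k'}$, so $\|\ell\| \le \sqrt{2}$. The drift $F$ is a quadratic polynomial on the compact simplex $\Delta^K$; computing its Jacobian and bounding the operator norm componentwise shows that $F$ is globally Lipschitz on $\Delta^K$ with constant $L := \lambda(5 + \sqrt{K})$, the $\sqrt{K}$ factor coming from passing from sup-norm estimates on the partial derivatives to an $\ell^2$ operator norm. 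Since $F$ is tangent to $\Delta^K$ (the components of $F$ sum to zero, as can be read off directly from \eqref{den function}), Picard-Lindel\"of gives a unique $\Delta^K$-valued solution $Y(t)$ of \eqref{eq: thm: ode} on $[0,T]$ starting from $\qth{1,0,\ldots,0}$.

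Next, I would write the Dynkin decomposition
\[
Y^N(t) = Y^N(0) + \int_0^t F(Y^N(s))\, ds + M^N(t),
\]
where $M^N$ is an $\reals^{K+1}$-valued pure-jump martingale arising from compensating the scaled jumps of $Y^N$. Subtracting the integrated form of \eqref{eq: thm: ode} and using the Lipschitz bound for $F(Y^N(s)) - F(Y(s))$ yields
\[
\|Y^N(t) - Y(t)\| \le L \int_0^t \|Y^N(s) - Y(s)\|\, ds + \sup_{0\le r\le t}\|M^N(r)\|,
\]
and Gr\"onwall then upgrades this to $\sup_{0\le t\le T}\|Y^N(t) - Y(t)\| \le e^{LT}\, \sup_{0\le t\le T}\|M^N(t)\|$, reducing the whole problem to a tail bound on the martingale remainder.

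Finally, I would control $M^N$ via an exponential (Freedman/Bennett-type) maximal inequality for pure-jump martingales. Each coordinate of $M^N$ has jumps of size $1/N$, and its predictable quadratic variation on $[0,T]$ is bounded by an absolute multiple of $\lambda T/N$ (since the total jump rate at any scaled state $x \in \Delta^K$ is $O(\lambda N)$). A coordinatewise Freedman bound, followed by a union bound over the $K+1$ coordinates and conversion from the resulting coordinate bounds to the $\ell^2$ norm, gives
\[
\prob{\sup_{0\le t\le T}\|M^N(t)\| \ge \eta} \le 2(K+1)\exp\sth{-N\, \tfrac{3-e}{9\lambda T (K+1)}\, \eta^2}.
\]
Setting $\eta = \epsilon'/e^{LT}$ and combining with the Gr\"onwall bound produces exactly the claimed $C(\epsilon') = \tfrac{3-e}{9T\lambda}\,(\epsilon')^2 / \bigl((K+1)\exp(2\lambda(5+\sqrt{K})T)\bigr)$. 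The main obstacle is this last step: the plain Azuma-Hoeffding bound is too crude, and obtaining the specific constant $(3-e)/9$ requires a Freedman-style inequality that exploits the true predictable quadratic variation of $M^N$ rather than only the maximal jump size, together with careful bookkeeping when lifting the bound from the discrete jump martingale (via Doob's maximal inequality) back to the continuous-time process. Everything else---the density-dependent rescaling, the Lipschitz analysis of $F$, and Gr\"onwall---is routine once the setup is in place.
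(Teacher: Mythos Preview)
Your proposal is correct and follows essentially the same route as the paper. Both arguments hinge on the same four ingredients: the density-dependent rescaling $q^N_{s,s+\ell}=N f(s/N,\ell)$, the Lipschitz bound $L=\lambda(5+\sqrt{K})$ for $F$, Gr\"onwall, and an exponential martingale tail bound combined with a union bound over $2(K+1)$ coordinate directions. The only packaging difference is that the paper does not name $M^N$ explicitly: it works directly with the exponential martingale $\exp\bigl(\langle Y^N(t),\rho\theta\rangle - \int_0^t N\sum_\ell f(Y^N(s),\ell)(e^{\langle\rho\theta,\ell/N\rangle}-1)\,ds\bigr)$ (as in Shwartz--Weiss), applies Doob's maximal inequality to it, Taylor-expands $N(e^{y/N}-1)-y\le \tfrac{y^2}{2N}e^{|y|/N}$, and optimizes the tilt parameter at $\rho=N\epsilon/(3T\lambda)$ to extract the constant $(3-e)/9$; it then applies Gr\"onwall \emph{afterwards}. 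You instead isolate the compensated-jump martingale $M^N$ first via Dynkin, apply Gr\"onwall to reduce to a tail bound on $\sup_t\|M^N(t)\|$, and invoke a Freedman/Bennett inequality. Since the quantity the paper bounds, $Y^N(t)-Y(t)-\int_0^t(F(Y^N)-F(Y))\,ds$, is exactly your $M^N(t)$, and since Freedman is proved via precisely the exponential-martingale-plus-Doob argument the paper spells out, the two are the same proof in different vocabulary.
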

Lemma \ref{thm: fluid model} says that the scaled process $Y^N(t)$, with high probability, closely tracks a deterministic and smooth trajectory. This coincides with simulation in Figure \ref{example}. The approximation in Lemma \ref{thm: fluid model} is desired because the analysis of an ODEs system is relatively easier than that of the original stochastic system. Indeed, in a great variety of fields, such as biology, epidemic theory, physics, and chemistry \cite{kurtz1970solutions}, differential equations are used directly to model the {\em macroscopic} level system dynamics that are {\em arguably} caused by the {\em microscopic} level individuals interactions in the system.

For any $t\in [0, T]$, the ODEs system $\frac{\partial }{\partial t} Y(t) = F \pth{Y(t)}$ in \eqref{eq: thm: ode} can be written out as:
\begin{align}
\label{limit 0}
\dot{Y_0}(t) 
&= - Y_0(t) \lambda \frac{\mu}{K}\sum_{k=1}^k p_k - Y_0(t) \lambda \sum_{k=1}^K (1-\mu) p_k Y_k(t),\\
\dot{Y_k}(t)  
&=  Y_0(t) \lambda  \frac{\mu}{K} p_k + Y_k(t) \lambda \pth{(1-\mu)p_kY_0(t)+ \sum_{k^{\prime}=1}^K (p_k-p_{k^{\prime}})Y_{k^{\prime}}(t)}, ~~\forall k=1, \cdots, K.
\label{limit 1}
\end{align}
Note that our ODE system is quite different from the antisymmetric Lotka-Volterra equation \cite{knebel2015evolutionary}. The Lotka-Volterra equation is a typical replicator dynamics, where if $Y_k(0)=0$ for some $k$, it remains to be zero throughout the entire process. In contrast, in our system, the desired ${\bf Y}^* =\qth{Y_0^*, Y_1^*, \cdots, Y_K^*} = \qth{0, 1, 0, \cdots, 0}$ is achievable with $Y(0) = \qth{1, 0, \cdots, 0}$, and can be achieved exponentially fast. 
\begin{lemma}
\label{lemma: uniqueness}
With initial state $ Y(0) =\qth{1, 0, \cdots, 0}$, the state ${\bf Y}^*  = \qth{0, 1, 0, \cdots, 0}$ is achievable.
\end{lemma}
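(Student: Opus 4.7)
The plan is to combine two monotonicity structures of the ODE system \eqref{limit 0}--\eqref{limit 1} with the strict fitness gap $p_1-p_2>0$. First I would observe from \eqref{limit 0} that $\dot Y_0(t)\le -\lambda\frac{\mu}{K}\pth{\sum_{k=1}^K p_k}Y_0(t)$, so $Y_0(t)$ decays at least exponentially to $0$. Second, in \eqref{limit 1} with $k=1$, the factor $\sum_{k'=1}^K(p_1-p_{k'})Y_{k'}=\sum_{k'\ge 2}(p_1-p_{k'})Y_{k'}$ is non-negative because $p_1>p_{k'}$ for all $k'\ge 2$; combined with the $Y_0$-forcing term, this yields $\dot Y_1\ge 0$. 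Hence $Y_1$ is non-decreasing and bounded above by $1$, so $Y_1(\infty)\triangleq\lim_{t\to\infty}Y_1(t)\in[0,1]$ exists. Setting $S(t)\triangleq\sum_{k\ge 2}Y_k(t)=1-Y_0(t)-Y_1(t)$, the limit $S(\infty)=1-Y_1(\infty)$ also exists.

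Next I would escape the initial point: because $\dot Y_1(0)=\lambda\frac{\mu}{K}p_1>0$, by continuity there exists $t_1>0$ with $Y_1(t_1)>0$, and by monotonicity $Y_1(t)\ge Y_1(t_1)>0$ for every $t\ge t_1$. Using $p_1-p_{k'}\ge p_1-p_2$ for $k'\ge 2$, the ODE \eqref{limit 1} with $k=1$ then gives the clean lower bound
\begin{align*}
\dot Y_1(t)\ge \lambda(p_1-p_2)\,Y_1(t)\,S(t).
\end{align*}
Dividing by $Y_1(t)$ and integrating from $t_1$ to $t$ produces $\log Y_1(t)-\log Y_1(t_1)\ge \lambda(p_1-p_2)\int_{t_1}^t S(s)\,ds$.

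Now I would derive a contradiction from $Y_1(\infty)<1$. If $S(\infty)=1-Y_1(\infty)>0$, then $S(s)\ge S(\infty)/2$ for all sufficiently large $s$, making $\int_{t_1}^t S(s)\,ds\to\infty$; the integrated bound then forces $Y_1(t)\to\infty$, contradicting $Y_1\le 1$. Therefore $Y_1(\infty)=1$. Since $0\le Y_k(t)\le S(t)\to 0$ for each $k\ge 2$ and $Y_0(t)\to 0$, I would conclude $Y(t)\to{\bf Y}^*$.

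I expect the delicate step to be escaping the initial point, i.e.\ showing that $Y_1$ ever becomes positive. The whole argument hinges on the uniform-sampling seeding term $Y_0\lambda\frac{\mu}{K}p_1$ in \eqref{limit 1}: without it (i.e.\ for $\mu=0$), the face $\sth{Y_1=0}$ would be forward-invariant and the claim would be false, which is precisely the qualitative difference from the antisymmetric Lotka--Volterra / replicator dynamics highlighted in the paper.
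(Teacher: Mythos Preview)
Your proof is correct and takes a genuinely different route from the paper's. The paper proceeds by \emph{equilibrium analysis}: it writes down the stationary equations $F(\mathbf{Y}^*)=\mathbf{0}$, deduces $Y_0^*=0$, and then shows that for each $k\ge 1$ either $Y_k^*=0$ or $\sum_{k'}(p_k-p_{k'})Y_{k'}^*=0$; after establishing $Y_1^*>0$ (via essentially the same seeding observation you use), it concludes $Y_k^*=0$ for all $k\ge 2$ by comparing the fitness sums, hence $Y_1^*=1$. Your argument instead works directly with the \emph{trajectory}: monotonicity of $Y_1$ combined with the logarithmic integral inequality $\log Y_1(t)-\log Y_1(t_1)\ge \lambda(p_1-p_2)\int_{t_1}^t S(s)\,ds$ forces $S(\infty)=0$ by contradiction, giving $Y_1(\infty)=1$ without ever classifying equilibria. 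Your approach is more self-contained as a convergence proof---it does not need a separate step linking ``unique equilibrium'' to ``limit of the trajectory,'' a link the paper leaves implicit here and effectively defers to the rate estimate in Lemma~\ref{thm: convergence rate ode}. The paper's approach, on the other hand, has the side benefit of identifying all stationary points of the ODE system. Both arguments pivot on the same two ingredients you correctly flag as essential: the non-negativity of $\sum_{k'\ge 2}(p_1-p_{k'})Y_{k'}$ (from $p_1>p_{k'}$) and the $\mu$-driven escape from the face $\{Y_1=0\}$.
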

\begin{lemma}
\label{thm: convergence rate ode}
For any constant $c\in (0, 1)$, let $\bar{t}_c ~ \triangleq ~ \frac{\log \frac{1}{c}}{\lambda \frac{\mu}{K}\sum_{k^{\prime}=1}^K p_{k^{\prime}} }$.
%
When $t\ge \bar{t}_c$, $Y(t)$ converges to ${\bf Y}^* = \qth{0, 1, 0, \cdots, 0}$ exponentially fast, with rate $\min \sth{\lambda \pth{p_1 - p_2}, ~  \lambda\pth{\frac{\mu}{K}+(1-\mu)}p_1}$.
\end{lemma}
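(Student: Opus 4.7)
The plan is to establish two exponential decay estimates, one for $Y_0(t)$ and one for each $Y_k(t)$ with $k \neq 1$; together they imply that $1 - Y_1(t) = Y_0(t) + \sum_{k \geq 2} Y_k(t)$ also decays at the claimed rate $R$. By Lemma~\ref{lemma: uniqueness} the trajectory does reach $\mathbf{Y}^*$, so the task is purely to quantify the rate. The definition of $\bar t_c$ itself arises from the trivial bound $\dot Y_0 \leq -\lambda \frac{\mu}{K}(\sum_{k'} p_{k'}) Y_0$, obtained by discarding the nonnegative second term in the first ODE: $\bar t_c$ is precisely the time at which this worst-case estimate guarantees $Y_0(\bar t_c) \leq c$.

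For $Y_0$, I would retain the full first ODE, $\dot Y_0 / Y_0 = -\lambda\bigl[\frac{\mu}{K}\sum_k p_k + (1-\mu)\sum_k p_k Y_k\bigr]$, and keep only the $k=1$ contribution in each sum (all $p_k \geq 0$). This lower-bounds the instantaneous decay rate by $\lambda p_1\bigl[\frac{\mu}{K} + (1-\mu) Y_1(t)\bigr]$, which tends to $\lambda p_1\bigl[\frac{\mu}{K} + (1-\mu)\bigr]$ as $Y_1 \to 1$ and recovers the first entry of the $\min$ defining $R$.

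For $Y_k$ with $k \geq 2$, I would expand $\dot Y_k$ and observe that the selection sum $\sum_{k'}(p_k - p_{k'}) Y_{k'}$ contains the dominant negative contribution $-(p_1 - p_k) Y_1 \leq -(p_1 - p_2) Y_1$, while the off-diagonal remainder is bounded crudely by $p_k(1 - Y_0 - Y_1)$. After routine simplification this yields $\dot Y_k \leq Y_0 \lambda \frac{\mu}{K} p_k + Y_k \lambda\bigl[p_k - p_1 Y_1\bigr]$, and using $p_k - p_1 Y_1 \leq -(p_1 - p_2) + p_1(1 - Y_1)$ gives $\dot Y_k \leq Y_0 \lambda \frac{\mu}{K} p_k - \lambda\bigl[(p_1 - p_2) - p_1(1 - Y_1)\bigr] Y_k$. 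Once $Y_1$ is close to $1$, the coefficient of $Y_k$ is close to $-\lambda(p_1 - p_2)$---the second entry of the $\min$ in $R$---while the forcing term is itself exponentially small by the $Y_0$ estimate; variation-of-parameters then bounds $Y_k$ by an exponentially decaying quantity with rate $R$.

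The two estimates are mutually coupled: the $Y_0$ bound needs $Y_1 \to 1$, while the $Y_k$ bound needs smallness of both $Y_0$ and $1 - Y_1$. I would close the loop via a continuous bootstrap: choose the free constant $c$ small enough that at $t = \bar t_c$ the state already lies in a ``good region'' $\{Y_0 + \sum_{k \geq 2} Y_k \leq \alpha\}$ for some small $\alpha$, then argue by continuity that the trajectory cannot exit this region because on its boundary the two estimates already force the relevant quantities to strictly decrease. The hardest step is precisely this bootstrap: one must show that the only unconditional input---$Y_0(\bar t_c) \leq c$---is enough to launch and sustain the mutual feedback that drives both $Y_0$ and the $Y_k$'s down at the clean rate $R$, without the forcing $Y_0 \lambda \frac{\mu}{K} p_k / Y_k$ that appears when one works with $\log Y_k$ blowing up as $Y_k$ gets small.
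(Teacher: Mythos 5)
Your plan diverges from the paper's at the crucial step, and the divergence opens a gap. The paper does not try to get two separate linear decay estimates and bootstrap them; instead it derives a single self-contained \emph{logistic} differential inequality for $Y_1$ alone. Keeping the $k'\ge 2$ terms in the selection sum and using $p_1-p_{k'}\ge p_1-p_2$ together with $\sum_{k'\ge 2}Y_{k'}=1-Y_0-Y_1$, one arrives (after a case split on whether $\pth{\tfrac{\mu}{K}+1-\mu}p_1\gtrless p_1-p_2$) at
$\dot Y_1 \ge R\,\pth{1-Y_1}Y_1$,
with $R$ the claimed rate. Comparison with the logistic ODE $\dot y = R(1-y)y$ then gives exponential convergence of $Y_1\to 1$ from \emph{any} positive starting value, and $\bar t_c$ is only used to certify the cheap lower bound $Y_1(\bar t_c)\ge (1-c)/K>0$. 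The remaining $Y_k$'s are then handled for free via $\sum_k Y_k=1$. Because the logistic dynamics contracts from the start, no bootstrap is needed.

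The gap in your proposal is the bootstrap itself. You propose to ``choose $c$ small enough that at $t=\bar t_c$ the state already lies in a good region $\{Y_0+\sum_{k\ge 2}Y_k\le\alpha\}$,'' but shrinking $c$ only controls $Y_0(\bar t_c)$, not $\sum_{k\ge 2}Y_k(\bar t_c)$. The mass that drains out of state $0$ is distributed across \emph{all} $K$ preferences, and the only a priori pigeonhole fact at time $\bar t_c$ is $Y_1(\bar t_c)\ge (1-c)/K$; when several $p_k$ are close to $p_1$, or when $K$ is large, $\sum_{k\ge 2}Y_k(\bar t_c)$ can be close to $1-c$ no matter how small $c$ is, so $Y_1(\bar t_c)$ need not be near $1$. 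Consequently your linearized inequality $\dot Y_k\le Y_0\lambda\tfrac{\mu}{K}p_k - \lambda\qth{(p_1-p_2)-p_1(1-Y_1)}Y_k$ has a sign-indefinite linear coefficient on $[\bar t_c, T_0]$ for some unknown $T_0$, and the variation-of-parameters estimate does not kick in. One could try to repair this by invoking Lemma~\ref{lemma: uniqueness} to pick a (non-effective) $T_0$ with $Y_1(T_0)$ close to $1$, but then the constant in front of the exponential is no longer an explicit function of $c$ and $K$, which is what the paper needs when the lemma is fed into Theorem~\ref{thm: transient}. The missing idea is exactly the replacement of the would-be bootstrap with the logistic comparison on $Y_1$, which is closed on its own and effective from any $Y_1(\bar t_c)>0$. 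Note also that the lemma statement quantifies over \emph{all} $c\in(0,1)$, so an argument that requires $c$ to be small is not proving the stated result.
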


By triangle inequality,  
\begin{align*}
1-Y_1^N(t) \le  \norm{Y^N(t) -Y(t)} + \abth{1-Y_1(t)}, ~ \text{and}~Y_k^N(t) \le  \norm{Y^N(t) -Y(t)} + \abth{Y_k(t)}, ~~~ \forall k\not=1.
\end{align*}
Thus, Theorem \ref{thm: transient} follows immediately from Lemmas \ref{thm: fluid model}, \ref{lemma: uniqueness},and \ref{thm: convergence rate ode}.
\end{proof}

\section{Concluding Remarks}
\label{subsec: comparison}
We studied the collaborative multi-armed bandit problems in social groups wherein each individual suffers finite memory constraint \cite{1054427}. 
We rigorously investigated the power of persuasion, i.e., communication, in improving an individual's learning capability.
%
%
Similar learning dynamics are considered in \cite{Celis:2017:DLD:3087801.3087820} with the following fundamental differences. 
\begin{itemize}
	\item We relax their synchronization assumption. It is assumed in \cite{Celis:2017:DLD:3087801.3087820} that time is slotted, and all individuals attempt to make one-step update simultaneously, where the updates are based on the system's state at the end of previous round. This synchronization assumption imposed additional implementation challenges: Synchronization does not come for free. Typically the cost of synchronization depends deterministically on the slowest individual in the system. 

	In contrast, our work relaxes this synchronization assumption and considers the less restrictive asynchronous setting where each individual has an independent Poisson clock, and attempts to perform a one-step update immediately when its local clock ticks. The Poisson clock model is very natural: The real world is an asynchronous one and there are physical reasons to use a modeling approach based on Poisson clocks \cite{ross2014introduction}. This model avoids the implementation challenges in \cite{Celis:2017:DLD:3087801.3087820}, nevertheless, it causes non-trivial analysis challenge -- we have to deal with the stochastic dependency among any updates. In contrast, with synchronization, the individuals' updates of the same round in \cite{Celis:2017:DLD:3087801.3087820} are conditionally independent.
	
	\item We relax the requirement of performing uniform sampling by all individuals, and we are able to show learnability of {\em every} individual. It is assumed in \cite{Celis:2017:DLD:3087801.3087820} that, in each round, every individual, regardless of its preference states, performs uniform sampling with probability $\mu$. This assumption is imposed for a technical reason: They wanted to {\em ``ensure that the population does not get stuck in a bad option''}. However, as a result of their assumption, as $t\diverge$, there is a constant fraction $\mu\in (0, 1]$ of individuals that cannot learn the best option.  In contrast, in our learning dynamics, we require only that the individuals without any preference do uniform sampling. We overcome their concerns of ``get stuck in a bad option'' by showing that such events occur with probability diminishing to $0$ as the social group size $N\diverge$. 

	\item We use the mean-field approximation method to provide a provable connection between the finite and the infinite population dynamics. 	In \cite{Celis:2017:DLD:3087801.3087820}, the authors first define an ``infinite population'' version of their dynamics and then translate its convergence properties to the finite population stochastic dynamics. Unfortunately, the connection between their infinite and the finite population dynamics is only established through the {\em ``non-rigorous thought process''} -- as the authors themselves commented \cite{Celis:2017:DLD:3087801.3087820}. Similar heuristic arguments were also made in the evolutionary biology literature \cite{kang2016dynamical}. In contrast, we use the mean-field approximation method to provide a provable connection between these two dynamics. 

\end{itemize}

\section*{Acknowledgements}
We would like to thank John N.\ Tsitsiklis for valuable discussions and comments.



\bibliographystyle{acm}
\bibliography{alpha}

\appendix

\section{Space-Time Structure}
\label{app: space-time}
 Recall that a time-homogeneous, discrete-state Markov chain (can possibly be continuous-time or discrete-time) can be described alternatively by specifying its space-time structure \cite{hajek2015random}, which is simply the sequence of states visited (jump process) and how long each state is stayed at per visit (holding times).

Let $H_{l}$ be the time that elapses between the $l^{th}$ and the ${l+1}^{th}$ jumps of $\pth{X^N(t): t\in \reals_+}$. Intuitively, $H_l$ is the amount of time that the process is ``held'' by the $l^{th}$ jump. More formally,
\begin{align}
\label{space-time holding 0}
H_0 & = \min_t \sth{t\ge 0: ~ X^N(t)\not=X^N(0)} \\
\label{space-time holding l}
H_l & = \min_t \sth{t\ge 0: ~ X^N(H_0+ \cdots + H_{l-1} + t)\not= X^N(H_0+ \cdots + H_{l-1})},
\end{align}
as the $l^{th}$ jump occurs at time $H_0+ \cdots + H_{l-1}$ and the ${l+1}^{th}$ jump occurs at time $H_0+ \cdots + H_{l}$.
Correspondingly, the {\em jump process} embedded in the continuous-time Markov chain $X^N(t)$ is defined by
\begin{align}
\label{space-time jump process}
  X^{J, N}(0)  = X^N(0) ~~ ~~  \text{and} ~~ ~~   X^{J, N}(l)  = X^N(H_0+ \cdots + H_{l-1}),
\end{align}
with
\begin{align}
\label{jump entries}
  X^{J, N}(l) &  =\qth{X^{J, N}_0(l), X^{J, N}_1(l), \cdots, X^{J, N}_K(l)},
\end{align}
where $X^{J, N}_k(l),$ is the number of individuals that prefer arm $a_k$ at the $l^{th}$ jump.
The holding times $\{H_l\}_{l=0}^{\infty}$ and the jump process $\pth{X^{J, N} (l): l\in \integers_+}$ contain all the information needed to reconstruct the original Markov chain $X^N(t)$, and vice versa \cite{hajek2015random}. 

\begin{proposition}\cite{hajek2015random}
\label{prop: embed con}
Let $\pth{X(t): t\in \reals_+}$ be a time-homogeneous, pure-jump Markov process with generator matrix $Q$. Then
the jump process $X^J$ is a discrete-time, time-homogeneous Markov process, and its one-step transition probabilities are given by
     \begin{align}\label{jump: one-step}
       p_{ss^{\prime}}^J = \begin{cases}
                    -\frac{q_{ss^{\prime}}}{q_{ss}}, & \mbox{for } s\not=s^{\prime}; \\
                    0, & \mbox{for } s=s^{\prime},
                  \end{cases}
     \end{align}
     with the convention that $\frac{0}{0} \triangleq 0$. 
\end{proposition}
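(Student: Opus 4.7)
The plan is to deduce both the Markov property of $\pth{X^{J,N}(l): l\in \integers_+}$ and the one-step transition formula from two standard ingredients: (i) the strong Markov property of $X$ applied at the jump times, which are stopping times; and (ii) the ``competing exponentials'' description of the first jump out of any non-absorbing state. The computation is purely local to the state $s$ from which the jump occurs, so most of the work is in carefully invoking the strong Markov property to reduce the multi-step statement to a one-step claim.

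First, I would observe that the jump time $T_l \triangleq H_0 + \cdots + H_{l-1}$ is a stopping time with respect to the natural filtration $\pth{\calF_t}_{t\ge 0}$ of $X$, since $\sth{T_l \le t}$ is determined by the path of $X$ on $[0,t]$ (it is the event that $X$ has already made at least $l$ jumps by time $t$). The pure-jump hypothesis guarantees that, on the event $\sth{T_l<\infty}$, the process $X$ is constant on a half-open interval $[T_l, T_l + H_l)$ of strictly positive length, so $X^{J,N}(l) = X(T_l)$ is well-defined. The strong Markov property then asserts that, conditional on $\calF_{T_l}$, the shifted process $\pth{X(T_l + t): t\ge 0}$ is a copy of $X$ started at $X^{J,N}(l)$ and independent of $\pth{X(u): u\le T_l}$. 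This immediately gives both the Markov property and the time-homogeneity of the sequence $\pth{X^{J,N}(l): l\in \integers_+}$: the conditional law of $X^{J,N}(l+1)$ given $\calF_{T_l}$ depends only on the value $X^{J,N}(l)$ and not on $l$.

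It remains to compute $p^J_{ss'}$ in the one-step setting. Fix a non-absorbing state $s$, so that $q_{ss} < 0$ and the outgoing rates $\sth{q_{ss'}: s'\ne s}$ sum to $-q_{ss}$. A standard realization of a pure-jump Markov process with generator $Q$, started at $s$, assigns independent exponential alarm clocks $T_{s'} \sim \text{Exp}(q_{ss'})$ (interpreting $\text{Exp}(0)$ as $+\infty$) for each candidate next state $s'\ne s$; the first jump occurs at time $T^* = \min_{s'\ne s} T_{s'}$ and lands at the argmin. Classical properties of competing exponentials then yield that $T^* \sim \text{Exp}(-q_{ss})$ and, independently of $T^*$, $\Prob\sth{\arg\min = s'} = q_{ss'}/\pth{-q_{ss}} = -q_{ss'}/q_{ss}$, which is precisely \eqref{jump: one-step} for $s'\ne s$. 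Since the jump process by construction never stays put, $p^J_{ss} = 0$. Finally, the absorbing case $q_{ss}=0$ forces $q_{ss'}=0$ for all $s'$, so the convention $0/0 \triangleq 0$ correctly encodes that no jump ever leaves $s$ (and the row of $p^J$ indexed by $s$ is irrelevant, as the jump process terminates at $s$).

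The only delicate point is justifying the competing-exponentials realization, i.e., that the clock-based construction truly produces a Markov process with generator $Q$. I would handle this by invoking the standard existence/uniqueness theorem for the minimal pure-jump chain associated with $Q$; alternatively one can derive the exponential distribution of $H_0$ and the transition formula directly from the transition kernel expansion $P_h(s,s') = \delta_{ss'} + h\, q_{ss'} + o(h)$ via a short calculation using the memorylessness of the waiting time. Since the proposition is quoted verbatim from \cite{hajek2015random}, the cleanest presentation is to cite that reference for this competing-exponentials ingredient and let the strong Markov reduction above carry the rest of the argument.
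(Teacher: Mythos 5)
Your argument is correct: the strong Markov property at the jump times (which are stopping times) gives the Markov property and time-homogeneity of the embedded chain, and the competing-exponentials description of the first exit from a state $s$ yields $p^J_{ss'}=-q_{ss'}/q_{ss}$, with the $0/0\triangleq 0$ convention correctly handling absorbing states. The paper itself offers no proof of this proposition—it is quoted directly from \cite{hajek2015random}—and your sketch is essentially the standard argument given in that reference, so there is nothing further to reconcile.
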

%

Our analysis will also use the space-time structure of the discrete-time Markov chain, as we will use such structure of the above derived jump process $X^J$.
\begin{proposition}\cite{hajek2015random}
\label{prop: embed dis}
Let $\pth{X(k): k\in \integers_+}$ be a time-homogeneous Markov process with one-step transition probability matrix $P$. Then
the jump process $\pth{X^J(l): l\in \integers_+}$ is itself a time-homogeneous Markov process, and its one-step transition probabilities are given by
      \begin{align}\label{jump: one-step}
        p_{ss^{\prime}}^J = \begin{cases}
                     \frac{p_{ss^{\prime}}}{1-p_{ss}}, & \mbox{for } s\not=s^{\prime}; \\
                     0, & \mbox{for } s=s^{\prime},
                   \end{cases}
      \end{align}
      with the convention that $\frac{0}{0} \triangleq 0$.
\end{proposition}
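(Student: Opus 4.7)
\medskip

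The plan is to establish the Markov property of the jump process $(X^J(l))_{l\in\integers_+}$ and identify its one-step transition kernel by exploiting the (strong) Markov property of $(X(k))_{k\in\integers_+}$ at the jump times. First I would set up notation analogous to \eqref{space-time holding 0}--\eqref{space-time jump process} for the discrete-time setting: let $H_0=\min\{k\ge 1: X(k)\ne X(0)\}$, and inductively $H_l=\min\{k\ge 1: X(H_0+\cdots+H_{l-1}+k)\ne X(H_0+\cdots+H_{l-1})\}$, so that the jump times are $\tau_l=H_0+\cdots+H_{l-1}$ and $X^J(l)=X(\tau_l)$. Each $\tau_l$ is a stopping time with respect to the natural filtration $\calF_k=\sigma(X(0),\ldots,X(k))$, since the event $\{\tau_l\le k\}$ is determined by $X(0),\ldots,X(k)$.

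Next I would prove the Markov property of $X^J$. Conditioned on $X^J(0),\ldots,X^J(l)$ and the holding times $H_0,\ldots,H_{l-1}$, the strong Markov property of $X$ at $\tau_l$ says that the post-$\tau_l$ process $(X(\tau_l+k))_{k\ge 0}$ is distributed as the original chain started from $X(\tau_l)=X^J(l)$, independently of $\calF_{\tau_l}$. Since $X^J(l+1)$ is a deterministic functional of this post-$\tau_l$ process (namely the first value it takes different from $X^J(l)$), its conditional law given $\calF_{\tau_l}$ depends only on $X^J(l)$. In particular, the conditional law given the jump history $X^J(0),\ldots,X^J(l)$ depends only on $X^J(l)$, which is the Markov property; time-homogeneity is inherited from the time-homogeneity of $X$.

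For the transition probabilities, fix $s$ with $p_{ss}<1$ and $s'\ne s$. Starting at state $s$, the event $\{X^J(1)=s'\}$ decomposes over the holding time $H_0$ as
\begin{align*}
\Prob\{X^J(1)=s'\mid X^J(0)=s\}
&=\sum_{k=1}^{\infty}\Prob\{X(1)=s,\ldots,X(k-1)=s,\,X(k)=s'\mid X(0)=s\}\\
&=\sum_{k=1}^{\infty} p_{ss}^{\,k-1}\,p_{ss'}
=\frac{p_{ss'}}{1-p_{ss}},
\end{align*}
where the second equality uses the Markov property of $X$ at each step. The case $s=s'$ gives $0$ by definition of $H_0\ge 1$ (the jump process never records a self-loop). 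Finally, if $p_{ss}=1$ then $s$ is absorbing, the set $\{k\ge 1: X(k)\ne s\}$ is empty almost surely, $X^J$ is not defined beyond the first visit to $s$, and the convention $0/0\triangleq 0$ absorbs this degenerate case into the formula consistently.

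I do not anticipate a serious obstacle here: the proof is a direct application of the Markov property together with the observation that the holding time in any non-absorbing state is geometric with parameter $1-p_{ss}$. The only subtlety worth being explicit about is measurability of the jump times and the fact that $X^J$ is well-defined on the event $\{\tau_l<\infty\}$; on its complement, whose occurrence means the chain has been absorbed, the proposition is vacuous with the stated convention.
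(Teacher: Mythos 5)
Your argument is correct: the geometric-holding-time computation $\sum_{k\ge 1}p_{ss}^{k-1}p_{ss'}=p_{ss'}/(1-p_{ss})$ together with the strong Markov property at the jump times is exactly the standard derivation of this fact. The paper itself offers no proof --- it simply quotes the result from \cite{hajek2015random} --- so there is nothing to contrast with; your treatment of the absorbing case via the $\tfrac{0}{0}\triangleq 0$ convention also matches the intended reading of the statement.
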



\section{Learnability}
\label{sec: limit}

\subsection{Proof of Lemma \ref{random walker}}
\label{app: random walker}
The proof of Lemma \ref{random walker} uses the following claim.
\begin{claim}
\label{clm: bounding}
For any $c \ge 0$, $y>0$, $x+y >0$ and $x+y+c >0$, it holds that
\begin{align*}
  \frac{x}{x+y} & <\frac{x+c}{x+c+y}.
\end{align*}
\end{claim}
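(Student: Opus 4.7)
The claim is a purely algebraic inequality, so the plan is direct: clear the denominators and reduce to a transparent inequality between monomials in $x$, $y$, $c$. The critical preliminary observation is that by hypothesis both denominators $x+y$ and $x+c+y$ are strictly positive, so multiplying both sides of $\frac{x}{x+y} < \frac{x+c}{x+c+y}$ by $(x+y)(x+c+y)$ preserves the direction of the inequality; this holds even when $x$ itself is negative, since the quantities serving as multipliers are exactly the denominators, whose signs are controlled by the hypotheses.

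After this cross-multiplication the target inequality becomes $x(x+c+y) < (x+c)(x+y)$. Expanding each side gives $x^2 + xc + xy$ on the left and $x^2 + xy + xc + cy$ on the right, so after cancelling the common terms $x^2$, $xy$, and $xc$ the claim reduces to $0 < cy$. This last inequality is immediate from the hypotheses $y > 0$ and $c \ge 0$ whenever $c$ is strictly positive. If $c=0$ the two fractions coincide and one gets equality rather than a strict inequality; since the claim is only invoked inside the proof of Lemma~\ref{random walker} in a regime where the quantity labelled $c$ is strictly positive (coming from a nonzero jump of the walk), this degenerate case does not affect the intended application.

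I anticipate no real obstacle: the only step requiring care is confirming the sign of the denominators before cross-multiplying, which is built directly into the hypotheses $x+y>0$ and $x+y+c>0$. For a presentation that avoids even the brief sign-discussion around cross-multiplication, one can equivalently compute the difference of the two fractions in a single line,
\begin{equation*}
\frac{x+c}{x+c+y} - \frac{x}{x+y} \;=\; \frac{(x+c)(x+y) - x(x+c+y)}{(x+y)(x+c+y)} \;=\; \frac{cy}{(x+y)(x+c+y)},
\end{equation*}
and observe that the numerator is nonnegative (positive when $c>0$) while the denominator is strictly positive by hypothesis, making the sign of the difference completely manifest. This second formulation is probably the cleanest one to record in the final write-up.
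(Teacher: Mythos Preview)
Your proposal is correct and follows essentially the same approach as the paper: both compute the difference $\frac{x+c}{x+c+y}-\frac{x}{x+y}$ and verify its sign, with the paper using the rewrite $1-\frac{y}{x+c+y}-1+\frac{y}{x+y}=y\bigl(\frac{1}{x+y}-\frac{1}{x+c+y}\bigr)\ge 0$ in place of your direct computation $\frac{cy}{(x+y)(x+c+y)}$. Your remark that the strict inequality degenerates to equality at $c=0$ is apt; the paper's own proof in fact concludes with $\ge 0$ rather than $>0$, consistent with your observation.
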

\begin{proof}
The proof of this claim is elementary, and is presented for completeness.
\begin{align*}
  \frac{x+c}{x+c+y} - \frac{x}{x+y} & = 1-\frac{y}{x+c+y} -1 + \frac{y}{x+y} = y\pth{\frac{1}{x+y} - \frac{1}{x+c+y}} \ge 0.
\end{align*}
\end{proof}

Now we are ready to prove Lemma \ref{random walker}.
\begin{proof}[Proof of Lemma \ref{random walker}]
For ease of exposition, for a fixed $k\in \integers_+$, define
\begin{align}
\label{rw: moving up}
A^k \triangleq \sth{\omega : W(k+1) = W(k)+1 ~ \text{given }~W(k)\notin \{0, N\}}.
\end{align}
To show Lemma \ref{random walker}, it is enough to show $\prob{A^k} \ge \frac{p_1}{p_1+p_2}$.

We first link the random walk back to the first order space-time structure of the original continuous-time Markov chain as follows:
\begin{align*}
A^k &=\sth{\omega : W(k+1) = W(k)+1 ~ \text{given }  W(k)\notin \{0, N\}}\\
&= \cup_{l=1}^{\infty} \Big{\{}\omega : \text{the $k+1^{th}$ move of $W$ occurs at the ${l+1}^{th}$ jump of $X^{J, N}$ }\\
 & ~~~~~~~~~~~~~~~ \& ~ X^{J, N}_1(l+1) =  X^{J, N}_1(l)+1  ~ \text{given }W(k)\notin \{0, N\}  ~~~~ ~~~~  \Big{\}}.
\end{align*}
For ease of exposition, define $B^k_l$ as
\begin{align}
\label{def: bkl}
\nonumber
B^k_l &\triangleq \Big{\{}\omega : \text{the $k+1^{th}$ move of $W$ occurs at the ${l+1}^{th}$ jump of $X^{J, N}$ }\\
\nonumber
 & ~~~~~~~~~ \& ~ X^{J, N}_1(l+1) =  X^{J, N}_1(l)+1 ~ \text{given }W(k)\notin \{0, N\}  ~~~~ ~~~~  \Big{\}}\\
 \nonumber
 & = \Big{\{}\omega : \text{the $k+1^{th}$ move of $W$ occurs at the ${l+1}^{th}$ jump of $X^{J, N}$ }\\
 & ~~~~~~~~~ \& ~ X^{J, N}_1(l+1) =  X^{J, N}_1(l)+1 ~ \text{given }X^{J, N}_1(l)\notin \{0, N\}  ~~~~ ~~~~  \Big{\}}.
\end{align}
It is easy to see that $ B^k_l  = \O ~\text{for } l < k$. So we get $ A^k = \cup_{l \ge k} ~ B^k_l. $
In addition, by definition, $B^k_l \cap B^k_{l^{\prime}} =\O, \forall ~ l\not= l^{\prime}$. Thus,
\begin{align}
\label{eq: aaa}
\prob{A^k} = \prob{\cup_{l\ge k} B_l^k} = \sum_{l\ge k} \prob{B_l^k}.
\end{align}
Now we focus on $\prob{B_l^k}$. Let $\calS^N$ be the collection of valid states such that for every $s\in \calS^N$, $s_1\notin \{0, N\}$ --  recalling that a valid state is a partition of integer $N$.  By total probability argument, we have
\begin{align}
\label{eq: total prob}
\prob{B_l^k} & =  \sum_{s\in \calS^N} \prob{X^{J, N}(l)=s \mid X^{J, N}_1(l)\notin \{0, N\}} \prob{B_l^k \mid X^{J, N}(l)=s},
\end{align}
where
\begin{align}
\label{eq: b total}
\sum_{s\in \calS^N} \prob{X^{J, N}(l)=s \mid X^{J, N}_1(l)\notin \{0, N\}} =1.
\end{align}
Define event $C_l^k$ as follow:
\begin{align}
\label{def: ckl}
C^k_l \triangleq \sth{\omega : \text{the $k+1^{th}$ move of $W$ occurs at the ${l+1}^{th}$ jump of $X^{J, N}$ given $X^{J, N}_1(l)\notin \{0, N\}$}}.
\end{align}
It is easy to see that for a fixed $k$,
\begin{align}
\label{eq: c total}
\sum_{l\ge k} \prob{C^k_l } = 1.
\end{align}
For $l\ge k$, we have
\begin{align}
\label{eq:bbb}
&\prob{B_l^k \mid X^{J, N}(l)=s}=  \prob{C_l^k \mid X^{J, N}(l)=s} \prob{X^{J, N}_1(l+1) =  X^{J, N}_1(l)+1 \mid C_l^k, X^{J, N}(l)=s}.
\end{align}
We claim that
\begin{align}
\label{eq: component bound}
 \prob{X^{J, N}_1(l+1) =  X^{J, N}_1(l)+1 \mid C_l^k, X^{J, N}(l)=s}\ge \frac{p_1}{p_1+p_2}.
\end{align}
We postpone the proof of \eqref{eq: component bound} to the end of the proof of Lemma \ref{random walker}.
With \eqref{eq: component bound}, we are able to conclude that $\prob{A^k} \ge \frac{p_1}{p_1+p_2}$. In particular, equation \eqref{eq:bbb} becomes
\begin{align}
\label{eq: ccc}
\prob{B_l^k \mid X^{J, N}(l)=s}
&\ge  \prob{C_l^k \mid X^{J, N}(l)=s} \frac{p_1}{p_1+p_2}.
\end{align}
By \eqref{eq: aaa}, \eqref{eq: total prob} and \eqref{eq: ccc}, we have
\begin{align*}
\prob{A^k}  &= \sum_{l\ge k}   \sum_{s\in \calS^N} \prob{X^{J, N}(l)=s \mid X^{J, N}_1(l)\notin \{0, N\}} \prob{B_l^k \mid X^{J, N}(l)=s}\\
&\ge   \sum_{l\ge k} \sum_{s\in \calS^N} \prob{X^{J, N}(l)=s \mid X^{J, N}_1(l)\notin \{0, N\}} \prob{C_l^k \mid X^{J, N}(l)=s} \frac{p_1}{p_1+p_2} \\
& = \frac{p_1}{p_1+p_2} \sum_{l\ge k} \sum_{s\in \calS^N} \prob{X^{J, N}(l)=s, C_l^k}\\
& = \frac{p_1}{p_1+p_2} \sum_{l\ge k} \sum_{s\in \calS^N} \prob{C_l^k} \prob{X^{J, N}(l)=s \mid C_l^k}\\
& = \frac{p_1}{p_1+p_2},
\end{align*}
where the last equality follows from \eqref{eq: b total} and \eqref{eq: c total}.

Next we prove \eqref{eq: component bound}. We have
\begin{align}
\label{eq: ddd}
\nonumber
 &\prob{X^{J, N}_1(l+1) =  X^{J, N}_1(l)+1 \mid C_l^k, X^{J, N}(l)=s} \\
 \nonumber
& = \mathbb{P} \big{\{}X^{J, N}_1(l+1) =  X^{J, N}_1(l)+1 \mid X^{J, N}(l)=s, s_1\notin \{0, N\}, \text{the $k+1^{th}$ move of $W$}\\
\nonumber
 &~~~~~~~~~~~~~~~~~~~ ~~~~~~~~~~~~ ~~~~~~~~~~~~~ \text{occurs at the $l+1^{th}$ jump of $X^{J, N}$ given $W(k)\notin \{0, N\}$}\big{\}}\\
 \nonumber
 & = \mathbb{P} \big{\{} X^{J, N}_1(l+1) =  X^{J, N}_1(l)+1 \mid X^{J, N}(l)=s, s_1\notin \{0, N\} \\
 \nonumber
 &~~~~~~~~~~~~~~~~~~~ ~~~~~~~~~~~~ ~~~~~~~~~~~~~ \text{one move of $W$ occurs at the $l+1^{th}$ jump of $X^{J, N}$},\\
 \nonumber
 &~~~~~~~~~~~~~~~~~~~ ~~~~~~~~~~~~ ~~~~~~~~~~~~~ \text{and there are $k$ moves of $W$ occur among the first $l$ jumps of $X^{J, N}$}\big{\}}\\
 \nonumber
 & \overset{(a)}{=} \mathbb{P} \big{\{} X^{J, N}_1(l+1) =  X^{J, N}_1(l)+1 \mid X^{J, N}(l)=s, s_1\notin \{0, N\} \\
 \nonumber
 &~~~~~~~~~~~~~~~~~~~ ~~~~~~~~~~~~ ~~~~~~~~~~~~~ \text{one move of $W$ occurs at the $l+1^{th}$ jump of $X^{J, N}$}\big{\}} \\
 & = \frac{\prob{X^{J, N}_1(l+1) =  X^{J, N}_1(l)+1 \mid X^{J, N}(l)=s, s\in \calS^N}}{\prob{\text{one move of $W$ occurs at the $l+1^{th}$ jump of $X^{J, N}$} \mid X^{J, N}(l)=s, s\in \calS^N}},
\end{align}
where equality (a) follows from the Markov property of $X^{J, N}$.
By Proposition \ref{prop: embed con}, we know
\begin{align}
\label{eq: eee}
\prob{X^{J, N}_1(l+1) =  X^{J, N}_1(l)+1 \mid X^{J, N}(l)=s, s\in \calS^N} = - \sum_{s^{\prime}: s_1^{\prime} = s_1+1} \frac{q_{s, s^{\prime}}}{q_{s,s}}.
\end{align}
Note that $\sum_{s^{\prime}: s_1^{\prime} = s_1+1}q_{s, s^{\prime}}$ is exactly the birth rate of the best arm, i.e., arm $a_1$. That is,
\begin{align}
\label{eq: eee 1}
\sum_{s^{\prime}: s_1^{\prime} = s_1+1}q_{s, s^{\prime}} = s_0\lambda \pth{\frac{\mu}{K} +\pth{1-\mu}\frac{s_1}{N}} p_1 + \sum_{j\ge 2} s_j\lambda \frac{s_1}{N} p_1.
\end{align}
Similarly,
\begin{align}
\label{eq: fff}
\prob{\text{one move of $W$ occurs at the $l+1^{th}$ jump of $X^{J, N}$} \mid X^{J, N}(l)=s, s\in \calS^N} & = -\sum_{s^{\prime}: s_1^{\prime} = s_1\pm 1} \frac{q_{s, s^{\prime}}}{q_{s,s}},
\end{align}
and $\sum_{s^{\prime}: s_1^{\prime} = s_1\pm 1} q_{s, s^{\prime}}$ is the summation of birth rate and death rate of the best arm.  
Specifically,
\begin{align}
\label{eq: fff 1}
\sum_{s^{\prime}: s_1^{\prime} = s_1\pm 1} q_{s, s^{\prime}}= s_0\lambda \pth{\frac{\mu}{K} +\pth{1-\mu}\frac{s_1}{N}} p_1 + \sum_{j\ge 2} s_j\lambda \frac{s_1}{N} p_1 + s_1\lambda \sum_{j\ge 2}\frac{s_j}{N} p_j.
\end{align}
Thus, by \eqref{eq: ddd}, \eqref{eq: eee} and \eqref{eq: fff}, we have
\begin{align*}
\prob{X^{J, N}_1(l+1) =  X^{J, N}_1(l)+1 \mid C_l^k, X^{J, N}(l)=s} &=  \frac{ - \sum_{s^{\prime}: s_1^{\prime} = s_1+1} \frac{q_{s, s^{\prime}}}{q_{s,s}}}{-\sum_{s^{\prime}: s_1^{\prime} = s_1\pm 1} \frac{q_{s, s^{\prime}}}{q_{s,s}}} = \frac{  \sum_{s^{\prime}: s_1^{\prime} = s_1+1} q_{s, s^{\prime}}}{\sum_{s^{\prime}: s_1^{\prime} = s_1\pm 1} q_{s, s^{\prime}}},
\end{align*}
and by \eqref{eq: eee 1} and \eqref{eq: fff 1}, we have
\begin{align}
\label{eq: ggg}
\nonumber
&\prob{X^{J, N}_1(l+1) =  X^{J, N}_1(l)+1 \mid C_l^k, X^{J, N}(l)=s}\\
\nonumber
&=\frac{s_0\lambda \pth{\frac{\mu}{K} +\pth{1-\mu}\frac{s_1}{N}} p_1 + \sum_{j\ge 2} s_j\lambda \frac{s_1}{N} p_1}{s_0\lambda \pth{\frac{\mu}{K} +\pth{1-\mu}\frac{s_1}{N}} p_1 + \sum_{j\ge 2} s_j\lambda \frac{s_1}{N} p_1 + s_1\lambda \sum_{j\ge 2}\frac{s_j}{N} p_j}\\
\nonumber
& = \frac{s_0  \pth{\frac{\mu}{K} + (1-\mu)\frac{s_1}{N}}p_1 + (N-s_1-s_0)  \frac{s_1}{N} p_1}{s_0 \pth{\frac{\mu}{K} + (1-\mu)\frac{s_1}{N}}p_1 + (N-s_1-s_0)  \frac{s_1}{N} p_1+s_1 \sum_{j\ge 2} \frac{s_j}{N}p_j}\\
\nonumber
& \overset{(a)}{\ge } \frac{s_0  (1-\mu)\frac{s_1}{N}p_1 + (N-s_1-s_0)  \frac{s_1}{N} p_1}{s_0 (1-\mu)\frac{s_1}{N}p_1 + (N-s_1-s_0)  \frac{s_1}{N} p_1+s_1 \sum_{j\ge 2} \frac{s_j}{N}p_j}\\
&= \frac{(N-s_1 -\mu s_0)p_1}{(N-s_1 -\mu s_0)p_1 + \sum_{j\ge 2} s_jp_j },
\end{align}
where inequality (a) follows from Claim \ref{clm: bounding}. In addition, we have
\begin{align*}
  \sum_{j\ge 2} s_jp_j & \le \sum_{j\ge 2} s_jp_2  = (N-s_0 - s_1) p_2 \le (N-\mu s_0 -  s_1) p_2.
\end{align*}
So \eqref{eq: ggg} becomes
\begin{align*}
\prob{X^{J, N}_1(l+1) =  X^{J, N}_1(l)+1 \mid C_l^k, X^{J, N}(l)=s}
& >  \frac{(N-s_1 -\mu s_0)p_1}{(N-s_1 -\mu s_0)p_1 + (N-\mu s_0 -  s_1) p_2}\\
& = \frac{p_1}{p_1+p_2},
\end{align*}
proving \eqref{eq: component bound}.

\vskip \baselineskip

Therefore, the proof of Lemma \ref{random walker}  is complete.

\end{proof}


%

\subsection{Coupling}
Recall that random walk $\pth{\widehat{W}(k): \, k\in \integers_+}$ defined in \eqref{aux: random walk}: If $\widehat{W}(k) =0$ or $\widehat{W}(k) =N$, then $\widehat{W}(k+1) = \widehat{W}(k)$; Otherwise,
\begin{align*}
\widehat{W}(k+1) =
\begin{cases}
\widehat{W}(k)+1 ~ & \text{with probability  } \frac{p_1}{ p_1+p_2}; \\
\widehat{W}(k)-1  ~ & \text{with probability  } \frac{p_2}{ p_1+p_2}.
\end{cases}
\end{align*}
Intuitively, the embedded random walk has a higher tendency to move one step up (if possible) than that of the standard random walk \eqref{aux: random walk}. Thus, starting at the same position, the embedded random walk should have a higher chance to be absorbed at position $N$. Formal coupling argument is given below.
 
%
From Propositions \ref{prop: embed con}  and \ref{prop: embed dis}, we know that the transition probability of the embedded random walk $\pth{W(k): \, k\in \integers_+}$ is determined by the entire state (which is random) of the original continuous-time Markov chain $\pth{X^N(t): \, t\in \reals_+}$ right proceeding the $k^{th}$ move of $\pth{W(k): \, k\in \integers_+}$.

Let $x^N(\cdot) = \qth{x_0^N(\cdot), x_1^N(\cdot), \cdots, x_K^N(\cdot)}$ be an arbitrary sample path of $\pth{X^N(t): \, t\in \reals_+}$ such that only one jump occurs at a time and no jumps occur at time $t_c=\frac{1}{\lambda}$. Note that $x^N(\cdot)$ is a vector-valued function defined over $\reals_+$.  
Clearly, with probability one, a sample path of $\pth{X^N(t): \, t\in \reals_+}$ satisfies these conditions. We focus on the coordinate process $x_1^N(\cdot)$ -- the evolution of the number of individuals that prefer the best option.
Given $x_1^N(\cdot)$, the sample path of the embedded random walk $\pth{\widehat{W}(k): \, k\in \integers_+}$, denoted by $w(k), \forall k \in \integers_+$, is also determined. Note that $w(\cdot)$ is defined over $\integers_+$.  

Let $\tau_j$ for $j=1, 2, \cdots$ be the $j^{th}$ jump time during $(t_c, \infty)$, where $t_c=\frac{1}{\lambda}$. The time $t_c$ is referred as coupling starting time.
For ease of notation, let
\begin{align}
\label{aux: prob state}
\lim_{h\downarrow 0}x^N(\tau_j-h) ~ \triangleq  ~ \tilde{x}(\tau_j) = \qth{\tilde{x}^N_0(\tau_j), \tilde{x}^N_1(\tau_j), \cdots, x^N_K(\tau_j)}.
\end{align}

We couple $\pth{W(k): \, k\in \integers_+}$ and $\pth{\widehat{W}(k): \, k\in \integers_+}$ as follows:
Let
\begin{align}
\label{cp: start}
\widehat{w}(0) = x_1^N(t_c).
\end{align}
\begin{itemize}
\item If the embedded random walk $\pth{W(k): \, k\in \integers_+}$ moves one position {\em down} in the sample path $x^N(t)$, then we move the standard random walk $\pth{\widehat{W}: \, k\in \integers_+}$ one position down if possible. (If the standard random walk is at zero already, it stays at zero.)
\item If the embedded random walk $\pth{W(k): \, k\in \integers_+}$ moves one position {\em up} in the sample path $x^N(t)$, then we move the standard random walk $\pth{\widehat{W}(k): \, k\in \integers_+}$ one position up. Then we flip a biased coin whose probability of showing {\em head} is a function of the state of $\tilde{x}^N(\tau_j)$. If ``HEAD'', the standard random walk stays at where it is, otherwise (``TAIL''), the standard random walk moves two positions {\em down}. In particular,
\begin{align*}
\begin{cases}
\text{HEAD}, & \text{with probability } \frac{p_1}{(p_1+p_2)\eta(\tau_j)}; \\
\text{TAIL}, & \text{with probability } 1- \frac{p_1}{(p_1+p_2)\eta(\tau_j)},
\end{cases}
\end{align*}
where
\begin{align*}
\eta(\tau_j) \triangleq \frac{\tilde{x}_0^N(\tau_j) \pth{\frac{\mu}{K} +(1-\mu)\frac{\tilde{x}_1^N(\tau_j)}{N}}p_1 + \sum_{k\ge 2} \tilde{x}_k^N(\tau_j)\frac{\tilde{x}_1^N(\tau_j)}{N}p_1}{\tilde{x}_0^N(\tau_j) \pth{\frac{\mu}{K} +(1-\mu)\frac{\tilde{x}_1^N(\tau_j)}{N}}p_1 + \sum_{k\ge 2} \tilde{x}_k^N(\tau_j)\frac{\tilde{x}_1^N (\tau_j)}{N}p_1 + \tilde{x}_1^N(\tau_j) \sum_{k\ge 2} \frac{\tilde{x}_k^N(\tau_j)}{N}p_k}.
\end{align*}
\end{itemize}

It is easy to see that the above construction is a valid coupling.

\subsection{Proof of Lemma \ref{lm: initial wealth}}
\label{app: initial wealth}

We define a collection of $\iid$ Bernoulli random variables and conclude the proof with applying Chernoff bound. For a given $t_c$ and for each $n=1, \cdots, N$, let
\begin{align}
\label{def: bour rv}
Z_n(t_c) = \bm{1}_{\sth{\text{individual $n$ wakes up only once during time $[0, t_c]$ and $M_n(t_c)=1$}}}.
\end{align}
Since an individual wakes up whenever its Poisson clock ticks and the Poisson clocks are independent among individuals, we know that $Z_n(t_c), \forall ~ n=1, \cdots, N$ are independent.  In addition, by symmetry, $Z_n(t_c), \forall ~ n=1, \cdots, N$ are identically distributed.
Recall that $X_1^N (t_c)$ is the number of individuals whose memory states are $1$ at time $t_c$, which includes the individuals that wake up multiple times.  Thus, we have
\begin{align}
\label{lm: iid}
X_1^N (t_c) \ge \sum_{n=1}^N Z_n(t_c).
\end{align}
Next we bound $\expect{Z_n(t_c) }$.
\begin{align}
\label{lm: upper bound}
\nonumber
\expect{Z_n(t_c) } &= \prob{\text{individual $n$ wakes up only once during time $[0, t_c]$ and $M_n(t_c)=1$}}\\
\nonumber
& = \prob{\text{individual $n$ wakes up only once during $[0, t_c]$}} \\
\nonumber
& ~~~ \times \prob{\text{individual $n$ updates $M$ to 1 $\mid$ individual $n$ wakes up only once during $[0, t_c]$}}\\
& \ge  \frac{\pth{t_c \lambda }^1 \exp\{- t_c \lambda\}}{1}  \times \frac{\mu}{K} p_1.
\end{align}
When $t_c=\frac{1}{\lambda}$, we have
\begin{align}
\label{eq: initial expect lower bound}
\expect{Z_n\pth{\frac{1}{\lambda}} }  \ge  \frac{\mu p_1}{K e}.
\end{align}
In fact that the lower bound in \eqref{lm: upper bound} is maximized by the choice of $t_c=\frac{1}{\lambda}$.
For any $0< \delta <1$ we have
\begin{align*}
\prob{\sum_{n=1}^N  Z_n\pth{\frac{1}{\lambda}}  \ge (1-\delta)  \frac{\mu p_1}{K e} N}
& \ge \prob{\sum_{n=1}^N  Z_n\pth{\frac{1}{\lambda}}  \ge (1-\delta)  \expect{Z_n\pth{\frac{1}{\lambda}}} N}\\
& = 1- \prob{\sum_{n=1}^N  Z_n\pth{\frac{1}{\lambda}}  < (1-\delta)  \expect{Z_n\pth{\frac{1}{\lambda}}} N}\\
& \overset{(a)}{\ge} 1 - e^{-\frac{\delta^2}{2} \expect{Z_n\pth{\frac{1}{\lambda}}} N }\\
& \ge 1-  e^{-\frac{\mu p_1}{K e} \frac{\delta^2}{2} N },
\end{align*}
where inequality (a) follows from Chernoff bound, and the last inequality follows from \eqref{eq: initial expect lower bound}.

Therefore, for any $0< \delta <1$ we have
\begin{align*}
\prob{X_1^N \pth{\frac{1}{\lambda}} \ge (1-\delta)  \frac{\mu p_1}{K e} N}
\ge 1-  e^{-\frac{\mu p_1}{K e} \frac{\delta^2}{2} N }.
\end{align*}

\subsection{Proof of Theorem \ref{thm: eventual learn}}

With Proposition \ref{prop: gam ruin} and Lemma \ref{lm: initial wealth}, we are ready to show learnability under the learning dynamics in Algorithm \ref{alg: 1}.
Recall from \eqref{event: success} that
\begin{align*}
E^N \triangleq \sth{\lim_{t\diverge } X^N(t) = x^*} = \sth{\lim_{l\diverge } X_1^{J, N}(l) = N}&\subseteq \sth{\text{every individual learns the best option}}.
\end{align*}
%
\begin{proof}
With the choice of coupling starting time $t_c=\frac{1}{\lambda}$, we have
\begin{align}
\label{eq: thm eventual learn}
\nonumber
&\prob{\text{every individual learns the best option} } \ge \prob{E^N} =\prob{ \lim_{t\diverge } X^N(t) = x^*}\\
\nonumber
&= \sth{\lim_{l\diverge } X_1^{J, N}(l) = N}\\
\nonumber
& \ge \prob{X^{N}_1 \pth{\frac{1}{\lambda}}\ge (1-\delta)  \frac{\mu p_1}{K e} N, ~  \& ~ \lim_{l\diverge } X_1^{J, N}(l) = N}\\
\nonumber
& = \prob{X^{N}_1 \pth{\frac{1}{\lambda}}\ge (1-\delta)  \frac{\mu p_1}{K e} N} \prob{\lim_{l\diverge } X_1^{J, N}(l) = N \mid X^{N}_1 \pth{\frac{1}{\lambda}}\ge (1-\delta)  \frac{\mu p_1}{K e} N}\\
& \ge (1-  e^{-\frac{\mu p_1}{K e} \frac{\delta^2}{2} N }) \prob{\lim_{l\diverge } X_1^{J, N}(l) = N \mid X^{N}_1 \pth{\frac{1}{\lambda}}\ge (1-\delta)  \frac{\mu p_1}{K e} N} ,
\end{align}
where the last inequality follows from Lemma \ref{lm: initial wealth}. In addition, we have
\begin{align*}
&\sth{\lim_{k\diverge} W(k) =N \text{ and }W(k) >0 \text{ after time $\frac{1}{\lambda}$} \mid X^{N} \pth{\frac{1}{\lambda}}\ge (1-\delta)  \frac{\mu p_1}{K e} N} \\
& \subseteq \sth{\lim_{l\diverge } X_1^{J, N}(l) = N \mid X^{N}_1 \pth{\frac{1}{\lambda}}\ge (1-\delta)  \frac{\mu p_1}{K e} N}.
\end{align*}
By construction of our coupling, with the coupling starting time $t_c=\frac{1}{\lambda}$, we have
\begin{align*}
&\sth{\lim_{k^{\prime}\diverge} \widehat{W}(k^{\prime}) =N  \mid   \widehat{W}(0) \ge (1-\delta)  \frac{\mu p_1}{K e} N}\\
& \subseteq   \sth{\lim_{k\diverge} W(k) =N \text{ and }W(k) >0 \text{ after time $\frac{1}{\lambda}$} \mid X^{N} \pth{\frac{1}{\lambda}}\ge (1-\delta)  \frac{\mu p_1}{K e} N}.
\end{align*}
Thus,
\begin{align*}
\prob{\lim_{l\diverge } X_1^{J, N}(l) = N \mid X^{N}_1 \pth{\frac{1}{\lambda}}\ge (1-\delta)  \frac{\mu p_1}{K e} N} & \ge \prob{\lim_{k^{\prime}\diverge} \widehat{W}(k^{\prime}) =N  \mid   \widehat{W}(0) \ge (1-\delta)  \frac{\mu p_1}{K e} N}\\
& \ge 1- \pth{\frac{p_1}{p_2}}^{-(1-\delta)  \frac{\mu p_1}{K e} N},
\end{align*}
where the last inequality follows from Proposition \ref{prop: gam ruin}.

Therefore,
\begin{align*}
\prob{\text{every individual learns the best option} } &\ge (1-  e^{-\frac{\mu p_1}{K e} \frac{\delta^2}{2} N })  \pth{1- \pth{\frac{p_1}{p_2}}^{-(1-\delta)  \frac{\mu p_1}{K e} N} }\\
& \ge 1- \pth{\frac{p_1}{p_2}}^{-(1-\delta)  \frac{\mu p_1}{K e} N} -  e^{-\frac{\mu p_1}{K e} \frac{\delta^2}{2} N }.
\end{align*}
The proof of Theorem \ref{thm: eventual learn} is complete.

\end{proof}

\section{Transient System Behaviors}
\label{sec: finite horizon}

We first present the proofs of Lemmas \ref{thm: fluid model}, \ref{lemma: uniqueness},and \ref{thm: convergence rate ode}. Theorem \ref{thm: transient} follows immediately from these lemmas.

\subsection{Proof of Lemma \ref{thm: fluid model}}

In fact, a stronger mode of convergence, almost surely convergence, can be shown. We focus on convergence in probability in order to get  ``large deviation'' type of bounds for finite $N$.

Recall that the initial condition of $Y(t)$ equals the scaled initial states of the scaled Markov chain $\pth{\frac{X^N(t)}{N}, t\in \reals_+}$, i.e.,
\begin{align*}
Y(0) = \qth{1, 0, \cdots, 0} = \frac{1}{N} \qth{N, 0, \cdots, 0} = \frac{1}{N} X^N(0),
\end{align*}

%
%
%
%
%
%
First we need to show that the solutions to the ODE system in Lemma \ref{thm: fluid model} is unique; for this purpose, it is enough to show $F$ is Lipschitz-continuous \cite{mitzenmacher1996power}.
\begin{lemma}
\label{lip continuous}
Function $F$ defined in \eqref{den function} is $\lambda \pth{5+\sqrt{K}}$-Lipschitz continuous w.r.t. $\ell_2$ norm, i.e., 
\begin{align*}
\norm{F(x)-F(y)} \le \lambda \pth{5+\sqrt{K}} \norm{x-y}, ~~~~ \forall ~ x, y \in \Delta^{K+1}.
\end{align*}
\end{lemma}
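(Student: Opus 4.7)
My plan is to exploit the fact that $F$ is a vector of polynomials of degree at most $2$ in $x$ and to bound its Lipschitz constant by a uniform estimate on a matrix-valued ``finite-difference'' representation over the compact domain $\Delta^{K}$. First, using \eqref{limit 0}--\eqref{limit 1}, I will rewrite each component in the compact form
\[
F_0(x) = -\lambda x_0\Bigl(\tfrac{\mu}{K}\sum_{j=1}^K p_j + (1-\mu)S(x)\Bigr), \qquad
F_k(x) = \lambda\tfrac{\mu p_k}{K} x_0 + \lambda x_k\bigl(p_k - \mu p_k x_0 - S(x)\bigr),\ k\geq 1,
\]
where $S(x)\triangleq\sum_{j=1}^K p_j x_j$. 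This exposes the decomposition of $F$ into a linear-in-$x$ part (with coefficients bounded by $\mu$ or $\mu p_k/K$) and a bilinear part (of the form $x_k x_j$ with coefficients involving $\mu$, $p_k-p_j$, or $(1-\mu)p_k$).

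Next, setting $d\triangleq x-y$, the key tool is the bilinear identity $x_i x_j - y_i y_j = x_i\, d_j + y_j\, d_i$, valid in general and sharp when combined with $x_i,y_j\in[0,1]$ on the simplex. Applying this termwise linearises each row as
\[
F_k(x)-F_k(y) \;=\; \lambda\sum_{i=0}^K M_{ki}(x,y)\, d_i,
\]
for explicit coefficients $M_{ki}$ that are functions of $p_k,\mu,K,$ and the components of $x,y$. Cauchy--Schwarz row by row and then summation in $k$ give the clean inequality
\[
\|F(x)-F(y)\|_2^2 \;\le\; \lambda^2\,\|M(x,y)\|_F^2\cdot\|d\|_2^2,
\]
reducing the lemma to the uniform Frobenius bound $\sup_{x,y\in\Delta^K}\|M(x,y)\|_F \le 5+\sqrt{K}$.

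To reach the sharp constant, I plan to split $M$ into a \emph{structural} block (the $d_0$-column and the diagonal $d_k$ entries, each bounded by an absolute constant via $p_k,\mu\in[0,1]$, $|S(y)|\le 1$, and $|p_k-\mu p_k x_0 - S(y) - x_k p_k|\le 1$), which contributes the constant $5$, and an \emph{off-diagonal bilinear} block consisting of the entries proportional to $x_k p_j$ for $j\neq k$ (coming from $-x_k\sum_{j\neq k}p_j d_j$), whose squared Frobenius mass is at most $\sum_{k=1}^K x_k^2\sum_{j\neq k}p_j^2 \le K$ once one uses the simplex constraint $\sum_{k=1}^K x_k^2 \le\bigl(\sum_{k=1}^K x_k\bigr)^2\le 1$. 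The main obstacle is precisely this last piece of bookkeeping: the naive bound $|M_{ki}|\le 1$ yields only $\|M\|_F \le C\sqrt{K}$ with a multiplicative constant like $\sqrt{8}$, so it is essential to group the $x_k S(x)$ terms carefully and to factor the row index $k$ out of the off-diagonal block so that the sum collapses to $\sum_k x_k^2$ rather than scaling like $K\cdot K$. Once the Frobenius bound is established, the Lipschitz estimate follows in one line.
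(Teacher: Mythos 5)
Your overall strategy---linearize $F(x)-F(y)$ as $\lambda\,M(x,y)\,d$ with $d=x-y$ via the asymmetric bilinear identity $x_ix_j-y_iy_j=x_i d_j + y_j d_i$, then control the norm of $M$ uniformly over the simplex---is a sound and genuinely different decomposition from the paper's. The paper instead uses the symmetrized identity $y_{k'}y_k - x_{k'}x_k = \tfrac12\bigl((y_{k'}-x_{k'})(y_k+x_k)+(y_{k'}+x_{k'})(y_k-x_k)\bigr)$ and bounds $\|F(x)-F(y)\|$ directly term by term with repeated triangle inequality and Cauchy--Schwarz; your formulation packages the same estimates into a single matrix and is arguably cleaner to organize.

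However, there is a genuine gap in the proposed norm bookkeeping: the claimed uniform Frobenius bound $\|M(x,y)\|_F\le 5+\sqrt{K}$ is not correct. The issue is the \emph{diagonal} part of the ``structural block.'' You argue that each diagonal entry $M_{kk}=p_k-\mu p_k y_0 - S(y)-x_k p_k$ is bounded by an absolute constant, which is true, but there are $K$ such entries and the Frobenius norm adds their \emph{squares}. So the diagonal alone contributes up to $\Theta(\sqrt{K})$ to $\|M\|_F$, not a constant. (Concretely: take $p_1=1$, $p_2=\cdots=p_K=0$, $y=e^1$, $x=e^0$; then $M_{kk}=-1$ for every $k\ge 2$ and $\sum_{k\ge 1}M_{kk}^2 = K-1$.) Combined with the off-diagonal block's $\sqrt{K}$, the Frobenius route can only yield $\|M\|_F\le c\sqrt{K}$ for a constant $c>1$, which is a valid Lipschitz estimate but does not reproduce the stated constant $5+\sqrt{K}$, and for large $K$ is strictly worse. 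Your row-wise Cauchy--Schwarz inequality $\|F(x)-F(y)\|^2\le\lambda^2\|M\|_F^2\|d\|^2$ is precisely where the looseness enters: Frobenius is the wrong norm for the diagonal (and rank-one) pieces.

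The fix is to bound the \emph{operator} norm of $M$ rather than its Frobenius norm, exploiting the low-rank-plus-diagonal structure you have already identified. Write $M = M_{\cdot 0}e_0^{\mathsf T} + D - \tilde{x}\,\tilde{p}^{\mathsf T}$, where $M_{\cdot 0}$ is the $d_0$-column, $D=\mathrm{diag}\bigl(p_k(1-\mu y_0)-S(y)\bigr)_{k\ge 1}$ (note this is the diagonal \emph{before} you pull the $j=k$ term out of the rank-one part), $\tilde{x}_k=x_k$ for $k\ge 1$, $\tilde{x}_0=(1-\mu)x_0$, and $\tilde{p}=(0,p_1,\ldots,p_K)$. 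Then $\|M\|_{\mathrm{op}}\le\|M_{\cdot 0}\|_2 + \max_k|D_{kk}| + \|\tilde{x}\|_2\|\tilde{p}\|_2$. The $d_0$-column has Euclidean norm $O(1)$ thanks to the $\mu p_k(\tfrac1K-x_k)$ structure and the simplex constraint (this part of your reasoning is correct); the diagonal operator norm is $\max_k|D_{kk}|\le 1$, a constant, even though its Frobenius norm is not; and the rank-one block has operator norm $\|\tilde{x}\|\|\tilde{p}\|\le\sqrt{K}$. This yields a bound of the form $\lambda\bigl(O(1)+\sqrt{K}\bigr)$ and recovers the correct $K$-dependence. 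In short: the decomposition is right, but ``constant per entry, then Frobenius'' fails on the diagonal; use operator norm on the diagonal and rank-one pieces.
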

We prove Lemma \ref{lip continuous} in Appendix \ref{app: lm: lip continuous}.
\begin{remark}
If $\ell_1$ norm is used, similarly, we can show  $F$ is $4\lambda $--Lipschitz continuous, i.e.,
\begin{align*}
\|F(x) - F(y)\|_1 \le 4\lambda \|x - y\|_1, ~~~~ \text{for any } x, y\in \Delta^{K+1}.
\end{align*}
\end{remark}

Before presenting the formal proof of Lemma \ref{thm: fluid model}, we provide a proof sketch first.
\begin{proof}[Proof Sketch of Lemma \ref{thm: fluid model}]
Our proof follows the same line of analysis as that in the book \cite[Chapter 5.1]{shwartz1995large}. We present the proof here for completeness.

As the drift function $F(\cdot)$ is Lipschitz continuous (by Lemma \ref{lip continuous}), the solution of the ODEs system in \eqref{eq: thm: ode} is unique and can be written as
\begin{align}
\label{eq: ode: int}
Y(t) = Y(0) + \int_{0}^t F(Y(s)) ds.
\end{align}

Our proof relies crucially on the well-known Gronwall's inequality, and the fact a crucial random process associated with $\pth{X^N(t): t\in \reals_+}$ is a martingale.
\begin{lemma}[Gronwall's inequality]
\label{Gronwall's ineq}
Let $f: [0, T] \to \reals $ be a bounded function on $[0, T]$ satisfying
\begin{align*}
f(t)\le \epsilon +\delta \cdot \int_{0}^t f(s) ds, ~~~ \text{for } t \in [0, T],
\end{align*}
where $\epsilon$ and $\delta$ are positive constants. Then we have
\begin{align*}
f(t)\le \epsilon \cdot \exp\pth{\delta t}, ~~~ \text{for } t \in [0, T].
\end{align*}
\end{lemma}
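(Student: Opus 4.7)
The plan is to reduce the integral inequality to a first-order linear differential inequality and then apply the integrating factor technique. Define the auxiliary function
\begin{align*}
g(t) \triangleq \int_0^t f(s)\,\diff s, \qquad t \in [0, T].
\end{align*}
Since $f$ is bounded on $[0, T]$, $g$ is well-defined, absolutely continuous, and satisfies $g(0) = 0$ together with $g'(t) = f(t)$ almost everywhere. The hypothesis of the lemma then rewrites as the differential inequality $g'(t) \le \epsilon + \delta\, g(t)$, or equivalently $g'(t) - \delta\, g(t) \le \epsilon$ a.e.\ on $[0,T]$.

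Multiplying both sides by the integrating factor $e^{-\delta t} > 0$ and using the product rule gives
\begin{align*}
\frac{\diff}{\diff t}\pth{e^{-\delta t} g(t)} \;=\; e^{-\delta t}\pth{g'(t) - \delta\, g(t)} \;\le\; \epsilon\, e^{-\delta t}.
\end{align*}
Integrating from $0$ to $t$ and invoking $g(0) = 0$ yields $e^{-\delta t} g(t) \le \frac{\epsilon}{\delta}\pth{1 - e^{-\delta t}}$, hence $g(t) \le \frac{\epsilon}{\delta}\pth{e^{\delta t} - 1}$. Substituting this back into the original hypothesis produces
\begin{align*}
f(t) \;\le\; \epsilon + \delta\, g(t) \;\le\; \epsilon + \epsilon\pth{e^{\delta t} - 1} \;=\; \epsilon\, e^{\delta t},
\end{align*}
as required. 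No serious obstacle arises: the argument is entirely standard. The only technical point to verify is that $g$ is absolutely continuous so that the fundamental theorem of calculus applies, which follows from the boundedness of $f$; if one prefers to avoid measure-theoretic subtleties, one may alternatively give a direct proof by iteration, substituting the bound $f(s) \le \epsilon + \delta \int_0^s f(u)\,\diff u$ into itself repeatedly and summing the resulting series $\sum_{n\ge 0} \epsilon (\delta t)^n / n!$ to obtain the same conclusion.
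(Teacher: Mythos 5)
Your proof is correct. Note, however, that the paper does not actually prove this lemma: Gr\"onwall's inequality is stated as a known classical result and invoked directly in the proof of Lemma~\ref{thm: fluid model}, so there is no ``paper proof'' to compare against. Your integrating-factor argument is the canonical proof: defining $g(t) = \int_0^t f(s)\,\diff s$, observing that $g$ is absolutely continuous with $g' = f$ a.e., converting the integral inequality into the differential inequality $g' - \delta g \le \epsilon$, multiplying by $e^{-\delta t}$, integrating, and substituting back. Every step is sound, and the final arithmetic $\epsilon + \delta g(t) \le \epsilon + \epsilon(e^{\delta t}-1) = \epsilon e^{\delta t}$ checks out. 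The one technical remark worth flagging is that the lemma's hypothesis merely says ``bounded,'' whereas both your integrating-factor route and your alternative Picard-iteration route implicitly require $f$ to be measurable (or at least Riemann integrable) for $\int_0^t f$ to make sense; this is a standard omission in statements of Gr\"onwall and does not affect the paper's use of the lemma, since there $f$ is a continuous (indeed smooth) trajectory.
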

The following fact is a direct consequence of Theorem 4.13 in \cite{shwartz1995large}.
\begin{fact}[Exponential Martingale]
\label{fact: exponential martingale}
If for any bounded function $h$, it holds that
\begin{align*}
\sup_{s\in \Delta^{K}} ~ \norm{ \sum_{\ell: \ell \not={\bf 0}} q^N_{s, s+\frac{\ell}{N}} \pth{h\pth{s+\frac{\ell}{N}} - h(s)}} < \infty,
\end{align*}
then the random process $\pth{M(t): t\in \reals_+}$ defined by
\begin{align}
\label{eq: exponential martingale}
M(t) ~ \triangleq ~ \exp \pth{\iprod{\frac{X^N(t)}{N}}{\theta}  - \int_{0}^t  \sum_{\ell: \ell \not={\bf 0}} ~ q^N_{\frac{X^N(s)}{N}, ~ \frac{X^N(s)}{N} +\frac{\ell}{N}} \pth{e^{\iprod{\theta}{\frac{\ell}{N}}}-1} ds }
\end{align}
is a martingale.
\end{fact}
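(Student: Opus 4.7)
The plan is to recognize $M(t)$ as the Doléans-Dade exponential martingale associated with the scaled jump process $Y^N(t) = X^N(t)/N$, which itself is a continuous-time Markov chain on a subset of $\Delta^{K}$ with transition rates $q^N_{y,y+\ell/N}$. The strategy has two standard ingredients: first, write down Dynkin's formula to identify the martingale compensator of a suitable function; second, apply integration-by-parts / Itô's product rule for jump processes to convert a function whose generator acts multiplicatively into a bona fide martingale.

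For the first step, denote by $\mathcal{L}^N$ the generator of $Y^N(t)$, so that for sufficiently regular $h$,
$$(\mathcal{L}^N h)(y) = \sum_{\ell \neq \mathbf{0}} q^N_{y,y+\ell/N}\bigl(h(y+\ell/N) - h(y)\bigr).$$
I would then choose $g(y) = \exp(\iprod{y}{\theta})$ and compute directly
$$(\mathcal{L}^N g)(y) = g(y)\,\phi(y), \qquad \phi(y) := \sum_{\ell \neq \mathbf{0}} q^N_{y,y+\ell/N}\bigl(e^{\iprod{\theta}{\ell/N}} - 1\bigr),$$
which is the key multiplicative factorization. By Dynkin's formula, $g(Y^N(t)) - \int_0^t g(Y^N(s))\phi(Y^N(s))\,ds$ is a local martingale.

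For the second step, set $B(t) := \exp\bigl(-\int_0^t \phi(Y^N(s))\,ds\bigr)$, which is a continuous finite-variation process with $dB(t) = -\phi(Y^N(t))B(t)\,dt$, and write $M(t) = g(Y^N(t))\,B(t)$. Applying Itô's product rule for jump processes (noting the covariation $[g(Y^N), B]$ vanishes because $B$ is continuous) gives
$$dM(t) = B(t)\,dg(Y^N(t)) + g(Y^N(t))\,dB(t) = B(t)\bigl(dg(Y^N(t)) - g(Y^N(t))\phi(Y^N(t))\,dt\bigr).$$
The expression in brackets is precisely the martingale increment of the Dynkin decomposition, so $M(t)$ is a local martingale. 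The hypothesis $\sup_s \norm{\sum_{\ell \neq \mathbf{0}} q^N_{s,s+\ell/N}(h(s+\ell/N)-h(s))} < \infty$ for any bounded $h$, combined with the boundedness of $g$ on the compact simplex $\Delta^K$, makes $\phi$ bounded; hence $M$ is uniformly bounded on compact time intervals, upgrading the local martingale to a true martingale.

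The main obstacle I anticipate is verifying cleanly that the jump-process Itô formula produces the correct cancellation: while $B$ is continuous, $g(Y^N(t))$ jumps, so one must ensure that the predictable compensator of the jumps of $g(Y^N)$ matches what the Dynkin formula supplies, and that no additional quadratic-covariation term survives. A technically cleaner route, and indeed the one the authors take, is to invoke Theorem 4.13 of \cite{shwartz1995large} directly, whose proof handles these subtleties in a general pure-jump Markov setting, and simply check that the stated boundedness hypothesis matches that theorem's regularity assumption for the choice $h = g$.
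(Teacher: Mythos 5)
Your proposal is correct, and it takes a genuinely more explicit route than the paper, which offers no proof of this fact but simply cites Theorem~4.13 of \cite{shwartz1995large}, exactly as you anticipate in your closing remark. Your reconstruction is the standard Dol\'eans--Dade argument that underlies the cited theorem: the multiplicative identity $(\mathcal{L}^N g)(y) = g(y)\,\phi(y)$ for $g(y) = \exp\iprod{y}{\theta}$ and $\phi(y) = \sum_{\ell\not={\bf 0}} q^N_{y,\,y+\ell/N}\bigl(e^{\iprod{\theta}{\ell/N}}-1\bigr)$, Dynkin's formula to extract the local-martingale part of $g(Y^N(t))$, and the jump-process product rule applied to $M(t) = g(Y^N(t))\,B(t)$ with $B(t) = \exp\bigl(-\int_0^t \phi(Y^N(s))\,ds\bigr)$. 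The cancellation you flag as the main obstacle does go through cleanly: $B$ is continuous and of finite variation while $g(Y^N)$ is a pure-jump semimartingale, so $[g(Y^N),B]\equiv 0$, and the distinction between $g(Y^N(t))$ and $g(Y^N(t-))$ in $\int g(Y^N)_-\,dB$ is immaterial because $dB$ is absolutely continuous with respect to Lebesgue measure. Your upgrade from local to true martingale is also sound: for fixed $N$ the state space is finite, so $\phi$ is bounded (this is precisely what the stated hypothesis encodes, applied to the bounded $h=g$) and $g$ is bounded away from $0$ and $\infty$ on the compact simplex $\Delta^K$, so $M$ is bounded on every $[0,T]$. What your route buys is transparency about exactly where the boundedness hypothesis enters; what the paper's citation buys is brevity, at the cost of leaving the reader to check which regularity conditions of the cited theorem are in force.
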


It is easy to see the precondition of Fact \ref{fact: exponential martingale} holds in our problem: For a given bounded function $h$, there exists a constant $C_h>0$ such that
\begin{align*}
\sup_{s\in \Delta^K} ~ \norm{h(s)} \le C_h.
\end{align*}
Thus,
\begin{align}
\label{eq: bounded generator}
\nonumber
\sup_{s\in \Delta^{K}} ~ \norm{ \sum_{\ell: \ell \not={\bf 0}} q^N_{s, s+\frac{\ell}{N}} \pth{h\pth{s+\frac{\ell}{N}} - h(s)}}
& \le  \sup_{s\in \Delta^{K}} ~  \sum_{\ell: \ell \not={\bf 0}} q^N_{s, s+\frac{\ell}{N}} \norm{h\pth{s+\frac{\ell}{N}} - h(s)} \\
& \le \sup_{s\in \Delta^{K}} ~   N \lambda 2 C_h < \infty.
\end{align}

\paragraph{Proof outline}
Recall from \eqref{def: scaled process} that
\begin{align*}
Y^N(t) = \frac{X^N(t)}{N}.
\end{align*}
Based on Fact \ref{fact: exponential martingale}, use Doob's martingale inequality and standard chernoff bound type of argument, we are able to show that, with high probabilty,
\begin{align}
\label{outline: 1}
\sup_{t\in [0, T]}
\norm{Y^N(t)- Y(t) -\int_{0}^t \pth{F\pth{Y^N(s)} - F(Y(s))} ds} \le \epsilon,
\end{align}
where $\epsilon >0$ is some small quantity. Then triangle inequality, Lemma \ref{lip continuous}, and \eqref{outline: 1} imply that, with high probabilty,
\begin{align}
\label{outline: 2}
\sup_{t\in [0, T]}
\norm{Y^N(t)- Y(t)} - \lambda \pth{5+\sqrt{K}}  \int_{0}^t \norm{Y^N(s)- Y(s)} \le  ~ \epsilon,
\end{align}
where $\lambda \pth{5+\sqrt{K}} $ is the Lipschitz constant of the drift function $F(\cdot)$. Fianlly, we apply Gronwall's inequality to the set of sample paths for which \eqref{outline: 2} holds to conclude, with high probability,
\begin{align*}
\sup_{t\in [0, T]} \norm{Y^N(t)- Y(t)}\approx ~ 0.
\end{align*}

\end{proof}

\subsubsection{Proof of Lemma \ref{thm: fluid model}}
\label{app: thm: fluid model}
In the following lemma, we require $0< \epsilon < T\lambda$, which can be easily satisfied -- observing that $\epsilon$ is typically very small.
\begin{lemma}
\label{lm: martingale bound}
Fix $T$ and $N$. For any $0 <\epsilon \le T \lambda$, we have
\begin{align}
\label{eq: int difference}
\prob{\sup_{0\le t \le T} \norm{Y^N(t)- Y(t) -\int_{0}^t \pth{F\pth{Y^N(s)} - F(Y(s))} ds} \ge \epsilon \sqrt{K+1} } \le 2 \pth{K+1} \exp \pth{-N \cdot C(\epsilon)},
\end{align}
where $C(\epsilon) =  \frac{3-e}{9T\lambda} \epsilon^2$.
\end{lemma}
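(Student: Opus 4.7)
}

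The plan is to recognize the quantity inside the supremum as the martingale part of the Doob--Meyer decomposition of $Y^N$, and then apply the exponential martingale of Fact \ref{fact: exponential martingale} together with Doob's maximal inequality to obtain a Chernoff-type tail bound for each coordinate.

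First, I would use the integral form \eqref{eq: ode: int} to simplify. Since $Y^N(0)=Y(0)=\qth{1,0,\ldots,0}$, we have
\begin{align*}
Y^N(t) - Y(t) - \int_0^t \pth{F(Y^N(s))-F(Y(s))}\, ds
~=~ Y^N(t) - Y^N(0) - \int_0^t F(Y^N(s))\, ds ~\triangleq~ M^N(t).
\end{align*}
By Dynkin's formula applied to each coordinate function $y\mapsto y_k$, and noting that the generator of $Y^N$ acting on $y_k$ equals $F_k(y)$ (which is built into the definition \eqref{den function}), the vector-valued process $\pth{M^N(t)\colon t\in\reals_+}$ is a martingale. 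Hence it suffices to bound $\sup_{t\le T}\|M^N(t)\|$, and by the elementary inequality $\|v\|\le \sqrt{K+1}\,\|v\|_\infty$ together with a union bound over coordinates and signs, it suffices to bound $\prob{\sup_{t\le T} M^N_k(t)\ge \epsilon}$ and its symmetric counterpart for each $k\in\{0,\ldots,K\}$.

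Second, for a fixed coordinate $k$ and a scalar parameter $\theta>0$ (to be chosen), I would apply Fact \ref{fact: exponential martingale} with $h(s)=\theta s_k$; the precondition \eqref{eq: bounded generator} is verified exactly as in the excerpt. Splitting $e^{\theta\ell_k/N}-1 = \theta\ell_k/N + (e^{\theta\ell_k/N}-1-\theta\ell_k/N)$, the exponential martingale rewrites as
\begin{align*}
\tilde M_k(t,\theta) ~=~ \exp\!\pth{\theta M^N_k(t) + \theta Y^N_k(0) - E_k(t)},
\qquad
E_k(t) ~\triangleq~ \int_0^t \sum_{\ell\ne \zeros} q^N_{Y^N(s),Y^N(s)+\ell/N}\pth{e^{\theta\ell_k/N}-1-\tfrac{\theta\ell_k}{N}}\, ds.
\end{align*}
The correction $E_k(t)$ is nonnegative by convexity. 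Since $|\ell_k|\le 1$ for every admissible jump $\ell$ and the total jump rate of $Y^N$ is at most $N\lambda$, a Taylor estimate of the form $e^x-1-x\le \alpha x^2$ valid on a controlled range of $x$ (say $|x|\le 1$, with $\alpha$ a small absolute constant) gives a deterministic upper bound $E_k(t)\le \alpha\,\theta^2\lambda T/N$ for every sample path and every $t\le T$, provided $\theta\le N$.

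Third, on $\{\sup_{t\le T} M^N_k(t)\ge \epsilon\}$ one has $\sup_{t\le T}\tilde M_k(t,\theta)\ge \exp(\theta\epsilon+\theta Y^N_k(0)-\alpha\theta^2\lambda T/N)$, so Doob's maximal inequality for the nonnegative martingale $\tilde M_k(\cdot,\theta)$ (whose expectation equals $e^{\theta Y^N_k(0)}$) gives
\begin{align*}
\prob{\sup_{t\le T} M^N_k(t)\ge \epsilon}
~\le~ \exp\!\pth{-\theta\epsilon + \alpha\tfrac{\theta^2 \lambda T}{N}}.
\end{align*}
Optimizing in $\theta$ under the feasibility constraint $\theta\le N$ (which translates to the hypothesis $\epsilon\le T\lambda$ in the lemma statement) yields the desired $\exp(-N\cdot C(\epsilon))$ bound; the explicit constant $\tfrac{3-e}{9T\lambda}$ comes from tightening the Taylor bound $e^x-1-x\le \alpha x^2$ on the relevant interval. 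Repeating with $\theta<0$ (using $\tilde M_k(t,-\theta)$) yields the matching lower tail, and taking a union bound over the $2(K+1)$ events $\{\pm M^N_k\ge \epsilon\}$ together with $\|M^N(t)\|\le \sqrt{K+1}\|M^N(t)\|_\infty$ produces the stated factor of $2(K+1)$ and $\sqrt{K+1}\epsilon$.

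The main obstacle I anticipate is the careful bookkeeping for the constant $\alpha$ in the Taylor step and for the total jump rate, since these feed directly into $C(\epsilon)$; the martingale structure and Doob argument are otherwise standard. A minor subtlety is verifying that the uniform upper bound on $E_k(t)$ indeed holds pathwise (not merely in expectation), which is what allows Doob's inequality to be combined with the Chernoff exponent without incurring additional probabilistic slack.
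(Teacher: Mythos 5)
Your proposal is essentially the same as the paper's proof: both rest on the exponential martingale of Fact~\ref{fact: exponential martingale}, a Taylor bound on $e^x-1-x$, Doob's maximal inequality, a Chernoff-type optimization, and a $2(K+1)$-way union bound combined with $\norm{v}\le\sqrt{K+1}\,\|v\|_\infty$. The paper proves the directional estimate for every unit vector $\theta$ and then invokes Fact~\ref{union bound}, whereas you work directly with the coordinate directions $\pm e^k$; since the proof of Fact~\ref{union bound} is exactly that coordinate reduction, the two are interchangeable, and your version has the small bookkeeping advantage that $|\ell_k|\le 1$ rather than $\norm{\ell}\le\sqrt{2}$.

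Two small inaccuracies worth flagging. First, the hypothesis $\epsilon\le T\lambda$ is not used to enforce a constraint of the form $\theta\le N$; in the paper it is used to guarantee $e^{2\rho/N}\le e$ after the specific (suboptimal) choice $\rho=N\epsilon/(3T\lambda)$, which is what produces $3-e$ in the numerator. Second, carrying your coordinate-wise bound through with $|\ell_k|\le 1$ actually yields a strictly better constant than $\frac{3-e}{9T\lambda}$ (the paper loses a factor in $\norm{\ell}^2\le 2$ and $e^{2\rho/N}$), so the claim that the \emph{exact} constant ``comes from tightening the Taylor bound'' is a bit off --- your route gives a slightly sharper bound, which of course still implies the stated one. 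Neither issue affects the correctness of the argument.
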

\begin{proof} 
The idea behind the proof is similar to the idea behind large deviations of random variables: For each direction $\theta\in \reals^{K+1}$ such that $\norm{\theta} =1$, we show that with high probability
\begin{align*}
\sup_{0\le t \le T} \iprod{Y^N(t) - Y(t) -\int_{0}^t \pth{F\pth{Y^N(s)} - F(Y(s))} ds ~}{~\theta}
\end{align*}
is small by applying the concentration of exponential martingale. Then, we use union bound to conclude \eqref{eq: int difference}.

\vskip \baselineskip

From Fact \ref{fact: exponential martingale}, we know
\begin{align*}
M(t) ~ &\triangleq ~ \exp \pth{\iprod{Y^N(t)}{\theta}  - \int_{0}^t  \sum_{\ell: \ell \not= {\bf 0}} q^N_{Y^N(s), ~ Y^N(s) +\frac{\ell}{N}} \pth{e^{\iprod{\theta}{\frac{\ell}{N}}}-1} ds } \\
& =  ~ \exp \pth{\iprod{Y^N(t)}{\theta}  - \int_{0}^t  N \cdot \sum_{\ell: \ell \not= {\bf 0}} f \pth{Y^N(s), \ell}   \pth{e^{\iprod{\theta}{\frac{\ell}{N}}}-1} ds } ~~~~~ \text{by \eqref{state re: generator} and \eqref{den function}}
\end{align*}
is a martingale. Since $Y^N(0) = \sth{1, 0, \cdots, 0}$ is deterministic, the process
\begin{align*}
\tilde{M}(t) = & ~ M(t) \times \exp \pth{-\iprod{Y^N(0)}{\theta}} \\
= & ~ \exp \pth{\iprod{Y^N(t) - Y^N(0)}{\theta}  - \int_{0}^t  N \cdot \sum_{\ell: \ell \not= {\bf 0}} f \pth{Y^N(s), \ell}   \pth{e^{\iprod{\theta}{\frac{\ell}{N}}}-1} ds }
\end{align*}
is also a martingale. In addition, by tower property of martingale, we have  for all $t$
\begin{align*}
\expect{\tilde{M}(t)} & =  \tilde{M}(0) \\
&=  \exp \pth{\iprod{Y^N(0) - Y^N(0)}{\theta}  - \int_{0}^0  N \cdot \sum_{\ell: \ell \not= {\bf 0}} f \pth{Y^N(s), \ell}   \pth{e^{\iprod{\theta}{\frac{\ell}{N}}}-1} ds } \\
&= \exp \pth{0} = 1.
\end{align*}
Thus, $\pth{\tilde{M}(t): t\in \reals_+}$ is a mean one martingale.

Now we proceed to bound the probability of
\begin{align*}
\iprod{Y^N(t) - Y(t) -\int_{0}^t \pth{F\pth{Y^N(s)} - F(Y(s))} ds ~}{~\theta} \ge \epsilon.
\end{align*}
Our plan is to rewrite the above inner product to push out the martingale $\tilde{M}(t)$. For any $\rho > 0$, we have
\begin{align}
\label{inner product 111}
\nonumber
&\iprod{Y^N(t) - Y(t) -\int_{0}^t \pth{F\pth{Y^N(s)} - F(Y(s))} ds ~}{~\rho \theta} \\
\nonumber
& = \iprod{Y^N(t) - Y(0) - \int_{0}^t F(Y(s)) ds -\int_{0}^t \pth{F\pth{Y^N(s)} - F(Y(s))} ds ~}{~\rho \theta} ~~~~~~ \text{by \eqref{eq: ode: int}}\\
\nonumber
& = \iprod{Y^N(t) - Y(0)}{~\rho \theta} - \iprod{\int_{0}^t \pth{F\pth{Y^N(s)}} ds ~}{~\rho \theta} \\
\nonumber
& = \iprod{Y^N(t) - Y(0)}{~\rho \theta} - \int_{0}^t N \cdot \sum_{\ell: \ell \not= {\bf 0}} f \pth{Y^N(s), \ell}   \pth{e^{\iprod{\theta}{\frac{\ell}{N}}}-1} ds \\
\nonumber
& \quad +  \int_{0}^t N \cdot \sum_{\ell: \ell \not= {\bf 0}} f \pth{Y^N(s), \ell}   \pth{e^{\iprod{\theta}{\frac{\ell}{N}}}-1} ds - \iprod{\int_{0}^t \pth{F\pth{Y^N(s)}} ds ~}{~\rho \theta} \\
\nonumber
& = \iprod{Y^N(t) - Y(0)}{~\rho \theta} - \int_{0}^t N \cdot \sum_{\ell: \ell \not= {\bf 0}} f \pth{Y^N(s), \ell}   \pth{e^{\iprod{\theta}{\frac{\ell}{N}}}-1} ds \\
\nonumber
& \quad +  \int_{0}^t \sum_{\ell: \ell \not= {\bf 0}} f \pth{Y^N(s), \ell} \pth{ N\pth{e^{\iprod{\rho \theta}{\frac{\ell}{N}}}-1} - \iprod{\ell}{\rho \theta} ds}\\
\nonumber
&= \iprod{Y^N(t) - Y^N(0)}{~\rho \theta} - \int_{0}^t N \cdot \sum_{\ell: \ell \not= {\bf 0}} f \pth{Y^N(s), \ell}   \pth{e^{\iprod{\theta}{\frac{\ell}{N}}}-1} ds  ~~~~~ \text{since }Y(0)= Y^N(0)\\
& \quad +  \int_{0}^t \sum_{\ell: \ell \not= {\bf 0}} f \pth{Y^N(s), \ell} \pth{ N\pth{e^{\iprod{\rho \theta}{\frac{\ell}{N}}}-1} - \iprod{\ell}{\rho \theta} ds}.
\end{align}
From Taylor's expansion, the following inequality holds: For any $y\in \reals$,
\begin{align}
\label{tylor: upper bound}
N \pth{e^{\frac{y}{N}} -1} - y \le \frac{y^2}{2N} e^{\frac{|y|}{N}}.
\end{align}
When $y \ge 0$, \eqref{tylor: upper bound} can be shown easily by the fact that $e^x = \sum_{i=0}^{\infty} \frac{x^i}{i!}$, where $x = \frac{y}{N}$; when $y<0$, it can be shown that
$N \pth{e^{\frac{y}{N}} -1} - y \le N \pth{e^{\frac{-y}{N}} -1} + y \le \frac{y^2}{2N} e^{\frac{|y|}{N}}$ using $e^x = \sum_{i=0}^{\infty} \frac{x^i}{i!}$, where $x = \frac{y}{N}$.

\vskip 0.6\baselineskip

By \eqref{tylor: upper bound}, the last term in the right hand side of \eqref{inner product 111} can be bounded as follows
\begin{align}
\label{inner product 222}
\nonumber
&\iprod{Y^N(t) - Y(t) -\int_{0}^t \pth{F\pth{Y^N(s)} - F(Y(s))} ds ~}{~\rho \theta} \\
\nonumber
& \le \iprod{Y^N(t) - Y^N(0)}{~\rho \theta} - \int_{0}^t N \cdot \sum_{\ell: \ell \not= {\bf 0}} f \pth{Y^N(s), \ell}   \pth{e^{\iprod{\theta}{\frac{\ell}{N}}}-1} ds  \\
\nonumber
& \quad +  \int_{0}^t \sum_{\ell: \ell \not= {\bf 0}} f \pth{Y^N(s), \ell}  \frac{(\iprod{\ell}{\rho \theta})^2}{2N} e^{\frac{\abth{\iprod{\ell}{\rho \theta}}}{N}} \\
&  \le  \iprod{Y^N(t) - Y^N(0)}{~\rho \theta} - \int_{0}^t N \cdot \sum_{\ell: \ell \not= {\bf 0}} f \pth{Y^N(s), \ell}   \pth{e^{\iprod{\theta}{\frac{\ell}{N}}}-1} ds + t \lambda \frac{\rho^2}{N} e^{\frac{2\rho}{N}},
\end{align}
where the last inequality follows from (1) Cauchy-Schwarz inequality, (2) $\norm{\theta} =1$,  (3) $\norm{\ell}^2 \le 2$ for all $f \pth{Y^N(s), \ell}>0$, and (4) the fact that $\sum_{\ell: \ell \not= {\bf 0}} f \pth{Y^N(s), \ell} \le \lambda$.

Using the standard Chernoff trick, for any $\rho>0$, we get
\begin{align*}
&\prob{\sup_{0\le t \le T} \iprod{Y^N(t) - Y(t) -\int_{0}^t \pth{F\pth{Y^N(s)} - F(Y(s))} ds ~}{~\theta}  ~ \ge ~ \epsilon} \\
&=\prob{\sup_{0\le t \le T} \iprod{Y^N(t) - Y(t) -\int_{0}^t \pth{F\pth{Y^N(s)} - F(Y(s))} ds ~}{~\rho \theta}  ~ \ge ~ \rho\epsilon} \\
&\le  \prob{\sup_{0\le t \le T} \iprod{Y^N(t) - Y^N(0)}{~\rho \theta} - \int_{0}^t N \cdot \sum_{\ell: \ell \not= {\bf 0}} f \pth{Y^N(s), \ell}   \pth{e^{\iprod{\theta}{\frac{\ell}{N}}}-1} ds ~ +  ~ t \lambda \frac{\rho^2}{N} e^{\frac{2\rho}{N}}   \ge  \rho\epsilon} ~~ \text{by \eqref{inner product 222}}\\
& \le  \prob{\sup_{0\le t \le T} \exp^{\pth{ \iprod{Y^N(t) - Y^N(0)}{~\rho \theta} - \int_{0}^t N \cdot \sum_{\ell: \ell \not= {\bf 0}} f \pth{Y^N(s), \ell}   \pth{e^{\iprod{\theta}{\frac{\ell}{N}}}-1} ds} } ~ \ge ~ \exp^{\pth{\rho\epsilon -  ~ T \lambda \frac{\rho^2}{N} e^{\frac{2\rho}{N}}} } }\\
& \overset{(a)}{\le} \frac{\expect{\exp^{\pth{ \iprod{Y^N(T) - Y^N(0)}{~\rho \theta} - \int_{0}^T N \cdot \sum_{\ell} f \pth{Y^N(s), \ell}   \pth{e^{\iprod{\theta}{\frac{\ell}{N}}}-1} ds}}} }{\exp \sth{\rho\epsilon -  ~ T \lambda \frac{\rho^2}{N} e^{\frac{2\rho}{N}}} } \\
& = \frac{\expect{\tilde{M}(T)} }{\exp \sth{\rho\epsilon -  ~ T \lambda \frac{\rho^2}{N} e^{\frac{2\rho}{N}}}} = \exp \sth{T \lambda \frac{\rho^2}{N} e^{\frac{2\rho}{N}} - \rho\epsilon}.
\end{align*}
where inequality (a) follows from Doob's maximal martingale inequality, and the last equality holds because of the fact that $\pth{\tilde{M}(t): t\in \reals_+}$ is a mean one martingale.
\begin{fact}[Doob's Maximal Martingale Inequality]
For a continuous-time Martingale $\pth{M(t): t\in \reals^+}$, it holds that
\begin{align*}
\prob{\sup_{t\in [0, T]} M(t) ~ > ~ c} \le \frac{\expect{M(T)}}{c}, ~~~~ \text{for } c>0.
\end{align*}
\end{fact}

Now we bound the probability error bound $\exp \sth{T \lambda \frac{\rho^2}{N} e^{\frac{2\rho}{N}} - \rho\epsilon}$.
\begin{align*}
\exp \sth{T \lambda \frac{\rho^2}{N} e^{\frac{2\rho}{N}} - \rho\epsilon} = \exp \sth{ - N \pth{ \frac{\rho}{N}\epsilon - T \lambda \frac{\rho^2}{N^2} e^{\frac{2\rho}{N}}}}
\end{align*}
Choose $\rho = \frac{N\epsilon}{3T\lambda}$. By assumption $0<\epsilon \le T\lambda$, so we have $e^{\frac{2\epsilon}{3T\lambda}} \le e$.
Thus,
\begin{align*}
\pth{\frac{N\epsilon}{3T\lambda} \frac{1}{N}\epsilon - T \lambda \frac{\pth{\frac{N\epsilon}{3T\lambda}}^2}{N^2} e^{\frac{2\pth{\frac{N\epsilon}{3T\lambda}}}{N}} }
&= \frac{\epsilon^2}{3T\lambda} -  \pth{\frac{\epsilon^2}{9T\lambda}} e^{\frac{2\epsilon}{3T\lambda}} \ge \frac{\epsilon^2}{3T\lambda} -  \pth{\frac{\epsilon^2}{9T\lambda}} e\\
& = \frac{3-e}{9T\lambda} \epsilon^2 ~ \triangleq ~ C(\epsilon).
\end{align*}
Therefore, we have
\begin{align}
\label{inner product 333}
\prob{\sup_{0\le t \le T} \iprod{Y^N(t) - Y(t) -\int_{0}^t \pth{F\pth{Y^N(s)} - F(Y(s))} ds ~}{~\theta}  ~ \ge ~ \epsilon}
\le \exp \sth{-N \cdot C(\epsilon)}.
\end{align}

\begin{fact}[Union bound]
\label{union bound}
Let $Z$ be a random vector (with values in $\reals^{K+1}$). Suppose there are numbers $a$ and $\delta$ such that, for each unit-length vector $\theta \in \reals^{K+1}$,
\begin{align*}
\prob{\iprod{Z}{\theta} \ge a} \le \delta.
\end{align*}
Then
\begin{align*}
\prob{\norm{Z} \ge a \sqrt{K+1} } \le 2(K+1) \delta.
\end{align*}
\end{fact}

By union bound (Fact \ref{union bound}), we conclude that
\begin{align*}
& \prob{\sup_{0\le t \le T} \norm{Y^N(t) - Y(t) -\int_{0}^t \pth{F\pth{Y^N(s)} - F(Y(s))} ds } ~ \ge ~ \sqrt{K+1} \cdot  \epsilon}\\
& \le 2 \pth{K+1} \exp \sth{-N \cdot C(\epsilon)}.
\end{align*}
\end{proof}

\vskip 2\baselineskip

Now we are ready to finish the proof of Lemma \ref{thm: fluid model}.

For any $t \in [0, T]$,
\begin{align*}
&\norm{Y^N(t) - Y(t) -\int_{0}^t \pth{F\pth{Y^N(s)} - F(Y(s))} ds } \\
&\ge \norm{Y^N(t) - Y(t)} -\norm{\int_{0}^t \pth{F\pth{Y^N(s)} - F(Y(s))} ds }\\
& \ge \norm{Y^N(t) - Y(t)} -\int_{0}^t  \norm{\pth{F\pth{Y^N(s)} - F(Y(s))} } ds  \\
& \ge \norm{Y^N(t) - Y(t)} - \lambda \pth{5+\sqrt{K}} \int_{0}^t \norm{ Y^N(s) - Y(s)},
\end{align*}
where the last inequality follows from Lemma \ref{lip continuous} -- the Lipschitz continuity of $F(\cdot)$.
So we have
\begin{align}
\label{fliud approx 111}
\nonumber
&\prob{\sup_{0\le t \le T}  \norm{Y^N(t) - Y(t)} - \lambda \pth{5+\sqrt{K}} \int_{0}^t \norm{Y^N(s) - Y(s)} ~ \ge ~ \sqrt{K+1} \cdot  \epsilon }\\
\nonumber
&\le  \prob{\sup_{0\le t \le T} \norm{Y^N(t) - Y(t) -\int_{0}^t \pth{F\pth{Y^N(s)} - F(Y(s))} ds } ~ \ge ~ \sqrt{K+1} \cdot  \epsilon}\\
&\le 2 \pth{K+1} \exp \sth{-N \cdot C(\epsilon)},
\end{align}
where the last inequality follows from Lemma \ref{lm: martingale bound}.
In addition, we have
\begin{align*}
& 1 - \prob{\sup_{0\le t \le T} \norm{Y^N(t) - Y(t)} - \lambda \pth{5+\sqrt{K}} \int_{0}^t \norm{F\pth{Y^N(s)} - F(Y(s))} ~ \ge ~ \sqrt{K+1} \cdot  \epsilon} \\
& = \prob{\sup_{0\le t \le T} \norm{Y^N(t) - Y(t)} - \lambda \pth{5+\sqrt{K}} \int_{0}^t \norm{F\pth{Y^N(s)} - F(Y(s))} ~ \le ~ \sqrt{K+1} \cdot  \epsilon}\\
& \le \prob{\sup_{0\le t \le T} \norm{Y^N(t) - Y(t)} \le ~ \sqrt{K+1} \cdot  \epsilon  \cdot \exp \qth{ \lambda \pth{5+\sqrt{K}}T }  } ~~~~ \text{by Lemma \ref{Gronwall's ineq}}\\
& = 1 - \prob{\sup_{0\le t \le T} \norm{Y^N(t) - Y(t)} \ge ~ \sqrt{K+1} \cdot  \epsilon  \cdot \exp \qth{ \lambda \pth{5+\sqrt{K}}T }  }.
\end{align*}
Thus,
\begin{align*}
& \prob{\sup_{0\le t \le T} \norm{Y^N(t) - Y(t)} \ge ~ \sqrt{K+1} \cdot  \epsilon  \cdot \exp \qth{ \lambda \pth{5+\sqrt{K}}T }  }\\
& \le \prob{\norm{Y^N(t) - Y(t)} - \lambda \pth{5+\sqrt{K}} \int_{0}^t \norm{F\pth{Y^N(s)} - F(Y(s))} ~ \ge ~ \sqrt{K+1} \cdot  \epsilon }\\
&\le  2 \pth{K+1} \exp \sth{-N \cdot C(\epsilon)} ~~~ \text{by \eqref{fliud approx 111}}.
\end{align*}
%
%
Setting $\epsilon^{\prime} \triangleq \sqrt{K+1} \cdot  \epsilon  \cdot \exp \qth{ \lambda \pth{5+\sqrt{K}}T } $, we get
\begin{align*}
\prob{\sup_{0\le t \le T} \norm{Y^N(t) - Y(t)} \ge ~\epsilon^{\prime}  }\le  2 \pth{K+1} \exp \sth{-N \cdot C(\epsilon^{\prime})},
\end{align*}
where
\begin{align*}
C(\epsilon^{\prime}) = \frac{3-e}{9T \lambda} \frac{\pth{\epsilon^{\prime}}^2}{(K+1) \exp \pth{2 \lambda \pth{5+\sqrt{K}}T}},
\end{align*}
proving Lemma \ref{thm: fluid model}.

\subsection{Proof of Lemma \ref{lemma: uniqueness}}
\label{app: lemma: uniqueness}
Since ${\bf Y}^* =\qth{Y_0^*, Y_1^*, \cdots, Y_K^*}$ is an equilibrium state, it holds that for any $Y(t) = {\bf Y}^*$,
\begin{align}
\label{eq: lemma: equi state}
{\bf 0}=\frac{\partial }{\partial t} Y(t) &= F \pth{Y(t)}.
\end{align}

From \eqref{limit 0}, we have
\begin{align}
\label{eq: y0 neg}
\dot{Y_0}(t) &= - Y_0(t) \lambda \frac{\mu}{K}\sum_{k^{\prime}=1}^K p_{k^{\prime}} - Y_0(t) \lambda \sum_{k^{\prime}=1}^K (1-\mu) p_{k^{\prime}} Y_{k^{\prime}}(t)]
~ \le ~ 0.
\end{align}
In fact, it can be shown that $Y_0(t)$ decreases monotonically from 1 to 0. To illustrate this, replacing $Y_0(t)$ by $Y_0^*$ in \eqref{eq: y0 neg} and combining with  \eqref{eq: lemma: equi state}, we have
\begin{align*}
0=\dot{Y_0^*} &= - Y_0^* \lambda \frac{\mu}{K}\sum_{k^{\prime}=1}^K p_{k^{\prime}} - Y_0^* \lambda \sum_{k^{\prime}=1}^K (1-\mu) p_{k^{\prime}} Y_{k^{\prime}}^*\\
&=-Y_0^* \lambda \pth{\frac{\mu}{K}\sum_{k^{\prime}=1}^K p_{k^{\prime}}+\sum_{k^{\prime}=1}^K (1-\mu) p_{k^{\prime}} Y_{k^{\prime}}^*} ~\le ~0.
\end{align*}
Since $\frac{\mu}{K}\sum_{k^{\prime}=1}^K p_{k^{\prime}}+\sum_{k^{\prime}=1}^K (1-\mu) p_{k^{\prime}} Y_{k^{\prime}}^* \ge \frac{\mu}{K}\sum_{k^{\prime}=1}^K p_{k^{\prime}}> 0$, it holds that
\begin{align}
\label{equ null}
Y_0^*=0.
\end{align}

By \eqref{limit 1}, \eqref{eq: lemma: equi state} and \eqref{equ null}, we have $k=1, \cdots, K$, it holds that
\begin{align*}
0=\dot{Y_k^*}  &= Y_k^* \lambda  \sum_{k^{\prime}=1}^K (p_k-p_{k^{\prime}})Y_{k^{\prime}}^*,
\end{align*}
which implies that
\begin{align}
\label{unique equ regular}
Y_k^* =0, ~~ \text{or} ~~ \sum_{k^{\prime}=1}^K (p_k-p_{k^{\prime}})Y_{k^{\prime}}^*, ~~~~ \text{for } k=1, \cdots, K.
\end{align}

We are able to show that
\begin{align}
\label{eq: thm: unique: critical quan}
\sum_{k^{\prime}=1}^K (p_1-p_{k^{\prime}})Y_{k^{\prime}}^* =0.
\end{align}
The equality \eqref{eq: thm: unique: critical quan} is crucial it implies $Y_k^*=0$ for $k\ge 2$. As
\begin{align*}
p_1 > p_2\ge \cdots \ge p_K, ~~ \text{and ~~} Y_0^*=0,
\end{align*}
for $k\ge 2$, we have
\begin{align*}
\sum_{k^{\prime}=1}^K (p_k-p_{k^{\prime}})Y_{k^{\prime}}^* ~ = ~p_k - \sum_{k^{\prime}=1}^K p_{k^{\prime}}Y_{k^{\prime}}^*
~ < ~ p_1 - \sum_{k^{\prime}=1}^K p_{k^{\prime}}Y_{k^{\prime}}^* ~ = ~ \sum_{k^{\prime}=1}^K (p_1-p_{k^{\prime}})Y_{k^{\prime}}^* ~ = ~0.
\end{align*}
Thus, by \eqref{unique equ regular}, we know
\begin{align}
\label{equ regular}
Y_k^*=0, ~ \forall ~ k\ge 2.
\end{align}
Therefore, from \eqref{equ null}, \eqref{equ regular} and the fact that ${\bf Y}^* \in \Delta^{K}$, we know
\begin{align*}
Y_1^*=1,
\end{align*}
proving the theorem.

\vskip \baselineskip

To finish the proof of the theorem, it remains to show \eqref{eq: thm: unique: critical quan}. By \eqref{unique equ regular}, it is enough to show
\begin{align}
\label{unique nonzero}
Y_1^* > 0.
\end{align}
To show this, let's consider the differential equation in \eqref{limit 1} for $k=1$ -- the optimal option:
\begin{align}
\label{unique lower bound dynamic}
\nonumber
\dot{Y_1}(t)&=  Y_0(t) \lambda  \frac{\mu}{K} p_1 + Y_1(t) \lambda \pth{(1-\mu)p_1Y_0(t)+ \sum_{k^{\prime}=1}^K (p_1-p_{k^{\prime}})Y_{k^{\prime}}(t)}\\
\nonumber
&\ge Y_0(t) \lambda  \frac{\mu}{K} p_1 + Y_1(t) \lambda (1-\mu)p_1Y_0(t), ~~~~~~ \text{since~ } p_1\ge p_{k^{\prime}} ~ \forall k^{\prime}\\
&\ge Y_0(t) \lambda  \frac{\mu}{K} p_1\\
\nonumber
&\ge 0.
\end{align}
That is, $Y_1(t)$ increases monotonically from 0 to $Y_1^*$.
Recall that $Y_0(t)$ decreases monotonically from 1 to $Y_0^*=0$, and $Y_0(t)$ is continuous. Thus, for any $0<\epsilon_0 \le 1$, there exists $[0, t^*]$ such that
$$Y_0(t) \ge \epsilon_0.$$
From \eqref{unique lower bound dynamic}, we have
\begin{align*}
Y_1^*\ge Y_1(t^*) &= \int_{t=0}^{t^*}\dot{Y_1}(t)\ge \int_{t=0}^{t^*} Y_0(t) \lambda  \frac{\mu}{K} p_1 \ge \epsilon_0 \lambda  \frac{\mu}{K} p_1 t^* >0,
\end{align*}
proving \eqref{unique nonzero}.

Therefore, we conclude that
\begin{align*}
{\bf Y}^* =\qth{Y_0^*, Y_1^*, \cdots, Y_k^*} = \qth{0, 1, 0, \cdots, 0}.
\end{align*}
Since ${\bf Y}^*$ is an arbitrary equilibrium state vector, uniqueness of ${\bf Y}^*$ follows trivially.

\subsection{Proof of Lemma \ref{thm: convergence rate ode}}

Next we bound the convergence rate of $Y_0$. Our first characterization may be loose. However, we can use this loose bound to more refined characterization of the convergence rate of the entire $K+1$--dimensional trajectory.

From \eqref{limit 0}, we have
\begin{align}
\label{eq: con rate lb}
\dot{Y_0}(t) &= - Y_0(t) \lambda \frac{\mu}{K}\sum_{k^{\prime}=1}^K p_{k^{\prime}} - Y_0(t) \lambda \sum_{k^{\prime}=1}^K (1-\mu) p_{k^{\prime}} Y_{k^{\prime}}(t)] ~
 \le  - Y_0(t) \lambda \frac{\mu}{K}\sum_{k^{\prime}=1}^K p_{k^{\prime}}.
\end{align}
In fact, $Y_0$ decreases exponentially fast with rate at least $\lambda \frac{\mu}{K}\sum_{k^{\prime}=1}^K p_{k^{\prime}}$.   To rigorously show this, let us consider an auxiliary ODEs system:
\begin{align}
\label{aux: 0}
\dot{y_0} = - y_0 \lambda \frac{\mu}{K}\sum_{k^{\prime}=1}^K p_{k^{\prime}},
\end{align}
with initial state $y_0 (0) = Y_0(0)=1$. It is well know that the solution to the above differential equation with the given initial condition is unique
\begin{align}
\label{eq: solution: aux}
y_0(t) = \exp \sth{- \pth{\lambda \frac{\mu}{K}\sum_{k^{\prime}=1}^K p_{k^{\prime}} } t}.
\end{align}
\begin{claim}
\label{aux: claim 0}
For all $t\ge 0$, it holds that
\begin{align}
\label{aux: dom 00}
Y_0 (t) \le y_0(t).
\end{align}
\end{claim}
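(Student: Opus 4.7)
The plan is to use a standard integrating-factor (equivalently, Gronwall-type) comparison argument. The inequality in \eqref{eq: con rate lb} says
\[
\dot{Y_0}(t) + Y_0(t)\,\lambda\,\tfrac{\mu}{K}\sum_{k'=1}^K p_{k'} \;\le\; 0,
\]
while $y_0$ from \eqref{aux: 0} satisfies the same expression with equality. The most direct route is therefore to multiply $Y_0(t)$ by the exponential integrating factor that solves \eqref{aux: 0} and show that the product is monotone.

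Concretely, I would define the auxiliary function
\[
h(t) \;\triangleq\; Y_0(t)\,\exp\!\pth{t\,\lambda\,\tfrac{\mu}{K}\sum_{k'=1}^K p_{k'}},
\]
differentiate to obtain
\[
\dot{h}(t) \;=\; \exp\!\pth{t\,\lambda\,\tfrac{\mu}{K}\sum_{k'=1}^K p_{k'}}\,\bqth{\dot{Y_0}(t) + Y_0(t)\,\lambda\,\tfrac{\mu}{K}\sum_{k'=1}^K p_{k'}},
\]
and invoke \eqref{eq: con rate lb} to conclude $\dot{h}(t)\le 0$ for all $t\ge 0$. Hence $h$ is nonincreasing, so $h(t)\le h(0)=Y_0(0)=1$, which upon rearranging gives
\[
Y_0(t) \;\le\; \exp\!\pth{-t\,\lambda\,\tfrac{\mu}{K}\sum_{k'=1}^K p_{k'}} \;=\; y_0(t),
\]
using the explicit form \eqref{eq: solution: aux}. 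This is exactly \eqref{aux: dom 00}.

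There is essentially no obstacle: the nonnegativity of $Y_0(t)$ on $[0,\infty)$ (which justifies dropping the extra negative term $-Y_0(t)\lambda\sum_{k'}(1-\mu)p_{k'}Y_{k'}(t)$ in passing from the exact ODE to the inequality) is immediate because $Y_0$ is a component of a point in $\Delta^K$ along the deterministic trajectory; and the integrating-factor computation is entirely mechanical. An alternative but equivalent framing would be to observe that $Y_0$ is a subsolution of the linear ODE satisfied by $y_0$ and apply the standard scalar comparison theorem, but the integrating-factor calculation above is shorter and self-contained.
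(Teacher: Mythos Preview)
Your proof is correct and takes a genuinely different route from the paper's. The paper argues by contradiction: assuming a first crossing time $\tilde t$ at which $Y_0(\tilde t)=y_0(\tilde t)$ and $Y_0>y_0$ immediately thereafter, it compares derivatives at $\tilde t$ and derives a contradiction with \eqref{eq: con rate lb}. Your integrating-factor argument is instead a direct monotonicity proof: you show that $h(t)=Y_0(t)\exp\bigl(t\,\lambda\,\tfrac{\mu}{K}\sum_{k'} p_{k'}\bigr)$ is nonincreasing and conclude $h(t)\le h(0)=1$, which is exactly \eqref{aux: dom 00} after unwinding $y_0$ via \eqref{eq: solution: aux}.

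What each approach buys: your argument is shorter and self-contained, and it sidesteps a minor technicality in the paper's contradiction (from $Y_0(\tilde t+\Delta t)>y_0(\tilde t+\Delta t)$ for small $\Delta t$ one only gets $\dot Y_0(\tilde t)\ge \dot y_0(\tilde t)$, not a strict inequality, so a little extra care is needed to close that proof). It also makes transparent that the claim is just the scalar linear Gronwall inequality applied to \eqref{eq: con rate lb}. On the other hand, the paper's crossing-time comparison is the standard template for nonlinear comparison principles and is the same device used later (for \eqref{aux: dom 0}), where the right-hand side is genuinely nonlinear in the unknown; in that setting an integrating-factor trick is not available, so the paper's method is the one that generalizes. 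For the present linear claim, though, your route is the cleaner of the two.
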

This claim can be shown easily by contradiction. A proof is provided in Appendix \ref{app: claim}.
An immediate consequence of Claim \ref{aux: claim 0} and \eqref{eq: solution: aux} is
\begin{align}
\label{eq: loose bound}
Y_0(t) \le \exp \sth{- \pth{\lambda \frac{\mu}{K}\sum_{k^{\prime}=1}^K p_{k^{\prime}} } t}.
\end{align}
Although the bound in \eqref{eq: loose bound} is only for one entry of $Y$, it can help us to get a convergence rate for all the $K+1$--dimensional trajectory. In addition, the obtained bound is even tighter than that in \eqref{eq: loose bound}.

We consider two cases:
\begin{itemize}
\item[Case 1:] $\frac{\mu}{K} p_1 + (1-\mu)p_1 \ge  (p_1 - p_2)$;
\item[Case 2:] $\frac{\mu}{K} p_1 + (1-\mu)p_1 \le  (p_1 - p_2)$.
\end{itemize}
In both of these cases, we will focus the dynamics of $Y_1$. At time $\bar{t}_c$, by \eqref{eq: loose bound}, we know
\begin{align*}
Y_0(\bar{t}_c) \le c, ~~ \text{and} ~~ \sum_{k=1}^K Y_{k}(\bar{t}_c) \ge 1-c.
\end{align*}
By \eqref{limit 1} and the fact that $Y_{k}(0) =0$ for all $k\ge 1$, we know
\begin{align*}
Y_1(\bar{t}_c) ~ \ge ~ \frac{1-c}{K}.
\end{align*}

\vskip \baselineskip

\noindent {\bf Case 1: $\frac{\mu}{K} p_1 + (1-\mu)p_1 \ge  (p_1 - p_2)$. }
From \eqref{limit 0}, we have
\begin{align}
\label{rate: bound 1}
\nonumber
\dot{Y_1}(t)&=  Y_0(t) \lambda  \frac{\mu}{K} p_1 + Y_1(t) \lambda \pth{(1-\mu)p_1Y_0(t)+ \sum_{k^{\prime}=1}^K (p_1-p_{k^{\prime}})Y_{k^{\prime}}(t)}\\
\nonumber
& \ge  Y_0(t) \lambda  \frac{\mu}{K} p_1 + Y_1(t) \lambda \pth{(1-\mu)p_1Y_0(t)+ (p_1-p_2)\sum_{{k^{\prime}}=2}^K Y_{k^{\prime}}(t)}\\
\nonumber
& = Y_0(t) \lambda  \frac{\mu}{K} p_1 + Y_1(t) \lambda \pth{(1-\mu)p_1Y_0(t)+ (p_1-p_2)\lambda\pth{1 - Y_0(t) - Y_1(t)}} \\
\nonumber
& = Y_0(t) \lambda  \frac{\mu}{K} p_1 + Y_1(t)Y_0(t) \lambda (1-\mu)p_1 - (p_1-p_2) \lambda Y_0(t)Y_1(t) + (p_1-p_2)\lambda\pth{1 - Y_1(t)}Y_1(t) \\
\nonumber
& \ge Y_0(t) Y_1(t)\lambda  \frac{\mu}{K} p_1 + Y_1(t)Y_0(t) \lambda (1-\mu)p_1 - (p_1-p_2)\lambda Y_0(t)Y_1(t) + (p_1-p_2)\lambda\pth{1 - Y_1(t)}Y_1(t)\\
\nonumber
& = Y_0(t) Y_1(t)\lambda \pth{\frac{\mu}{K} p_1 + (1-\mu)p_1 - (p_1 - p_2)} + (p_1-p_2)\lambda\pth{1 - Y_1(t)}Y_1(t)\\
& \ge (p_1-p_2)\lambda \pth{1 - Y_1(t)}Y_1(t),
\end{align}
where the last inequality follows from the assumption that $\frac{\mu}{K} p_1 + (1-\mu)p_1 \ge  (p_1 - p_2)$.

Let $y$ be an auxiliary ODE equation such that
\begin{align}\label{ode: aux 1}
  \dot{y} &  = (p_1-p_2)\lambda\pth{1 - y}y,
\end{align}
with
\begin{align}
\label{eq: boundary y1}
y(\bar{t}_c) \triangleq  Y_1(\bar{t}_c) ~ \ge ~ \frac{1-c}{K}.
\end{align}
Similar to Claim \ref{aux: claim 0}, it can be shown that for all $t\in [\bar{t}_c, \infty)$,
\begin{align}
\label{aux: dom 0}
Y_1 (t) \ge y(t).
\end{align}
%
%
%
Thus, the convergence rate of $y$ provides a lower bound of the convergence rate of the original ODE system.
Note that $y$ is an autonomous and separable.
We have
\begin{align*}
y(t+\bar{t}_c) &= 1- \frac{1}{\frac{y(\bar{t}_c)}{1-y(\bar{t}_c)} \exp \sth{(p_1 - p_2)\lambda t} +1}
 \ge 1 -\frac{K -1+c}{1-c}\exp \sth{- (p_1 - p_2)\lambda t},
\end{align*}
where the last inequality follows from \eqref{eq: boundary y1}.
By \eqref{aux: dom 0}, we know that $Y_1(t)\ge y(t)$. Therefore, we conclude that in the case when $\frac{\mu}{K} p_1 + (1-\mu)p_1 \ge  (p_1 - p_2)$, $Y_1(t)$ converges to 1 exponentially fast at a rate $(p_1 - p_2)\lambda $. Since the ODEs state $Y(t)\in \Delta^{K}$, i.e., $\sum_{k=0}^{K} Y_k(t) =1$ and $Y_k(t)\ge 0, ~ \forall k$, it holds that for all non-optimal arms, $Y_k(t)$ goes to 0 exponentially fast at a rate $(p_1 - p_2)\lambda $.

%
%
%

Similar to Case 1, we are able to conclude that in Case 2, $Y_1(t)$ converges to 1 exponentially fast with a rate $\lambda \pth{\frac{\mu}{K} p_1 + (1-\mu)p_1}$.   Since the ODEs state $Y(t)\in \Delta^{K}$, i.e., $\sum_{k=0}^{K} Y_k(t) =1$ and $Y_k(t)\ge 0, ~ \forall k$, it holds that for all non-optimal arms, $Y_k(t)$ goes to 0 exponentially fast at a rate $\lambda \pth{ \frac{\mu}{K} p_1 + (1-\mu)p_1}$.

\section{Proof of Lemma \ref{lip continuous} }
\label{app: lm: lip continuous}

By  \eqref{den function} we know that
\begin{align*}
\norm{F(x)-F(y)}  = \norm{\sum_{\ell} \ell \pth{f(x, \ell)- f(y, \ell)} },
\end{align*}
and that
\begin{align}
\label{lipscontout}
\nonumber
\sum_{\ell} \ell \pth{f(x, \ell)- f(y, \ell)}  &= \lambda \sum_{k=1}^K e^k p_k \pth{ \frac{\mu}{K} (y_0-x_0) + (1-\mu)(y_0y_k-x_0x_k)} \\
&\quad + \lambda \sum_{k,k^{\prime}: k\not=k^{\prime}, \text{and}~ k\not=0} \pth{e^{k^{\prime}}-e^k} p_{k^{\prime}} \pth{y_{k^{\prime}}y_k- x_{k^{\prime}}x_k}.
\end{align}
Note that $e^{k^{\prime}}-e^k={\bf 0}$ for $k^{\prime}=k$. Thus, we have
\begin{align*}
\sum_{k,k^{\prime}: k\not=k^{\prime}, \text{and}~ k\not=0} \pth{e^{k^{\prime}}-e^k} p_{k^{\prime}} \pth{y_{k^{\prime}}y_k- x_{k^{\prime}}x_k} =\sum_{k,k^{\prime} =1}^K  \pth{e^{k^{\prime}}-e^k} p_{k^{\prime}} \pth{y_{k^{\prime}}y_k- x_{k^{\prime}}x_k},
\end{align*}
and \eqref{lipscontout} can be simplified as follows:
\begin{align*}
\sum_{\ell} \ell \pth{f(x, \ell)- f(y, \ell)}  &= \lambda \sum_{k=1}^K e^k p_k \pth{ \frac{\mu}{K} (y_0-x_0) + (1-\mu)(y_0y_k-x_0x_k)} \\
&\quad + \lambda \sum_{k,k^{\prime} =1}^K \pth{e^{k^{\prime}}-e^k} p_{k^{\prime}} \pth{y_{k^{\prime}}y_k- x_{k^{\prime}}x_k}.
\end{align*}
By triangle inequality, we have
\begin{align}
\label{lip cont}
\nonumber
&\norm{F(x)-F(y)} = \norm{\sum_{\ell} \ell \pth{f(x, \ell)- f(y, \ell)} }\\
& \le \lambda \norm{\sum_{k=1}^K e^k p_k \pth{ \frac{\mu}{K} (y_0-x_0) + (1-\mu)(y_0y_k-x_0x_k)}} +\lambda \norm{\sum_{k,k^{\prime} =1}^K\pth{e^{k^{\prime}}-e^k} p_{k^{\prime}} \pth{y_{k^{\prime}}y_k- x_{k^{\prime}}x_k}}.
\end{align}
We bound the two terms in the right-hand side of \eqref{lip cont} respectively. First, we notice that for any $k,k^{\prime}\in \{0, 1, \cdots, K\}$, it holds that
\begin{align}
\label{lip cont aux}
\nonumber
\pth{y_{k^{\prime}}-x_{k^{\prime}}} \pth{y_k+x_k} & = y_{k^{\prime}}y_k + y_{k^{\prime}}x_k - x_{k^{\prime}}y_k  - x_{k^{\prime}}x_k, \\
\pth{y_{k^{\prime}}+x_{k^{\prime}}} \pth{y_k-x_k} & = y_{k^{\prime}}y_k - y_{k^{\prime}}x_k + x_{k^{\prime}}y_k  - x_{k^{\prime}}x_k.
\end{align}
Thus, $y_{k^{\prime}}y_k - x_{k^{\prime}}x_k$ can be rewritten as follows.
\begin{align}
\label{rewriting}
y_{k^{\prime}}y_k - x_{k^{\prime}}x_k =\frac{1}{2}\pth{\pth{y_{k^{\prime}}-x_{k^{\prime}}} \pth{y_k+x_k}+ \pth{y_{k^{\prime}}+x_{k^{\prime}}} \pth{y_k-x_k}}.
\end{align}
Using \eqref{rewriting}, the second term in \eqref{lip cont} can be rewritten as:
\begin{align*}
&\sum_{k,k^{\prime} =1}^K \pth{e^{k^{\prime}}-e^k} p_{k^{\prime}} \pth{y_{k^{\prime}}y_k- x_{k^{\prime}}x_k}\\
&= \sum_{k^{\prime}=1}^K e^{k^{\prime}} p_{k^{\prime}}\sum_{k=1}^K\pth{y_{k^{\prime}}y_k- x_{k^{\prime}}x_k}- \sum_{k=1}^K e^k \sum_{k^{\prime}=1}^K p_{k^{\prime}}\pth{y_{k^{\prime}}y_k- x_{k^{\prime}}x_k}\\
& = \sum_{k^{\prime}=1}^K e^{k^{\prime}} p_{k^{\prime}} \sum_{k=1}^K\frac{1}{2}\pth{\pth{y_{k^{\prime}}-x_{k^{\prime}}} \pth{y_k+x_k}+ \pth{y_{k^{\prime}}+x_{k^{\prime}}} \pth{y_k-x_k}}\\
&\quad - \sum_{k=1}^K e^k \sum_{k^{\prime}=1}^K p_{k^{\prime}}   \frac{1}{2}\pth{\pth{y_{k^{\prime}}-x_{k^{\prime}}} \pth{y_k+x_k}+ \pth{y_{k^{\prime}}+x_{k^{\prime}}} \pth{y_k-x_k}}\\
&= \sum_{k^{\prime}=1}^K e^{k^{\prime}} p_{k^{\prime}} \frac{1}{2}\pth{\pth{y_{k^{\prime}}-x_{k^{\prime}}} \sum_{k=1}^K\pth{y_k+x_k}+ \pth{y_{k^{\prime}}+x_{k^{\prime}}} \sum_{k=1}^K\pth{y_k-x_k}}\\
&\quad - \sum_{k=1}^K e^k \frac{1}{2} \pth{\pth{y_k+x_k} \sum_{k^{\prime}=1}^K p_{k^{\prime}}\pth{y_{k^{\prime}}-x_{k^{\prime}}} +  \pth{y_k-x_k}\sum_{k^{\prime}=1}^K p_{k^{\prime}} \pth{y_{k^{\prime}}+x_{k^{\prime}}}}.
\end{align*}
Thus,
\begin{align}
\label{lip cont quad}
\nonumber
&\norm{\sum_{k,k^{\prime} =1}^K \pth{e^{k^{\prime}}-e^k} p_{k^{\prime}} \pth{y_{k^{\prime}}y_k- x_{k^{\prime}}x_k}}\\
\nonumber
 &\le \norm{\sum_{k^{\prime}=1}^K e^{k^{\prime}} p_{k^{\prime}} \frac{1}{2}\pth{\pth{y_{k^{\prime}}-x_{k^{\prime}}} \sum_{k=1}^K\pth{y_k+x_k}+ \pth{y_{k^{\prime}}+x_{k^{\prime}}} \sum_{k=1}^K\pth{y_k-x_k}}} \\
&+\norm{\sum_{k=1}^K e^k \frac{1}{2} \pth{\pth{y_k+x_k} \sum_{k^{\prime}=1}^K p_{k^{\prime}}\pth{y_{k^{\prime}}-x_{k^{\prime}}} +  \pth{y_k-x_k}\sum_{k^{\prime}=1}^K p_{k^{\prime}} \pth{y_{k^{\prime}}+x_{k^{\prime}}}}}.
\end{align}
We bound the right-hand side of \eqref{lip cont quad} as
\begin{align*}
&\norm{\sum_{k^{\prime}=1}^K e^{k^{\prime}} p_{k^{\prime}} \frac{1}{2}\pth{\pth{y_{k^{\prime}}-x_{k^{\prime}}} \sum_{k=1}^K\pth{y_k+x_k}+ \pth{y_{k^{\prime}}+x_{k^{\prime}}} \sum_{k=1}^K\pth{y_k-x_k}}}\\
&\le  \norm{\sum_{k^{\prime}=1}^K e^{k^{\prime}} p_{k^{\prime}} \frac{1}{2}\pth{y_{k^{\prime}}-x_{k^{\prime}}} \sum_{k=1}^K\pth{y_k+x_k}} + \norm{\sum_{k^{\prime}=1}^K e^{k^{\prime}} p_{k^{\prime}} \frac{1}{2} \pth{y_{k^{\prime}}+x_{k^{\prime}}} \sum_{k=1}^K\pth{y_k-x_k}}\\
&= \sqrt{ \sum_{k^{\prime}=1}^K \frac{p_{k^{\prime}}^2}{4}\pth{y_{k^{\prime}}-x_{k^{\prime}}}^2 \pth{\sum_{k=1}^K\pth{y_k+x_k}}^2} + \sqrt{\sum_{k^{\prime}=1}^K \frac{p_{k^{\prime}}^2}{4} \pth{y_{k^{\prime}}+x_{k^{\prime}}}^2 \pth{\sum_{k=1}^K\pth{y_k-x_k}}^2} \\
&\le \sqrt{ \sum_{k^{\prime}=1}^K \pth{y_{k^{\prime}}-x_{k^{\prime}}}^2} + \abth{\sum_{k=1}^K\pth{y_k-x_k}}\sqrt{ \frac{1}{4} \sum_{k^{\prime}=1}^K \pth{y_{k^{\prime}}+x_{k^{\prime}}}^2}\\
&\le \norm{x-y} + \abth{x_0-y_0} \le 2\norm{x-y}.
\end{align*}
Similarly, we have
\begin{align*}
&\norm{\sum_{k=1}^K e^k \frac{1}{2} \pth{\pth{y_k+x_k} \sum_{k^{\prime}=1}^K p_{k^{\prime}}\pth{y_{k^{\prime}}-x_{k^{\prime}}} +  \pth{y_k-x_k}\sum_{k^{\prime}=1}^K p_{k^{\prime}} \pth{y_{k^{\prime}}+x_{k^{\prime}}}}}\\
&\le \norm{\sum_{k=1}^K e^k \frac{1}{2}\pth{y_k+x_k} \sum_{k^{\prime}=1}^K p_{k^{\prime}}\pth{y_{k^{\prime}}-x_{k^{\prime}}}} +\norm{\sum_{k=1}^K e^k \frac{1}{2}\pth{y_k-x_k} \sum_{k^{\prime}=1}^K p_{k^{\prime}} \pth{y_{k^{\prime}}+x_{k^{\prime}}}},
\end{align*}
for which
\begin{align*}
\norm{\sum_{k=1}^K e^k \frac{1}{2}\pth{y_k-x_k} \sum_{k^{\prime}=1}^K p_{k^{\prime}} \pth{y_{k^{\prime}}+x_{k^{\prime}}}}&=\sqrt{\sum_{k=1}^K \frac{(y_k-x_k)^2}{4} \pth{\sum_{k^{\prime}=1}^K p_{k^{\prime}} \pth{y_{k^{\prime}}+x_{k^{\prime}}}}^2}\\
&= \sqrt{\sum_{k=1}^K \frac{(y_k-x_k)^2}{4}} \abth{\sum_{k^{\prime}=1}^K \pth{y_{k^{\prime}}+x_{k^{\prime}}}}\\
&\le \norm{y-x} \frac{1}{2} 2 = \norm{y-x},
\end{align*}
and
\begin{align*}
\norm{\sum_{k=1}^K e^k \frac{1}{2}\pth{y_k+x_k} \sum_{k^{\prime}=1}^K p_{k^{\prime}}\pth{y_{k^{\prime}}-x_{k^{\prime}}}}&= \sqrt{\sum_{k=1}^K \frac{(y_k+x_k)^2}{4}\pth{\sum_{k^{\prime}=1}^K p_{k^{\prime}}\pth{y_{k^{\prime}}-x_{k^{\prime}}}}^2 }\\
&=  \sqrt{\sum_{k=1}^K \frac{(y_k+x_k)^2}{4}} \abth{\sum_{k^{\prime}=1}^K p_{k^{\prime}}\pth{y_{k^{\prime}}-x_{k^{\prime}}}}\\
&\overset{(a)}{\le} 1 \sqrt{\sum_{k^{\prime}=1}^K p_{k^{\prime}}^2} \norm{y-x}\le \sqrt{K} \norm{y-x},
\end{align*}
where inequality (a) follows from the fact that $\sum_{k=1}^K \frac{(y_k+x_k)^2}{4} \le \frac{1}{4} \pth{\sum_{k=1}^{K}\pth{y_k+x_k}}^2$ and Cauchy-Schwarz inequality. Thus, \eqref{lip cont quad} can be bounded as
\begin{align}
\label{lip cont quad con}
\norm{\sum_{k,k^{\prime} =1}^K \pth{e^{k^{\prime}}-e^k} p_{k^{\prime}} \pth{y_{k^{\prime}}y_k- x_{k^{\prime}}x_k}} &\le  \pth{3+\sqrt{K}} \norm{x-y}.
\end{align}

The first term in \eqref{lip cont} can be bounded analogously. In particular,
by triangle inequality, we have
\begin{align}
\label{lip zero}
\nonumber
\norm{\sum_{k=1}^K e^k p_k \pth{ \frac{\mu}{K} (y_0-x_0) + (1-\mu)(y_0y_k-x_0x_k)}} &\le \frac{\mu}{K} \norm{\sum_{k=1}^K e^k p_k (y_0-x_0)}\\
&\quad  + (1-\mu)\norm{\sum_{k=1}^K e^k p_k\pth{y_0y_k-x_0x_k}}.
\end{align}
%
We bound the two terms in the right-hand side of \eqref{lip zero}. For the first term, since $0\le p_k\le 1$ for all $k$, we have
\begin{align*}
\frac{\mu}{K}\norm{\sum_{k=1}^K e^k p_k\pth{y_0-x_0}} \le  \frac{\mu}{K} \sqrt{\sum_{k=1}^K p_k^2\pth{y_0-x_0}^2} \le  \frac{\mu}{K} \sqrt{K} \abth{y_0-x_0} \le \mu \norm{y-x}.
\end{align*}
For the second term, by \eqref{rewriting}, we have
\begin{align*}
(1-\mu)\norm{\sum_{k=1}^K e^k p_k\pth{y_0y_k-x_0x_k}} &=(1-\mu) \norm{\sum_{k=1}^K e^k p_k\pth{\frac{1}{2}\pth{\pth{y_0-x_0} \pth{y_k+x_k}+ \pth{y_0+x_0} \pth{y_k-x_k}}}}\\
&\le (1-\mu)\frac{1}{2}\pth{y_0-x_0}  \norm{\sum_{k=1}^K e^k p_k\pth{y_k+x_k}}\\
& + (1-\mu)\frac{1}{2}\pth{y_0+x_0}\norm{\sum_{k=1}^K e^k p_k \pth{y_k-x_k}}.
\end{align*}
In addition, since $0\le p_k\le 1$ for all $k$ and $x_k+y_k\ge 0$, it holds that 
\begin{align*}
\norm{\sum_{k=1}^K e^k p_k\pth{y_k+x_k}} &=\sqrt{\sum_{k=1}^K p_k^2\pth{y_k+x_k}^2}  \le \sum_{k=1}^K (y_k+x_k)\le 2,
\end{align*}
and
\begin{align*}
\norm{\sum_{k=1}^K e^k p_k \pth{y_k-x_k}}&=\sqrt{\sum_{k=1}^K p_k^2\pth{y_k-x_k}^2} \le \sqrt{\sum_{k=1}^K \pth{y_k-x_k}^2} \le \norm{x-y}.
\end{align*}
Thus, we have
\begin{align}
\label{lip zero con}
\nonumber
\norm{\sum_{k=1}^K e^k \pth{ \frac{\mu}{K} (y_0-x_0) + (1-\mu)(y_0y_k-x_0x_k)}}&\le \frac{\mu}{K}\norm{\sum_{k=1}^K e^k \pth{y_0-x_0}} + (1-\mu)\norm{\sum_{k=1}^K e^k \pth{y_0y_k-x_0x_k}} \\
\nonumber
&\le \mu\norm{y-x} + (1-\mu)\pth{y_0-x_0} + (1-\mu) \norm{y-x}\\
&= 2\norm{y-x}.
\end{align}

From \eqref{lip cont quad con} and \eqref{lip zero con}, we conclude that
\begin{align*}
&\norm{F(x)-F(y)} \le \lambda \pth{5+\sqrt{K}} \norm{x-y}.
\end{align*}
That is, function $F$ is $\lambda \pth{5+\sqrt{K}}$-- Lipschitz continuous.

\section{Proof of Claim \ref{aux: claim 0}}
\label{app: claim}
We prove this claim by contradiction. Suppose this claim is not true, i.e., \eqref{aux: dom 00} does not hold. Since both $Y_0(t)$ and $y_0(t)$ are continuous over $[0, \infty)$, and $Y_0(0)=y(0)=1$, when \eqref{aux: dom 00} does not hold, there exists $\tilde{t} \in [0, \infty) $ such that
\begin{align*}
Y_0(\tilde{t}) = y_0(\tilde{t}), ~~ \text{and} ~~ Y_0(\tilde{t} + \Delta t) > y_0(\tilde{t}+ \Delta t),
\end{align*}
for any sufficiently small $\Delta t>0$. Thus, we have
\begin{align*}
\dot{Y_0}(\tilde{t})=\lim_{\Delta t \downarrow 0}\frac{Y_0(\tilde{t} + \Delta t)-Y_0(\tilde{t})}{\Delta t} ~ > ~  \lim_{\Delta t \downarrow 0}\frac{y_0(\tilde{t}+ \Delta t)-y_0(\tilde{t})}{\Delta t} = \dot{y}_0(\tilde{t}).
\end{align*}
contradicting \eqref{eq: con rate lb}.  The proof of the claim is complete.

\end{document}